\documentclass[twoside,11pt]{article}

\usepackage{jmlr2e}

\usepackage{lastpage}
\usepackage{mathrsfs}
\usepackage{amsmath}
\usepackage{cleveref}
\usepackage{algorithm}
\usepackage{algorithmic}
\usepackage{booktabs}
\usepackage{placeins}
\hypersetup{hidelinks}
\crefname{figure}{Figure}{Figures}
\Crefname{figure}{Figure}{Figures}
\crefname{table}{Table}{Tables}
\Crefname{table}{Table}{Tables}
\crefname{section}{Section}{Sections}
\Crefname{section}{Section}{Sections}
\crefname{algorithm}{Algorithm}{Algorithms}
\Crefname{algorithm}{Algorithm}{Algorithms}
\jmlrheading{TBD}{2026}{1-\pageref{LastPage}}{June 30, 2026}{}{TBD}{Zhangyong Liang and Huanhuan Gao}

\ShortHeadings{Hybrid Iterative Neural Low-Regularity Integrator}{Liang and Gao}
\firstpageno{1}
\usepackage{tikz}
\usetikzlibrary{shapes, arrows.meta, positioning, fit, calc, backgrounds, shadows, decorations.pathreplacing, shapes.geometric}

\definecolor{boxbluedraw}{RGB}{58, 94, 120}      
\definecolor{boxbluefill}{RGB}{228, 236, 243}    
\definecolor{boxbrowndraw}{RGB}{186, 133, 81}    
\definecolor{boxbrownfill}{RGB}{253, 246, 240}   
\definecolor{textred}{RGB}{178, 34, 34}          
\definecolor{textorange}{RGB}{210, 105, 30}      

\begin{document}

\title{Hybrid Iterative Neural Low-Regularity Integrator for Nonlinear Dispersive Equations}

\author{\name Zhangyong Liang 
       \email zyliang1994@tju.edu.cn \\
       \addr National Center for Applied Mathematics \\
       Tianjin University \\
       Tianjin, 300072, China
       \AND \name Huanhuan Gao
       \email gao\_huanhuan@jlu.edu.cn\\
       \addr  School of Mechanical and Aerospace Engineering, Jilin University \\
       Jilin University \\
       Changchun, 130025, China
       }

\editor{Action Editor to be assigned}

\maketitle

\begin{abstract}
We propose HIN-LRI, a hybrid framework that augments a low-regularity integrator with a neural operator trained to correct its structured residual error.
A base low-regularity integrator provides the physical time-stepping backbone for nonlinear dispersive PDEs, while a lightweight neural corrector, operating on a low-dimensional latent manifold, approximates the residual defect that is difficult to control analytically.
The actual learned correction is denoted by $C_\theta(u,\tau):=\tau H_{\mathrm{neural}}(u,\tau;\theta)$, making the time-step scaling explicit in both the algorithm and the analysis.
Under a verifiable defect-approximation assumption, we prove a conditional stability and error-propagation result whose constants depend on the learned correction error rather than on an asserted exact cancellation.
The network is trained end-to-end through a solver-in-the-loop objective that unrolls the full iteration and penalizes trajectory error in a Bourgain-type norm, aligning learning with multi-step solver dynamics.
Experiments on three dispersive benchmarks with rough data show improved accuracy over analytical integrators, splitting methods, and neural PDE surrogates, together with spatial-refinement tests, out-of-distribution transfer, and online runtime measurements.
\end{abstract}

\begin{keywords}
   learning-augmented numerical solvers, operator learning, residual correction, solver-in-the-loop training, numerical stability, low-regularity integrators
\end{keywords}

\section{Introduction}
\label{sec:introduction}

Nonlinear dispersive partial differential equations (PDEs), such as the Korteweg--de Vries (KdV) equation and the nonlinear Schr\"odinger (NLS) equation, play a fundamental role in describing a variety of physical phenomena, including shallow water waves, ion acoustic waves in plasmas, nonlinear optics, and Bose--Einstein condensates \citep{BaLlTi-2011-CPAM, KPV1993}. 
It is analytically established that these dispersive equations are globally well-posed in low-regularity Sobolev spaces $H^s$ (e.g., $s \ge -1$ for the KdV equation and $s \ge 0$ for the cubic NLS equation) \citep{Bourgain1993, KaTo2006, KiVi-2019}. 
In practical applications, however, the initial data may be intrinsically rough or highly oscillatory due to measurement noise, quantum fluctuations, or random background perturbations \citep{BD2009, Gubinelli-2012}. 
The development of robust computational methods for these equations with non-smooth solutions has historically faced severe challenges. 
Classical time discretizations, including finite difference methods, operator splitting methods, and traditional exponential integrators, rely on the boundedness of high-order time derivatives of the exact solution. This translates to requiring high spatial regularity \citep{splitting2, Ostermann-Su-2020}. 
When applied to rough data lacking sufficient smoothness, these classical schemes suffer from severe order reduction and spurious high-frequency numerical instabilities.
To bridge the gap between analytical well-posedness and numerical regularity requirements, low-regularity integrators (LRIs), also called resonance-based schemes, have been developed over the past decade \citep{Hofmanova-Schratz-2017, Ostermann-Schratz-FoCM, FengMaierhoferSchratz2024LongTime, Bronsard2024SymmetricNLS}.
Various semi-discrete and fully discrete schemes have since been proposed to mitigate the loss of derivatives in highly oscillatory regimes \citep{bronsard2022error, wang2022symmetric, WuZhao2022ELRIKdV, BrunedSchratz2022Trees}.
Further developments have successfully extended these techniques to incorporate structure-preserving properties and novel filtering techniques \citep{OstermannSchratz2018, LiWu2025UnfilteredKdV, banicamaierhoferschratz22, Knoller2019FourierNLS}.
The core philosophy of LRIs is to introduce twisted variables via the Lawson transformation, absorbing the stiff linear dispersive operator, and to exactly integrate the dominant high-frequency oscillatory phases in Fourier space.
Subsequent work introduced embedded LRIs for higher-order accuracy \citep{WuZhao-BIT} and unfiltered LRIs based on harmonic analysis \citep{Li-Wu-2021}. Discrete Bourgain space frameworks further extend convergence analyses toward lower Sobolev regularity \citep{RS-PAM-2022, ORS-JEMS, BrunedSchratz2022Trees}.

However, as the requirements for higher-order accuracy, multi-dimensional extensions, and rough-data robustness grow, purely analytical LRIs encounter significant analytical obstacles \citep{Oster2021ErrorNLS, RoussetSchratz2021Framework}. 
Attempting to resolve infinite-dimensional high-frequency oscillations using finite algebraic factorizations and local approximations leads to a regularity barrier. 
To construct higher-order schemes, one must evaluate complex multi-wave nested Duhamel integrals. 
Since these highly oscillatory phases cannot be integrated exactly in closed form, analytical LRIs are forced to apply local polynomial approximations or Taylor expansions \citep{luan2013exponential, hochbruck_ostermann_2010, shen2019geometric}. 
Expanding an exponential phase proportional to high-order spatial derivatives releases unbounded differential operators into the local truncation error. This reintroduces derivative loss and forces higher-order schemes to require greater regularity than the PDE itself demands. 
Furthermore, to maintain computational efficiency via the fast Fourier transform, analytical LRIs often rely on equation-specific resonance factorizations. 
This creates analytic rigidity. Minor physical perturbations can break the algebraic cancellations, and tracking higher-order interactions leads to rapidly growing combinatorial overhead \citep{BrunedSchratz2022Trees}. 
In addition, to avoid the severe order reduction caused by hard-truncation frequency filters, modern unfiltered LRIs attempt to preserve the full spectrum using averaging approximations \citep{LiWu2025UnfilteredKdV}. 
This generates a residual phase mismatch term whose present analytical control relies on logarithmically growing trilinear estimates. The resulting logarithmic factor appears in the global error bound and can make the algorithms vulnerable to nonlinear spectral aliasing. 
Finally, at endpoint or very low regularity, the lack of additional smallness in bilinear discrete Bourgain space estimates can force the theoretical bounds to rely on global Fourier projection operators \citep{Bourgain1993}. 
Consequently, the discrete nonlinear iteration is constrained by a strict Courant-Friedrichs-Lewy (CFL) condition that tightly couples the time step to the spatial grid resolution, limiting practical applicability for high-resolution simulations.

Recently, there has been a growing interest in integrating deep learning techniques with traditional iterative methods to accelerate convergence \citep{raissi2019pinn, lu2021deeponet, li2021fno, karniadakis2021physics}. These algorithms also aim to overcome the theoretical limitations of classical solvers \citep{wang2020understanding}.
In the context of large-scale linear systems and highly oscillatory partial differential equations, such as the indefinite Helmholtz equation, machine learning techniques have been successfully intertwined with multigrid (MG) and Krylov subspace methods \citep{hsieh2019learning, greenfeld2019learning, markidis2021the}.
These hybrid approaches employ deep neural networks to learn smoothers \citep{huang2022learning} and transfer operators \citep{luz2020learning}. Furthermore, they are used for coarse-grid corrections \citep{cui2022fourier, azulay2022multigrid, belbute2020combining}.
For instance, the Wave-ADR neural solver \citep{stanziola2021helmholtz} partitions the iterative error into characteristic and non-characteristic components. Classical multigrid wave cycles attenuate high-frequency errors, while neural networks address near-nullspace characteristic components on a coarse scale.
Similarly, the HINTS framework \citep{zhang2022hints, zhang2024hints, kahana2022geometry} leverages deep operator networks (DeepONet) and other continuous learning algorithms \citep{li2020neural, khoo2021solving}. These models then construct efficient preconditioners \citep{lu2021deeponet}, reducing low-frequency error components and leaving the high-frequency parts to standard stationary methods.
Other hybrid strategies, such as encoder-solver architectures \citep{he2019mgnet, um2020solver}, integrate convolutional neural networks with classical geometric solvers. These combinations map intractable preconditioner inverses \citep{sirignano2018dgm, han2018solving, e2017deep, raissi2018hidden}. In addition, several variants employ surrogate techniques to refine local approximations \citep{dong2021local, sun2020surrogate, margenberg2022neural}.
These hybrid iterative neural solvers demonstrate a useful spectral complementarity. Neural operators, equipped with implicit frequency-domain mappings, capture global continuous representations and reduce low-frequency error components that can stall classical relaxations.

Motivated by these limitations, we study a learning task for solver correction.
Given a differentiable one-step solver $S_\tau^{\rm base}$ and a reference flow $\Phi_\tau$, learn a time-step-scaled correction
\[
S_{\theta,\tau}(u)=S_\tau^{\rm base}(u)+C_\theta(u,\tau),
\qquad
C_\theta(u,\tau)=\tau H_{\mathrm{neural}}(u,\tau;\theta).
\]
The goal is to reduce long-horizon trajectory error while preserving the stability scale of the base solver.
HIN-LRI is this framework applied to low-regularity integrators for dispersive PDEs.

\paragraph{Main Results.}
First, we prove a conditional defect-propagation bound.
If $C_\theta$ approximates the one-step LRI defect on a compact data set, then the global error is reduced by the same measured defect ratio.
Second, we show that the learned component contributes only an $O(\tau L_{\theta,K})$ Lipschitz term, up to projection constants.
Third, we report diagnostics for the quantities used in the assumptions: defect ratio, latent dimension sensitivity, and spectral-norm bounds.

HIN-LRI embeds the learned correction into an alternating iteration, as shown in \cref{fig:hin_lri_flowchart}.
The LRI branch propagates the linear dispersion exactly and leaves a structured resonance defect.
The neural branch maps a latent residual to $C_\theta$.
The update keeps the LRI backbone and adds only this time-step-scaled correction.

\begin{figure}[htbp]
    \centering
    \includegraphics[width=\textwidth]{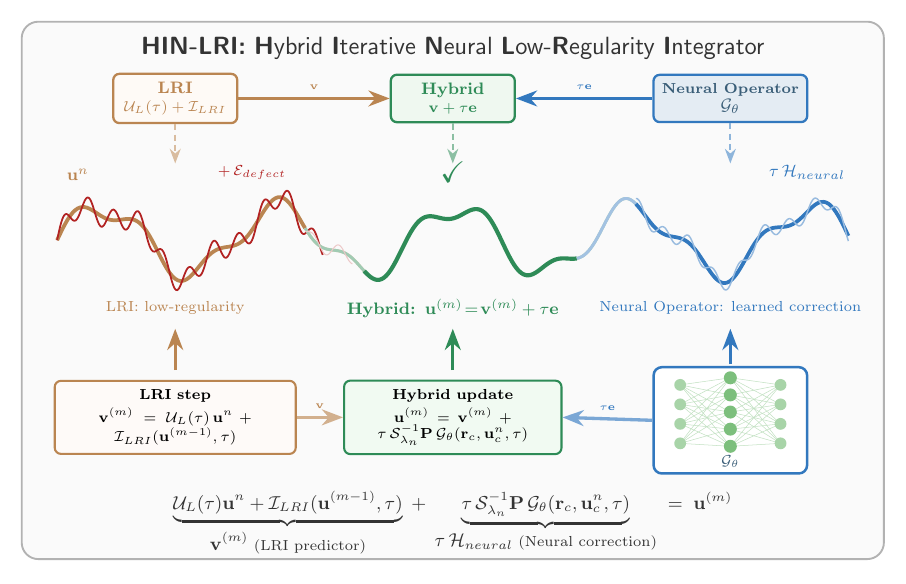}
    \vspace{-1.5em}
\caption{Schematic of the HIN-LRI alternating iteration, showing the LRI predictor, latent neural correction, and final update.}
\label{fig:hin_lri_flowchart}
\end{figure}

From the perspective of machine learning methodology, our main contributions are:
\begin{itemize}
    \item \textbf{A general residual-correction framework for learning-augmented numerical solvers.}
    We formalize a design pattern in which a classical iterative solver provides a consistent first-order approximation, and a neural operator acts exclusively on the structured residual that the solver cannot close analytically.
    The separation into a ``physical backbone + learned residual'' is not specific to dispersive PDEs: it applies whenever the solver's truncation error admits a well-characterised defect operator that can serve as a regression target.
    We provide an error decomposition (\cref{subsec:theoretical_analysis}) showing how the learned defect error $\varepsilon_{\rm learn}$ propagates from one step to the full trajectory.
    
    \item \textbf{Stability-preserving latent-space operator learning.}
    The neural correction operates on a low-dimensional orthogonal manifold ($K \ll N$) and carries an explicit time-step scaling $\tau$, so that its own Lipschitz contribution to the one-step map is $\tau L_{\theta,K}$ rather than $L_{\theta,K}$.
    Under the stated stability assumptions, this gives a bounded Gronwall factor. The CFL statement is a correction-level bound, not a full high-frequency stability proof.
    
    \item \textbf{Solver-in-the-loop (SITL) training aligned with multi-step dynamics.}
    Rather than training the neural component on isolated one-step regression targets, we unroll the full hybrid iteration and optimize a trajectory loss in a physically motivated function-space norm (discrete Bourgain space).
    This solver-aware training objective reduces distribution shift between training and deployment and provides a principled alternative to standard supervised operator learning. The approach generalises to any differentiable iterative solver.
    
    \item \textbf{Empirical validation and diagnostics.}
    We evaluate the framework on three low-regularity dispersive benchmarks. We compare against classical solvers and neural PDE surrogates. We also report ablations, out-of-distribution transfer, invariant-drift diagnostics, runtime, $\varepsilon_{\rm learn}$, and $L_{\theta,K}$ (\cref{app:reproducibility}).
\end{itemize}

The paper is organized as follows. \Cref{sec:preliminaries} introduces the mathematical setting and reviews the analytical background for low-regularity integrators. \Cref{sec:method} presents the HIN-LRI framework, including the alternating spectral-neural iteration, the solver-in-the-loop training, and the theoretical analysis. \Cref{sec:experiments} reports the numerical experiments. \Cref{sec:conclusions} concludes with a discussion of limitations and future work.

\section{Related Work}

\subsection{Low-Regularity Integrator}
Classical splitting and exponential integrators typically exhibit severe order reduction on rough data.
Low-regularity integrators (LRIs) mitigate this by embedding dominant nonlinear oscillations into the discretization via twisted variables and resonance expansions.
Beyond equation-specific constructions, general LRI frameworks avoiding reliance on Fourier series have been developed, supporting non-periodic domains and non-polynomial nonlinearities, alongside systematic high-order formalisms via decorated trees and forest formulae that enable symmetric designs \citep{RoussetSchratz2021Framework, BrunedSchratz2022Trees, BronsardBrunedMaierhoferSchratz2026Forest}.
For the cubic NLS, foundational exponential-type LRIs achieve first-order convergence by integrating the dominant nonlinear frequency interaction exactly \citep{OstermannSchratz2018}.
Second-order accuracy at reduced regularity has been addressed by resonance-aware schemes \citep{Knoller2019FourierNLS, Oster2022Order2NLS}, with new 1D variants pushing the $L^2$ theory below $H^2$ \citep{CaoLiLin2024SecondOrderNLS}.
At very low regularity, stability in $L^2$ has been obtained using discrete and continuous Strichartz/Bourgain-space estimates \citep{Oster2021ErrorNLS, Oster2023BourNLS, Ruff2025SpaceTimeBounds}.
Furthermore, structure-preserving symmetric and symplectic Runge--Kutta resonance-based LRIs have emerged to reconcile symplecticity with low-regularity convergence \citep{Bronsard2024SymmetricNLS, FengMaierhoferWang2025ExplicitSymmetricNLS, MaierhoferSchratz2025RKResonance}, while long-time behavior limits remain actively studied \citep{FengMaierhoferSchratz2024LongTime, Yao2022QuadraticNLS}.
For KdV, the foundational exponential-type integrator introduced first-order convergence by exploiting cubic dispersion resonance identities \citep{HofmanovaSchratz2017KdV}.
Advanced embedded exponential-type LRIs and methods analyzed via discrete Bourgain spaces subsequently optimized convergence under increasingly rough data, including solutions below $H^1$, using averaging approximations and perturbative low-regularity stability arguments \citep{WuZhao2022ELRIKdV, RoussetSchratz2022KdVLowReg, LiWu2025UnfilteredKdV}.

\subsection{Hybrid Iterative Neural Solvers}
Neural PDE solvers span fully learned surrogates like PINNs \citep{raissi2019pinn} or operator learners \citep{lu2021deeponet, li2021fno}, and hybrid iterative approaches that embed learning into numerically consistent outer loops to improve controlled accuracy.
A prominent coupling pattern exploits spectral bias by combining simple smoothers for high-frequency errors with neural corrections for slow modes, as seen in frameworks like HINTS \citep{zhang2024hints} and the Fourier Neural Solver \citep{cui2025fns}, though aligning training objectives with solver dynamics remains crucial for reliability \citep{wu2026reliability}. 
Beyond relaxation, learned preconditioners for Krylov methods reinterpret operator learning as a solver component, using DeepONet-based subspace corrections \citep{kopanicakova2025precond}, graph neural networks \citep{chen2025gnp}, or fixed low-rank coarse spaces \citep{benanti2026neuralsp}.
For challenging PDEs such as high-frequency Helmholtz equations, specialized multigrid/neural-network hybrids couple learned coarse corrections with classical smoothing \citep{azulay2023helmholtz, lerer2023compactimplicit, cui2025waveadrns}.
Related hybridization patterns also appear broadly in optimization and inverse problems, including deep unfolding \citep{gregor2010lista} and plug-and-play priors \citep{venkatakrishnan2013pnp, chan2016pnpadmm, romano2017red}.
Implicit-layer fixed-point models \citep{bai2019deq} further train equilibria via implicit differentiation, all driving toward the shared goals of certifiable convergence and predictable end-to-end computational costs.

HIN-LRI differs from these lines in its target and in the scope of its guarantees.
The learned component is not used as a black-box solver or generic smoother; it is trained to approximate the residual defect left by a low-regularity integrator, while the exact dispersive propagator remains part of every step.
\Cref{tab:method_positioning} summarizes this positioning.

\begin{table}[tbp]
    \centering
    \footnotesize
    \caption{Positioning of HIN-LRI relative to neural PDE solvers, hybrid iterative methods, and classical LRIs.}
    \label{tab:method_positioning}
    \begin{tabular}{@{}lccccc@{}}
    \toprule
    Method class & Exact propagator & Learns LRI defect & Stability claim & Low-reg. analysis & SITL \\
    \midrule
    FNO/DeepONet & No & No & No & No & Usually no \\
    HINTS & Yes & No & Partial & No & Partial \\
    Classical LRI & Yes & No & Analytical & Yes & No \\
    HIN-LRI & Yes & Yes & Conditional & Conditional & Yes \\
    \bottomrule
    \end{tabular}
\end{table}

\section{Preliminaries}
\label{sec:preliminaries}

In this section, we introduce the mathematical models of nonlinear dispersive equations and review the analytical background for low-regularity solutions.
We then outline the general framework of low-regularity integrators (LRIs) and discuss the inherent numerical challenges that motivate our hybrid neural-numerical approach.

\subsection{Problem Formulation}
\label{subsec:problem_formulation}

We consider a general class of nonlinear dispersive partial differential equations defined on a $d$-dimensional flat torus $\mathbb{T}^d = [0, 2\pi)^d$, which govern the spatiotemporal evolution of various wave phenomena.
The initial value problem is formulated as
\begin{equation}\label{eq:general_dispersive}
    \begin{cases}
        \partial_t u(t, \mathbf{x}) + i \mathcal{L}(\nabla) u(t, \mathbf{x}) = \mathcal{N}(u(t, \mathbf{x})), \quad \mathbf{x} \in \mathbb{T}^d, \ t \in (0, T], \\
        u(0, \mathbf{x}) = u_0(\mathbf{x}), \quad \mathbf{x} \in \mathbb{T}^d,
    \end{cases}
\end{equation}
where $u(t, \mathbf{x})$ denotes the real- or complex-valued wave field.
The operator $\mathcal{L}(\nabla)$ is a linear, self-adjoint pseudo-differential operator characterized by a real-valued dispersion relation $\omega(\mathbf{k})$, such that its action in the Fourier space is given by $\widehat{\mathcal{L} u}(\mathbf{k}) = \omega(\mathbf{k}) \hat{u}(\mathbf{k})$ for the discrete wavenumber $\mathbf{k} \in \mathbb{Z}^d$.
The term $\mathcal{N}(u)$ represents a polynomial nonlinearity. 
We use the convention $\mathcal{U}_L(t)=e^{-it\mathcal{L}}$.
Thus
\[
u(t+\tau)=\mathcal{U}_L(\tau)u(t)+\int_0^\tau \mathcal{U}_L(\tau-\sigma)\mathcal{N}(u(t+\sigma))\,d\sigma .
\]

This abstract formulation encapsulates several fundamental physical models widely studied in the literature.
For instance, setting the spatial dimension $d=1$, $i\mathcal{L}(\nabla) = \partial_x^3$ (corresponding to $\omega(k) = -k^3$), and $\mathcal{N}(u) = \frac{1}{2}\partial_x(u^2)$ yields the Korteweg--de Vries (KdV) equation.
Alternatively, choosing the Laplacian operator $\mathcal{L}(\nabla) = -\Delta$ (corresponding to $\omega(\mathbf{k}) = |\mathbf{k}|^2$) yields the quadratic and cubic nonlinear Schr\"odinger (NLS) equations when the nonlinearity is given by $\mathcal{N}(u) = \lambda u^2$ (or $\lambda |u|^2$) and $\mathcal{N}(u) = \lambda |u|^2 u$, respectively, with $\lambda \in \mathbb{R}$.

In many physical applications, the initial state $u_0$ is often highly oscillatory or strictly non-smooth, residing in a low-regularity Sobolev space $H^s(\mathbb{T}^d)$ with a critically small or even negative index $s \le 0$.
The analytical well-posedness of \cref{eq:general_dispersive} in such rough functional spaces relies heavily on advanced harmonic analysis tools, particularly the discrete Bourgain spaces $X_{s,b}$, equipped with the space-time norm
\begin{equation}
    \| w \|_{X_{s,b}}^2 := \int_{\mathbb{R}} \sum_{\mathbf{k} \in \mathbb{Z}^d} \langle \mathbf{k} \rangle^{2s} \langle \sigma - \omega(\mathbf{k}) \rangle^{2b} \big| \mathcal{F}_{t,\mathbf{x}} \{w\}(\sigma, \mathbf{k}) \big|^2 d\sigma,
    \label{eq:bourgain_norm}
\end{equation}
where $\langle \mathbf{k} \rangle = (1+|\mathbf{k}|^2)^{1/2}$ and $\mathcal{F}_{t,\mathbf{x}}$ denotes the spatiotemporal Fourier transform.
This functional space effectively isolates the linear dispersive wave propagation onto the characteristic manifold $\sigma = \omega(\mathbf{k})$, providing a crucial mechanism to recover the lost spatial regularity required by the nonlinear term $\mathcal{N}(u)$.
For example, the KdV equation is globally well-posed in $H^s$ for $s \ge -1$, while the cubic NLS equation is well-posed for $s \ge 0$.

Numerically solving \cref{eq:general_dispersive} under low-regularity conditions presents substantial challenges.
Applying the Fourier pseudo-spectral method on a uniform spatial grid yields a discrete projection operator $\Pi_N$, where $N$ represents the frequency truncation limit.
The continuous PDE is then reduced to a large-scale semi-discrete system
\begin{equation}
    \partial_t \mathbf{u}_N + i \mathcal{L}_N \mathbf{u}_N = \Pi_N \mathcal{N}(\mathbf{u}_N).
    \label{eq:linsys_dispersive}
\end{equation}
For rough data, the spatial derivatives embedded in $\mathcal{L}$ and $\mathcal{N}$ are essentially unbounded.
Classical time-marching methods, such as standard exponential integrators or operator splitting methods, inherently rely on the boundedness of high-order time derivatives of the exact solution, which translates to requiring high spatial regularity (e.g., $u_0 \in H^3$ or $H^4$).
When applied to solutions below these regularity thresholds, classical methods can fail to resolve the high-frequency oscillatory components, suffering from severe order reduction and spurious numerical instability.

\subsection{Notation}
\label{subsec:notation}

\Cref{tab:notation} summarizes the main symbols used throughout the paper.
We use $\mathcal{K}$ for compact sets and reserve $K$ for the latent dimension.
We use $N_x$ for spatial modes, $N_t$ for time steps, and $N_{\rm train}$ for training samples.
When legacy notation $N$ appears in a numerical scheme, it means $N_x$.

\begin{table}[tbp]
    \centering
    \footnotesize
    \caption{Main notation used in the HIN-LRI formulation and analysis.}
    \label{tab:notation}
    \begin{tabular}{@{}ll@{}}
    \toprule
    Symbol & Meaning \\
    \midrule
    $u(t)$, $u^n$, $u^{(m)}$ & Exact solution, time-step state, and inner Picard iterate \\
    $\tau$, $T$, $N_x$, $N_t$, $K$ & Time step, final time, grid size, time steps, and latent dimension \\
    $\mathcal{L}$, $\omega(k)$, $\mathcal{U}_L(t)$ & Dispersive operator, symbol, and exact linear propagator \\
    $\mathcal{N}$, $\mathcal{I}_{LRI}$ & Nonlinearity and base low-regularity integrator \\
    $\mathcal{E}_{defect}$ & Residual defect between the exact and LRI Duhamel terms \\
    $H_{\mathrm{neural}}$, $C_\theta$ & Network output and actual correction $C_\theta:=\tau H_{\mathrm{neural}}$ \\
    $\mathbf{R}$, $\mathbf{P}$, $\boldsymbol{\Phi}$ & Restriction, prolongation, and latent basis; $\mathbf{R}=\boldsymbol{\Phi}^*$ \\
    $\mathcal{S}_{\lambda}$ & Empirical normalization/scaling used before latent projection \\
    $\varepsilon_{\rm learn}$, $L_{\theta,K}$ & Learned defect ratio and latent Lipschitz constant \\
    \bottomrule
    \end{tabular}
\end{table}

\subsection{Low-Regularity Integrators}
\label{subsec:low_regularity_integrators}

To address the severe regularity requirements imposed by classical numerical methods, low-regularity integrators (LRIs) have been developed.
The fundamental concept behind LRIs is to isolate the stiff, highly oscillatory linear dispersive dynamics from the nonlinear interactions.
This is achieved by introducing the unitary continuous evolution group $\mathcal{U}_L(t) = \exp(-it\mathcal{L})$ and defining the twisted variable $v(t, \mathbf{x}) = \mathcal{U}_L(-t)u(t, \mathbf{x})$.
This Lawson-type transformation completely absorbs the linear differential operator, yielding an equivalent evolution equation driven purely by the frequency-modulated nonlinearity
\begin{equation}
    \partial_t v(t, \mathbf{x}) = \mathcal{U}_L(-t) \mathcal{N} \left( \mathcal{U}_L(t) v(t, \mathbf{x}) \right).
    \label{eq:twisted_evolution}
\end{equation}
Let $t_n = n\tau$ for $n=0, 1, \dots, T/\tau$ be a uniform temporal partition with step size $\tau$.
Integrating \cref{eq:twisted_evolution} over a temporal step interval $[t_n, t_{n+1}]$ provides the exact Duhamel integral formulation for the twisted variable.
In the Fourier space, the polynomial nonlinearity $\mathcal{N}$ induces a multi-dimensional convolution.
The nonlinear interaction of distinct frequency modes $\mathbf{k}_j$ generates highly oscillatory cross-resonance phase functions $\Phi$.
For instance, considering a generic nonlinearity of degree $p$, the exact evolution from $t_n$ to $t_{n+1}$ can be expressed at the Fourier mode level as
\begin{equation}
    \hat{v}_{\mathbf{k}}(t_{n+1}) = \hat{v}_{\mathbf{k}}(t_n) + \int_0^\tau \sum_{\mathbf{k}_1+\dots+\mathbf{k}_p=\mathbf{k}} \mathbf{C}(\mathbf{k}, \dots) e^{-is\Phi(\mathbf{k}, \mathbf{k}_1, \dots, \mathbf{k}_p)} \prod_{j=1}^p \hat{v}_{\mathbf{k}_j}(t_n+s) \, ds,
    \label{eq:duhamel_fourier}
\end{equation}
where $\mathbf{C}(\mathbf{k}, \dots)$ denotes the coefficient multiplier stemming from spatial derivatives in $\mathcal{N}$, and $\Phi(\mathbf{k}, \dots) = \omega(\mathbf{k}) - \sum_{j=1}^p \omega(\mathbf{k}_j)$ is the resonance phase function governing the nonlinear frequency coupling.
For the KdV equation, $\Phi = k^3 - k_1^3 - k_2^3$, while for the cubic NLS equation, $\Phi = |\mathbf{k}|^2 + |\mathbf{k}_1|^2 - |\mathbf{k}_2|^2 - |\mathbf{k}_3|^2$.

To construct a practical and explicitly computable scheme without releasing spatial derivatives, LRIs freeze the slowly varying term $v(t_n+s) \approx v(t_n)$ over the short interval $s \in [0, \tau]$, and analytically evaluate the dominant high-frequency oscillatory integral $\int_0^\tau e^{-is\Phi_{dom}} ds$.
By avoiding straightforward Taylor expansions on the exponential phase, LRIs successfully circumvent the regularity paradox.
For the cubic NLS equation, extracting the dominant phase component $\Phi_{dom} = -2|\mathbf{k}_1|^2$ leads to the classical first-order resonance-based scheme
\begin{equation}
    u^{n+1} = \mathcal{U}_L(-\tau) \left[ u^n - i\tau\lambda (u^n)^2 \left( \varphi_1(-2i\tau\Delta)\overline{u^n} \right) \right],
\end{equation}
where $\varphi_1(z) = (e^z - 1)/z$ acts as an exact filter for the resonance frequency.
For the KdV equation, the algebraic identity $k^3 - k_1^3 - k_2^3 = 3kk_1k_2$ enables the exact integration of the resonance phase to perfectly cancel the singular derivative multiplier $ik$, yielding the baseline first-order LRI propagator
\begin{equation}
    \mathcal{I}_{LRI}(u^n, \tau) = \frac{1}{6} \mathbb{P} \left[ \left(\mathcal{U}_L(-\tau) \partial_x^{-1} u^n \right)^2 \right] - \frac{1}{6} \mathcal{U}_L(-\tau) \mathbb{P} \left[ \left(\partial_x^{-1} u^n \right)^2 \right],
    \label{eq:kdv_lri_base}
\end{equation}
where $\mathbb{P}$ denotes the projection onto mean-zero functions, and $\partial_x^{-1}$ is the pseudo-differential anti-derivative operator.

For higher-order methods and nested multi-wave interactions, exact phase factorization is often unavailable.
Analytical LRIs then use Taylor expansions, filtered midpoint rules, or temporal averaging.
These approximations leave a residual defect $\mathcal{E}_{defect}(\mathbf{u}^n, \tau)$.
The complete discrete evolution governing the numerical solution can thus be formulated as a large-scale nonlinear system
\begin{equation}
    \mathbf{u}^{n+1} = \Pi_N \mathcal{U}_L(-\tau) \mathbf{u}^n + \mathcal{I}_{LRI}(\mathbf{u}^n, \tau) + \mathcal{E}_{defect}(\mathbf{u}^n, \tau).
    \label{eq:discrete_evolution_defect}
\end{equation}

Relying only on harmonic-analysis bounds introduces several constraints.
Taylor truncations reintroduce derivative loss, demanding higher regularity than the PDE requires.
Averaging approximations, such as $M_\tau(e^{is(\phi_1+\phi_2)}) \approx M_\tau(e^{is\phi_1}) M_\tau(e^{is\phi_2})$, leave a mismatch kernel that enforces logarithmic penalties $\mathcal{O}(\tau^\gamma \ln(1/\tau))$ on the truncation error, capping the achievable accuracy.
Moreover, discrete Bourgain space analyses necessitate strict CFL conditions, such as $\tau \le \mathcal{O}(N^{-3})$, to maintain local Lipschitz stability in extreme low-regularity spaces.
These constraints limit standard LRIs in rough, high-order, and high-resolution regimes.
They motivate a learned residual corrector for $\mathcal{E}_{defect}$.

\section{Method}
\label{sec:method}

In this section, we describe the proposed methodology.
\Cref{subsec:motivation} mathematically analyses the analytical defects of classical low-regularity integrators.
\Cref{subsec:hinlri} presents the Hybrid Iterative Neural Low-Regularity Integrator (HIN-LRI).
It combines an explicit dispersive step with a latent neural residual correction.
\Cref{subsec:algorithms} introduces the algorithmic implementation and the solver-in-the-loop (SITL) training in discrete Bourgain spaces.
\Cref{subsec:theoretical_analysis} gives the conditional error and stability analysis.

\subsection{Motivation}
\label{subsec:motivation}

We consider a general class of nonlinear dispersive equations defined on the torus $\mathbb{T}^d$ or whole space $\mathbb{R}^d$:
\begin{equation} \label{eq:general_pde}
    \partial_t u + i\mathcal{L}(\nabla)u = \mathcal{N}(u), \quad u(0,x) = u_0 \in H^s,
\end{equation}
where $\mathcal{L}(\nabla)$ is a self-adjoint linear dispersive operator with symbol $\omega(\mathbf{k}) \sim \mathcal{O}(|\mathbf{k}|^\alpha)$ (e.g., $\alpha=2$ for NLS, $\alpha=3$ for KdV), and $s \le 0$ indicates very low regularity.
The foundational paradigm of classical LRIs relies on the Lawson transform (twisted variable) $v(t) = e^{it\mathcal{L}}u(t)$, which analytically absorbs the high-frequency linear stiffness into a purely oscillatory Duhamel integral:
\begin{equation} \label{eq:exact_duhamel}
    \hat{v}_{\mathbf{k}}(t_n+\tau) = \hat{v}_{\mathbf{k}}(t_n) + \int_0^\tau \sum_{\mathbf{k} = \sum \pm \mathbf{k}_j} e^{is\Phi(\mathbf{k}, \mathbf{k}_j)} \widehat{\mathcal{N}}(\mathbf{v}(t_n+s)) ds.
\end{equation}
To achieve high-order convergence without requiring bounded high-order spatial derivatives of $u$, classical LRIs aim to replace the exact oscillatory integral in \cref{eq:exact_duhamel} by an analytically computable resonance quadrature:
\begin{equation}
    \hat{v}_{\mathbf{k}}^{n+1}
    =
    \hat{v}_{\mathbf{k}}^{n}
    +
    \mathcal{Q}_{\tau,\mathbf{k}}^{\Phi}(\hat{v}^{n}),
    \qquad
    \mathcal{Q}_{\tau,\mathbf{k}}^{\Phi}(\hat{v}^{n})
    \approx
    \int_0^\tau
    \sum_{\mathbf{k}=\sum \pm \mathbf{k}_j}
    e^{is\Phi(\mathbf{k},\mathbf{k}_j)}
    \widehat{\mathcal{N}}(\hat{v}^{n})\,ds .
\end{equation}
The entire LRI design problem is therefore reduced to constructing a stable and fast approximation $\mathcal{Q}_{\tau}^{\Phi}$ to the resonant Duhamel operator.
In low regularity, however, the Fourier coefficients of $v$ do not decay fast enough to absorb uncontrolled multipliers. Algebraic phase factorizations may fail. Taylor or filter expansions may release powers of $|\mathbf{k}|^\alpha$. Phase-decoupling residuals may accumulate logarithmic losses. Implicit structure-preserving variants may require fixed-point maps of the form $\hat v^{n+1}=\hat v^n+\mathcal{Q}_{\tau}^{\Phi}(\hat v^n,\hat v^{n+1})$ whose contraction constant grows with the rough Sobolev norm.
Higher order, multidimensionality, and exact symplecticity make these analytical requirements harder to satisfy.
The most representative numerical signatures of these four failures are summarized in \cref{fig:lri_four_defects_diagnosis}. Panel~(a) shows the brittleness of algebraic resonance factorization. Panel~(b) shows derivative-loss amplification induced by iterated multipliers. Panel~(c) shows the aliasing/logarithmic cascade behind the CFL-type restriction. Panel~(d) shows the divergence or costly saturation of implicit Picard iterations.
The detailed diagnostic figures are deferred to \cref{subsec:lri_defect_verification}.

\begin{figure}[htbp]
    \centering
    \includegraphics[width=\textwidth]{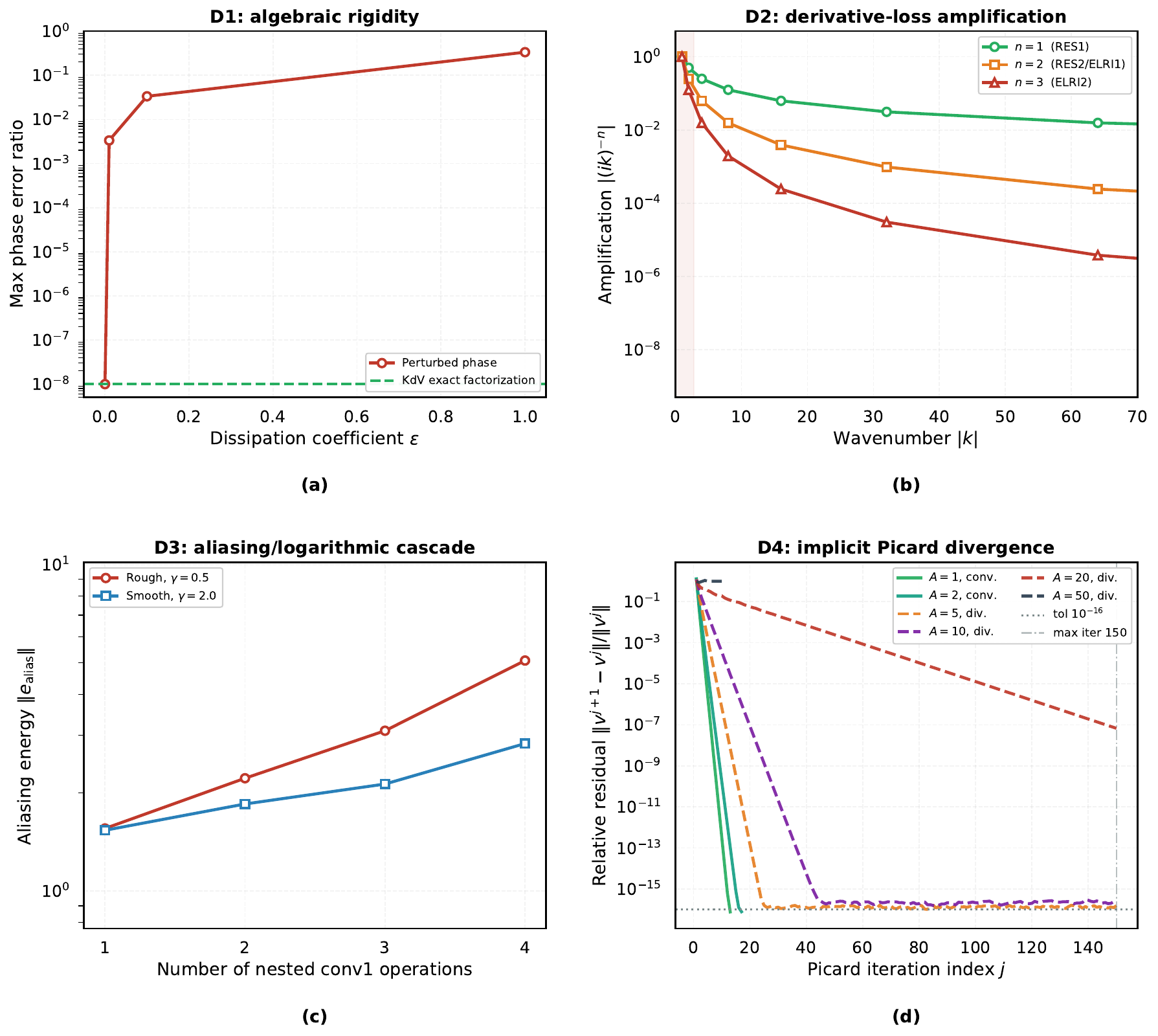}
    \vspace{-1.5em}
    \caption{Numerical diagnostics illustrating algebraic rigidity, derivative loss, CFL effects, and structure drift in analytical LRIs.}
    \label{fig:lri_four_defects_diagnosis}
\end{figure}

\subsubsection{Regularity Barrier from Explicit Taylor Truncation}
To construct high-order schemes or handle non-factorizable cross-resonances, one often approximates the convolution of $e^{is\Phi}$.
Classical methods split the phase operator into a dominant integrable part and a lower-order part: $\mathcal{L} = \mathscr{L}_{dom} + \mathscr{L}_{low}$.
The residual oscillation $e^{is\mathscr{L}_{low}}$ is then evaluated via an explicit Taylor series expansion:
\begin{equation} \label{eq:taylor_expansion}
    e^{is\mathscr{L}_{low}} = \sum_{m=0}^{r-1} \frac{(is\mathscr{L}_{low})^m}{m!} + \mathcal{R}_r(s\mathscr{L}_{low}), \quad \text{where } \left\| \mathcal{R}_r(s\mathscr{L}_{low}) \right\|_{op} \le \frac{s^r \|\mathscr{L}_{low}\|_{op}^r}{r!}.
\end{equation}
This introduces a regularity trade-off.
While the unitary group $e^{is\mathcal{L}}$ is a bounded isometry on $H^s$, its polynomial expansion explicitly releases the unbounded spatial differential operator $\mathscr{L}_{low}^r$ into the local truncation error (LTE).
In the decorated tree formalism, the approximation operator $\Pi^{n,r}$ yields an exact local error bound for a tree $T$:
\begin{equation} \label{eq:derivative_loss}
    \| (\Pi - \Pi^{n,r})T(\tau) \|_{H^\gamma}^2 = \sum_{\mathbf{k}} \langle \mathbf{k} \rangle^{2\gamma} \left| \mathcal{F}_{\mathbf{k}} \left[ \mathcal{R}_r \hat{v} \right] \right|^2 \propto \tau^{2r+4} \sum_{\mathbf{k}} \langle \mathbf{k} \rangle^{2\gamma} |\mathbf{k}|^{2\alpha r} |\hat{v}_{\mathbf{k}}|^2 = \tau^{2r+4} \|\nabla^{\alpha r} v\|_{H^\gamma}^2.
\end{equation}
Consequently, the local error $\mathcal{E}_{Taylor} = \mathcal{O}\left( \tau^{r+2} \|u\|_{H^{\gamma+\alpha r}} \right)$ requires $\gamma+\alpha r$ spatial derivatives.
For sub-$H^1$ rough data, $\|\nabla^{\alpha r} v\|_{H^\gamma}$ can be very large.
This produces the order reduction shown in \cref{fig:taylor_derivative_loss}.
On smooth initial data ($\gamma = 3.0$), classical integrators achieve their nominal orders, with KdV-ETD1 attaining order $0.99$ and NLS-Strang splitting attaining $2.18$.
On rough data ($\gamma = 0.5$), KdV-ETD1 drops to order $0.26$ and KdV-Lawson1 diverges.
NLS-Strang drops from $2.18$ to $0.85$.
NLS-BS22 drops from $1.87$ on smooth data to $0.58$ on rough data.
KdV-RES1 drops to slope $0.18$.
These results show that current LRI schemes can lose their nominal rates on rough data.
NLS-RES1 is more robust in this test, retaining order $1.02$.
The spectral error in \cref{fig:taylor_derivative_loss_b} shows the same pattern: high-frequency modes carry most of the one-step error.

\subsubsection{Algebraic Rigidity and Combinatorial Growth}
To map frequency-domain convolutions back to physical space and retain $\mathcal{O}(N \log N)$ FFT efficiency, classical LRIs depend on exceptional algebraic factorizations (e.g., $k^3 - k_1^3 - k_2^3 \equiv 3k k_1 k_2$ for 1D KdV).
Once this identity is perturbed, the fast FFT-based reduction loses its justification and the numerical implementation reverts toward direct multi-index summation.
At higher order, the scheme must additionally evaluate multi-linear contributions over the Hopf algebra of decorated trees $\mathfrak{T}_0^{r+2}(R)$. Each tree node encodes a nonlinear interaction and each edge carries a frequency label. The full numerical scheme sums over all such trees up to a given depth:
\begin{equation} \label{eq:decorated_tree}
    U_{\mathbf{k}}^{n,r}(\tau,v) = \sum_{T \in \mathfrak{T}_0^{r+2}(R)} \frac{\Upsilon^p(\lambda_{\mathbf{k}}T)(v)}{S(T)} \Pi^n\left(\mathscr{D}^r(\mathcal{I}_{(t_1,0)}(\lambda_{\mathbf{k}}T))\right)(\tau).
\end{equation}
The cardinality $|\mathfrak{T}_0^{r+2}(R)|$ grows rapidly with the approximation order $r$, so even when the underlying FFT structure survives, higher-order LRIs accumulate a substantial combinatorial overhead.
The broader multidimensional complexity growth of generalized convolutions remains a genuine theoretical concern, but \cref{fig:algebraic_rigidity} directly quantifies the two effects that are numerically visible here: algebraic brittleness and higher-order combinatorial inflation.
Panel~(a) shows that the KdV identity $k^3{-}k_1^3{-}k_2^3 \equiv 3kk_1k_2$ is numerically stable for pure KdV ($\text{residual} < 10^{-8}$). A KdV-Burgers perturbation with $\varepsilon = 0.01$ creates a residual of $\sim 10^{-3}$ at $|k|=2$.
Panel~(b) shows the direct computational consequence. The FFT-based convolution achieves an empirical growth of $N^{0.46}$ (consistent with $\mathcal{O}(N\log N)$), while the brute-force fallback scales as $\mathcal{O}(N^{1.92}) \approx \mathcal{O}(N^2)$, producing an $85{\times}$ speed gap at $N=512$ that widens to $10^{3}{\times}$ at $N=4096$.
Panels~(c) and~(d) expose the higher-order combinatorial penalty. RES1 requires only $2$ \texttt{conv1} calls per step, while ELRI1 and ELRI2 each require $12$ ($6{\times}$) and ULRI requires $9$ ($4.5{\times}$). The decorated-tree term count grows as $\propto p^{2.5}$ and reaches $132$ terms at order $p=5$.
Together, these measurements show that even in the one-dimensional setting where the FFT factorization exists, higher-order LRIs already suffer from substantial implementation and runtime inflation. The figure should be interpreted as numerical evidence for algebraic rigidity and combinatorial explosion, rather than as a direct benchmark of the full multidimensional curse of dimensionality.

\subsubsection{Residual Resonance Mismatch and CFL-Type Restrictions}
Intuitively, a nonlinear dispersive PDE creates interactions between waves at different frequencies.
Tracking all interactions would require evaluating oscillatory integrals whose phases depend on combined frequencies.
Unfiltered LRIs approximate this by factoring the integral into independent single-frequency averages---a simplification that works well when the interacting frequencies are well-separated but leaves a residual error (the \emph{phase mismatch kernel} $\boldsymbol{\eta}$) whenever they are not.
This residual is not removed by the current algebraic decoupling and contributes to the logarithmic error term and CFL-type restriction analysed below.

To circumvent the order reduction caused by high-frequency hard filters, unfiltered LRIs (ULRIs) decouple nested exponential phases via time-averaging approximations:
\begin{equation} \label{eq:average_approx}
    \int_0^\tau e^{-is(\phi_1+\phi_2)} ds = \tau \mathcal{M}_\tau\left(e^{-is\phi_1}\right)\mathcal{M}_\tau\left(e^{-is\phi_2}\right) + \tau \boldsymbol{\eta}(\tau, \mathbf{k}),
\end{equation}
where $\mathcal{M}_\tau(f) = \frac{1}{\tau}\int_0^\tau f(s)ds$.
This decoupling leaves a non-zero phase mismatch kernel $\boldsymbol{\eta}(\tau, \mathbf{k})$.
The residual defect $\mathcal{E}_{defect} = \mathcal{I}_{exact} - \mathcal{I}_{LRI}$ takes the form:
\begin{equation}
    \mathcal{F}_{\mathbf{k}}[\mathcal{E}_{defect}] = - \sum_{\mathbf{k}_1+\mathbf{k}_2+\mathbf{k}_3=\mathbf{k}} \frac{\tau}{18i\mathbf{k}} e^{-it_n \phi} \boldsymbol{\eta}(\tau, \mathbf{k}) \hat{v}_{\mathbf{k}_1}\hat{v}_{\mathbf{k}_2}\hat{v}_{\mathbf{k}_3}.
\end{equation}
Through the logarithmically growing trilinear estimate on $L^2$, current analyses give the bound:
\begin{equation} \label{eq:log_penalty}
    \| \mathcal{E}_{defect}(u) \|_{L^2}
    \lesssim
    \mathcal{C} \tau^{1+\gamma}
    \left( \sum_{0 < |\mathbf{k}| \le \tau^{-1}} \frac{1}{|\mathbf{k}|} \right)
    \|u\|_{H^\gamma}^3
    \lesssim
    \tau^{1+\gamma}\ln\frac{1}{\tau}.
\end{equation}
This logarithmic factor is visible in the measured truncation error.
Furthermore, in endpoint Bourgain spaces $X_{s, b}$, the lack of additional smallness forces a global spectral truncation $\Pi_N$.
The discrete Banach contraction dictates the Lipschitz bound $\text{Lip}(\mathcal{I}_{LRI}) \le \mathcal{C}\tau \|\partial_x \Pi_N\|_{op} \le \mathcal{C}\tau N^\alpha < 1$.
This yields the CFL-type restriction $\tau \le \mathcal{O}(N^{-\alpha})$ in that analysis.
These ULRI defects are directly measured in \cref{fig:ulri_defect}, while the filtering-based alternative is quantified separately in \cref{fig:filtering_cap}.
Panel~(a) of \cref{fig:ulri_defect} shows that on rough $H^{0.5}$ KdV data, the ULRI convergence curve is enveloped by $\tau^\gamma \ln(1/\tau)$ rather than $\tau^\gamma$. The logarithmic overhead persists uniformly across all tested step sizes and prevents ULRI from matching the clean $\mathcal{O}(\tau^\gamma)$ rate of RES1 and ELRI1.
Panel~(b) gives a numerical diagnostic of the CFL-type restriction. At fixed $\tau = 10^{-3}$, RES1 produces decreasing $L^2$ error as $N$ increases, while ULRI's error stagnates and then diverges near $N^* = (2\pi/\tau)^{1/3} \approx 39$.
Panel~(d) reveals the cost-accuracy trade-off. ULRI requires $27$ FFT-equivalent transforms per step ($4.5{\times}$ RES1's $6$), yet occupies the worst Pareto quadrant with simultaneously the highest error and the second-highest cost among the three methods tested.
Complementarily, \cref{fig:filtering_cap} shows what happens when one attempts to suppress the logarithmic growth through hard high-frequency filtering. The $\varphi_1$ filter in NLS-RES1 introduces a sinc-like spectral attenuation $|\varphi_1(-2hk^2)| = \sin(hk^2)/(hk^2)$, visible in panel~(b) for $h \in \{0.0625, 0.125, 0.25\}$, and induces a clear regularity-dependent convergence ceiling. RES1 exhibits empirical orders $0.192$, $0.608$, $0.921$, and $1.035$ for $\gamma = 0.25, 0.5, 1.0, 2.0$. BS22 yields $0.391$, $0.523$, $1.011$, and $1.917$. The filter penalty dominates the rough-data regime and erodes the nominal high-order gain.
At $\gamma = 0.5$, BS22 achieves only order $0.523$ despite being nominally second-order, while RES1 drops to $0.608$. Both methods are driven close to a half-order regime, so the logarithmic mismatch is not removed but traded for a filter-induced spectral ceiling.

\subsubsection{Implicit Cost of Symplectic Structure Preservation}
Classical asymmetric LRIs break time-reversal symmetry, leading to secular energy drift.
To rigorously preserve the symplectic two-form $\omega = \sum d\xi_{\mathbf{k}} \wedge d\eta_{\mathbf{k}}$ over long-time evolution, Runge-Kutta resonance schemes introduce $S$ internal stages:
\begin{equation} \label{eq:RK_stages}
    K_{p,q,r} = \mathcal{F}_p\left(\tau; c_q; u^n + \tau \sum_{\tilde{p},\tilde{q},\tilde{r}=0}^S a_{p,q,r}^{\tilde{p},\tilde{q},\tilde{r}} K_{\tilde{p},\tilde{q},\tilde{r}}\right).
\end{equation}
For this discrete mapping to preserve the quadratic invariants exactly, the real-valued coefficients $b^{p,q,r}$ and $a_{p,q,r}^{\tilde{p},\tilde{q},\tilde{r}}$ must satisfy the strict algebraic geometric condition $b^{\tilde{p},\tilde{q},\tilde{r}} b^{p,q,r} = b^{p,q,r} a_{p,q,r}^{\tilde{p},\tilde{q},\tilde{r}} + b^{\tilde{p},\tilde{q},\tilde{r}} a_{\tilde{p},\tilde{q},\tilde{r}}^{p,q,r}$.
By evaluating the diagonal entries ($p=\tilde{p}, q=\tilde{q}, r=\tilde{r}$), it algebraically demands $(b^{p,q,r})^2 = 2 b^{p,q,r} a_{p,q,r}^{p,q,r}$.
For any consistent method where $b \neq 0$, this yields:
\begin{equation} \label{eq:implicit_proof}
    a_{p,q,r}^{p,q,r} = \frac{1}{2} b^{p,q,r} \neq 0.
\end{equation}
Under these RK resonance conditions, exact preservation requires implicit coupling.
Solving the resulting nonlinear systems by fixed-point iteration adds $\mathcal{O}(M_{iter} N \log N)$ work per step.
The numerical consequences of non-conservation and the implicit trap are jointly quantified in \cref{fig:energy_drift}.
Panel~(a) tracks the normalized mass $M(t)/M(0)$ over $T=20$ at $\tau=0.05$. The Lie and Strang splittings hold $M/M(0) \equiv 1$ to machine precision, while RES1 accumulates a visible secular drift, confirming that explicit LRI updates violate the symplectic mass invariant at rate $\mathcal{O}(\tau)$ per unit time.
Panel~(b) reveals the complementary Hamiltonian picture. RES1 drifts at $\mathcal{O}(\tau)$ and Strang splitting at $\mathcal{O}(\tau^2)$. No explicit method simultaneously preserves both invariants exactly.
Panels~(c) and~(d) quantify the drift rates via log-log regression over $\tau \in \{0.2, 0.1, 0.05, 0.025, 0.0125\}$ at $T=5$. RES1's mass and Hamiltonian drift both fit slope ${\approx}1.0$, consistent with $\mathcal{O}(\tau)$. The Strang Hamiltonian drift fits slope ${\approx}2.0$ ($\mathcal{O}(\tau^2)$), consistent with its second-order conservation of $H$.
The algebraic proof in \cref{eq:implicit_proof} explains the root cause. Exact simultaneous preservation of both $M$ and $H$ enforces $a_{p,q,r}^{p,q,r} = \tfrac{1}{2}b^{p,q,r} \ne 0$, mandating a fully implicit solver. The implicit bottleneck is not an engineering shortcoming but a mathematical inevitability. Long-time simulations with explicit LRIs must either accept unbounded drift or pay the full fixed-point iteration cost.

\subsection{Hybrid Iterative Neural Low-Regularity Integrator (HIN-LRI)}
\label{subsec:hinlri}

\begin{figure}[htbp]
\centering
\includegraphics[width=\textwidth]{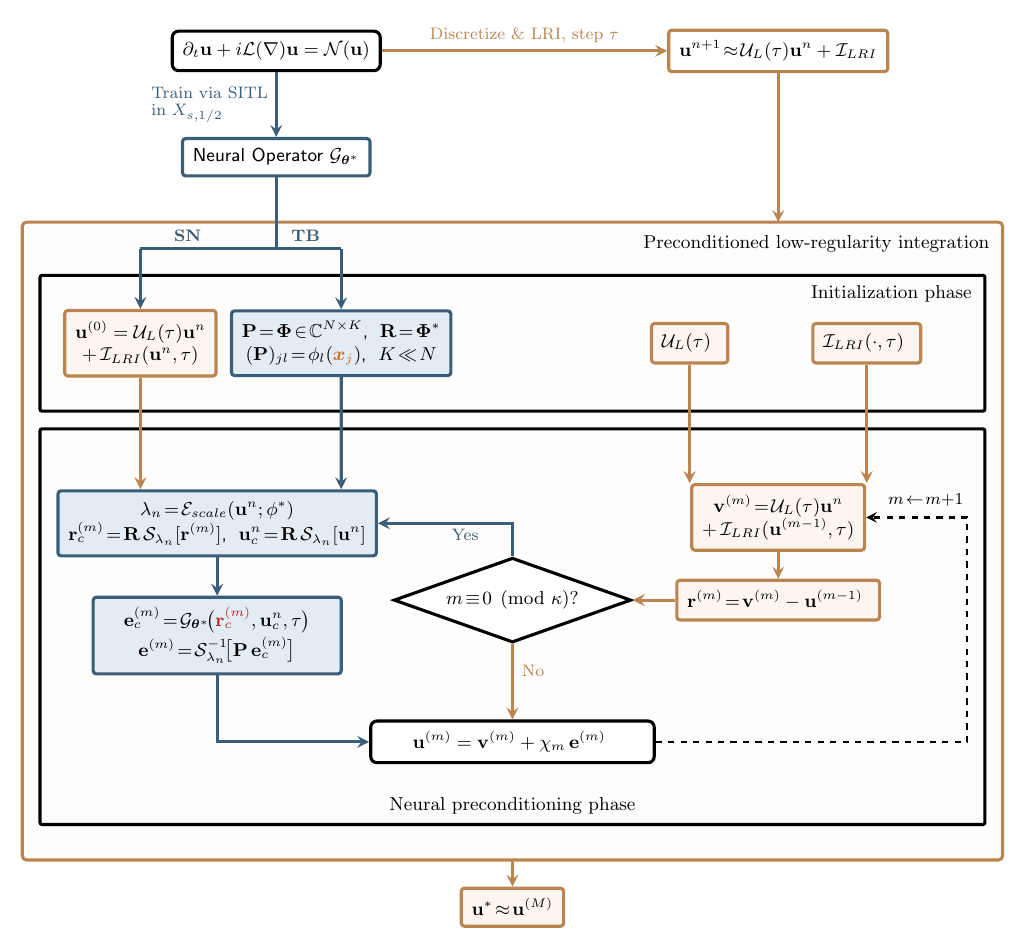}
\vspace{-1.5em}
\caption{HIN-LRI framework with an LRI spectral step followed by a time-step-scaled latent neural correction.}
\label{fig:hin_lri_framework}
\end{figure}

We now present the complete mathematical model of the HIN-LRI framework, illustrated in Figure~\ref{fig:hin_lri_framework}.
The model is built from precise operator definitions, tensor mappings, and functional analysis in Bourgain spaces.

\subsubsection{Intuitive Overview}
Before the formal development, we outline the key idea informally.
A standard LRI advances the solution by one time step using an analytical formula that exactly handles the linear dispersion but only approximately resolves the nonlinear interactions.
The approximation error---the \emph{resonance defect}---depends on how well one can factor or truncate certain oscillatory integrals in Fourier space.
Classical methods attempt this analytically, but the resulting algebraic identities are fragile, equation-specific, and introduce spurious derivative requirements.
HIN-LRI takes a different route: it keeps the analytical LRI step as a ``first draft'' of the solution and then adds a small learned correction.
This correction is computed by a neural operator that (i) compresses the solution onto a low-dimensional manifold of $K \ll N$ modes, (ii) maps the compressed residual through a lightweight network, and (iii) projects the result back to the full grid.
Because the learned correction is evaluated in a low-dimensional latent space, its Lipschitz bound depends on $K$ and on the projection constant, not directly on the full $N_x$-point grid.
This does not by itself prove that the entire scheme is independent of all high-frequency effects; the stability statement below is conditional on the base LRI stability and on the empirical quality of the learned defect approximation.
The network weights are trained end-to-end using a \emph{solver-in-the-loop} objective that unrolls multiple time steps and penalizes the trajectory error in a Bourgain-type norm. The learned correction is therefore optimized for the dynamics of the full time-stepping scheme rather than for a single-step regression target.

\subsubsection{Transition to the HIN-LRI Paradigm}
The analytical bounds reviewed above suggest that purely analytical LRI approaches face significant barriers when pushed to higher order, very rough data, and structure preservation simultaneously.
To address these challenges, we introduce the Hybrid Iterative Neural Low-Regularity Integrator (HIN-LRI).
By retaining the exact, zero-dissipation linear propagator $\mathcal{U}_L(\tau) = e^{-i\tau\mathcal{L}}$ as a structure-preserving physical backbone, we strategically embed a neural operator $\mathcal{G}_\theta$ mapped onto a latent manifold to execute a targeted residual correction:
\begin{equation} \label{eq:neural_correction_motivation}
    C_\theta(u^n,\tau) := \tau\,H_{\mathrm{neural}}(u^n,\tau;\theta)
    = \tau\,\mathbf{P} \circ \mathcal{G}_\theta \circ \mathbf{R}[u^n]
    \approx \mathcal{E}_{defect}(u^n,\tau).
\end{equation}
This spectral-neural alternation transforms the paradigm from analytical truncation to manifold projection.
The neural operator learns the integral mapping implicitly, avoiding the Taylor expansion that drives the regularity paradox.
Global frequency mixing sidesteps the combinatorial growth of decorated tree expansions.
End-to-end SITL optimization encourages the neural weights to act as an adaptive sub-grid filter, targeting the mismatch kernel $\boldsymbol{\eta}(\tau, \mathbf{k})$ on the training distribution.
Finally, the neural residual compensation provides a structural correction via a single explicit forward pass, avoiding the cost of fully implicit internal stages.

\subsubsection{Exact Evolution and Analytical Resonance Defect}
The formulation begins with the initial value problem for nonlinear dispersive equations:
\begin{equation}
    \partial_t u + i \mathcal{L} u = \mathcal{N}(u), \quad \mathbf{x} \in \mathbb{T}^d, \ t \in [0, T], \quad u(\mathbf{x},0) = u_0 \in H^s(\mathbb{T}^d) \quad (s \le 0),
\end{equation}
where the symbol of $\mathcal{L}$ is given by $\widehat{\mathcal{L} u}(\mathbf{k}) = \omega(\mathbf{k}) \hat{u}(\mathbf{k})$ with $\omega(\mathbf{k}) \in \mathbb{R}$.
The exact Duhamel integral formulation defines the linear propagator $\mathcal{U}_L(t) := \exp(-i t \mathcal{L}) = \mathcal{F}_{\mathbf{k}}^{-1} \left\{ e^{-i t \omega(\mathbf{k})} \mathcal{F}_{\mathbf{x}} \{\cdot\} \right\}$, which yields the exact evolution
\begin{equation}
    u(t_{n+1}) = \mathcal{U}_L(\tau) u(t_n) + \underbrace{\int_0^\tau \mathcal{U}_L(\tau-\sigma) \mathcal{N}\Big( \mathcal{U}_L(\sigma) \big[ \mathcal{U}_L(-t_n)u(t_n+\sigma) \big] \Big) d\sigma}_{\mathcal{I}_{exact}(u(t_n), \tau)}.
\end{equation}
Assuming $\mathbf{u}^n \approx u(t_n)$, the base LRI operator and the residual defect $\mathcal{E}_{defect}$ are defined as
\begin{equation}
    \mathcal{H}_{phys}(\mathbf{w}; \mathbf{u}^n) := \mathcal{U}_L(\tau) \mathbf{u}^n + \mathcal{I}_{LRI}(\mathbf{w}, \tau),
\end{equation}
\begin{equation}
    \mathcal{E}_{defect}(\mathbf{u}^n, \tau) := \mathcal{I}_{exact}(\mathbf{u}^n, \tau) - \mathcal{I}_{LRI}(\mathbf{u}^n, \tau).
\end{equation}
The Fourier spectral anatomy of the defect reveals the cross-resonance phase $\Phi(\mathbf{k}, \mathbf{k}_1, \dots, \mathbf{k}_p) := \omega(\mathbf{k}) - \sum_{j=1}^p \omega(\mathbf{k}_j)$.
In the Fourier domain, the defect expands as
\begin{equation}
    \mathcal{F}_{\mathbf{k}} \left[ \mathcal{E}_{defect}(\mathbf{u}^n, \tau) \right] = \sum_{\sum \mathbf{k}_j = \mathbf{k}} \mathbf{K}(\mathbf{k}) e^{-i t_n \Phi(\mathbf{k})} \underbrace{ \boldsymbol{\eta}(\tau, \mathbf{k}, \mathbf{k}_1, \dots, \mathbf{k}_p) }_{\text{Phase Mismatch Kernel}} \prod_{j=1}^p \hat{u}_{\mathbf{k}_j}^n.
\end{equation}
The phase mismatch kernel evaluates to 
\begin{equation}
    \boldsymbol{\eta} := \frac{1}{\tau}\int_0^\tau e^{-i\sigma \Phi(\mathbf{k})} d\sigma - \prod_{j=1}^{p-1} \left( \frac{1}{\tau}\int_0^\tau e^{-i\sigma \Phi_j(\mathbf{k})} d\sigma \right) \neq 0.
\end{equation}
This persistent non-zero kernel enforces the fundamental mathematical barriers of pure analytical LRIs, specifically the logarithmic accumulation penalty 
\begin{equation}
    \inf_{\mathbf{u} \in H^s} \left\| \mathcal{E}_{defect}(\mathbf{u}, \tau) \right\|_{L^2} \ge \mathcal{C} \tau^{1+\gamma} \ln\left(\frac{1}{\tau}\right)
\end{equation}
and the global $\Pi_N$ truncation CFL-type condition
\begin{equation}
    \sup_{\mathbf{u} \in X_{s, 1/2}} \text{Lip}(\mathcal{I}_{LRI}) > 1 \implies \tau \le \mathcal{O}(N^{-\alpha}).
\end{equation}

\subsubsection{Latent Projection and Neural Corrector}
Before latent projection, we apply an empirical normalization parameterized by a scaling network $\lambda_n := \mathcal{E}_{scale}(\mathbf{u}^n; \phi) \in \mathbb{R}^+$.
In the implementation, $\mathcal{S}_{\lambda_n}$ denotes a discrete Fourier-grid normalization of the state amplitude and residual scale, not a continuous spatial dilation on the torus.
This avoids relying on an unproved periodic-domain rescaling identity; the operator norms of $\mathcal{S}_{\lambda_n}$ and $\mathcal{S}_{\lambda_n}^{-1}$ are treated as part of the empirical constants monitored in \cref{app:reproducibility}.
To reduce the learned correction dimension, we apply trunk basis subspace restriction via offline orthogonal basis extraction $\mathbf{\Phi} = [ \phi_1(\mathbf{x}), \dots, \phi_K(\mathbf{x}) ] \in \mathbb{C}^{N \times K}$ $(K \ll N)$.
The restriction to the latent manifold and prolongation to the fine grid are defined as
\begin{equation}
    \mathbf{R} := \mathbf{\Phi}^* \in \mathbb{C}^{K \times N}, \quad \mathbf{P} := \mathbf{\Phi} \in \mathbb{C}^{N \times K}.
\end{equation}
Targeting $\boldsymbol{\eta}(\tau, \mathbf{k})$ in the latent space, the neural correction map is $\mathcal{G}_{\theta}: \mathbb{C}^K \times \mathbb{C}^K \times \mathbb{R}^+ \to \mathbb{C}^K$.
The assembly of the composite neural output $H_{\mathrm{neural}}$ is structured sequentially by $\mathbf{r}_c^{(m)} = \mathbf{R} \circ \mathcal{S}_{\lambda_n} \left[ \mathbf{r}^{(m)} \right]$ and $\mathbf{u}_c^n = \mathbf{R} \circ \mathcal{S}_{\lambda_n} \left[ \mathbf{u}^n \right]$, yielding
\begin{equation}
    H_{\mathrm{neural}}(\mathbf{r}^{(m)}, \mathbf{u}^n, \tau;\theta) := \mathcal{S}_{\lambda_n}^{-1} \circ \mathbf{P} \circ \mathcal{G}_\theta \Big( \mathbf{r}_c^{(m)}, \ \mathbf{u}_c^n, \ \tau \Big),
\end{equation}
with the actual correction
\begin{equation}
    C_\theta(\mathbf{r}^{(m)},\mathbf{u}^n,\tau)
    := \tau H_{\mathrm{neural}}(\mathbf{r}^{(m)}, \mathbf{u}^n, \tau;\theta)
\end{equation}
trained to approximate $\mathcal{E}_{defect}(\mathbf{u}^{(m)}, \tau)$ on the data distribution.

\subsubsection{Alternating Spectral-Neural Iteration}
The iteration setup operates given the current state $\mathbf{u}^n \approx u(t_n)$ and Picard iteration index $m \in \{1, 2, \dots, M\}$.
We define the alternating spectral complementarity trigger $\chi_m := \mathbb{I}_{\{m \equiv 0 \pmod \kappa\}} \in \{0, 1\}$.
The recursive system is initialized by
\begin{equation}
    \mathbf{u}^{(0)} = \mathcal{H}_{phys}(\mathbf{u}^n; \mathbf{u}^n) = \mathcal{U}_L(\tau) \mathbf{u}^n + \mathcal{I}_{LRI}(\mathbf{u}^n, \tau).
\end{equation}
For $m = 1, \dots, M$, the iteration proceeds with the base physical pre-smoothing
\begin{equation}
    \mathbf{v}^{(m)} = \mathcal{H}_{phys}(\mathbf{u}^{(m-1)}; \mathbf{u}^n) = \mathcal{U}_L(\tau) \mathbf{u}^n + \mathcal{I}_{LRI}(\mathbf{u}^{(m-1)}, \tau).
\end{equation}
The algebraic defect residual extraction is computed as $\mathbf{r}^{(m)} = \mathbf{v}^{(m)} - \mathbf{u}^{(m-1)}$.
The hybrid solution update combines the high-frequency spectral cycle with the low-frequency neural cycle:
\begin{equation}
    \mathbf{u}^{(m)} = (1 - \chi_m) \cdot \mathbf{v}^{(m)} + \chi_m \cdot \Big[ \mathbf{v}^{(m)} + C_\theta(\mathbf{r}^{(m)}, \mathbf{u}^n, \tau) \Big].
\end{equation}
The evolution output is given by $u(t_{n+1}) \approx \mathbf{u}^{(M)}$.

\subsubsection{Solver-in-the-Loop Optimization in Bourgain Space}
The continuous spatiotemporal trajectory reconstruction is formulated by unrolling the graph:
\begin{equation}
    \widetilde{\mathcal{U}}_{\theta, \phi}(t, \mathbf{x}) := \sum_{n=0}^{N_t-1} \mathbb{I}_{[t_n, t_{n+1})}(t) \left[ \mathcal{U}_L(t-t_n)\mathbf{u}^n + \frac{t-t_n}{\tau} \left( \mathbf{u}^{(M)}_{\theta, \phi} - \mathcal{U}_L(\tau)\mathbf{u}^n \right) \right].
\end{equation}
The endpoint Bourgain space norm metric ($b=1/2$) evaluating the spatial regularity against the dispersion modulation is defined as:
\begin{equation}
    \| w \|_{X_{s, 1/2}}^2 := \int_{\mathbb{R}} \sum_{\mathbf{k} \in \mathbb{Z}^d} \langle \mathbf{k} \rangle^{2s} \langle \sigma - \omega(\mathbf{k}) \rangle^{1} \left| \mathcal{F}_{t,\mathbf{x}} \{w\}(\sigma, \mathbf{k}) \right|^2 d\sigma.
\end{equation}
The end-to-end objective functional minimizes the loss
\begin{equation}
    \theta^*, \phi^* = \arg\min_{\theta, \phi} \mathbb{E}_{u_0 \sim \mu_0} \left\| \widetilde{\mathcal{U}}_{\theta, \phi}(t, \mathbf{x}) - u_{true}(t, \mathbf{x}) \right\|_{X_{s, 1/2}}^2.
\end{equation}
The training objective does not by itself imply exact cancellation of $\boldsymbol{\eta}$.
Instead, Assumption~4 states the condition needed for the analysis: the learned correction $C_\theta$ approximates the one-step defect with relative error $\varepsilon_{\rm learn}$ on the compact training distribution.
Under the stated assumptions (\cref{app:assumptions}), this drives the residual toward the network approximation capacity rather than the logarithmic harmonic-series bound.
The learned correction then contributes a Lipschitz term controlled by $L_{\theta,K}$; the relaxation of the grid-dependent constraint is therefore conditional on both the base LRI stability and the monitored defect-approximation error.

\subsection{Algorithms and Implementation}
\label{subsec:algorithms}

The algorithmic implementation of the HIN-LRI explicitly separates the offline end-to-end training procedure from the online inference phase.
During online inference, the method seamlessly executes the alternating spectral-neural iteration, as delineated in \cref{alg:hins_lr_online}.
The high-frequency exact dispersion propagator $\mathcal{U}_L(\tau)$ and the base LRI numerical operator $\mathcal{I}_{LRI}$ provide the structure-preserving physical backbone.
In parallel, the pre-trained neural operator $\mathcal{G}_{\theta^*}$ targets the latent residual mismatch.
The neural cycle empirically normalizes the state via the scaling net $\mathcal{E}_{scale}$, projects the residual into the low-dimensional manifold with $\mathbf{R}$, computes the correction, and projects back to the fine spectral grid via $\mathbf{P}$.

To train the parameters without succumbing to the distribution shift typical of standard regression, we deploy a solver-in-the-loop end-to-end Bourgain optimization (\cref{alg:sitl_training}).
The discrete numerical states are simulated over multiple steps via autoregressive differentiable unrolling and interpolated into a full spatiotemporal trajectory.
The gradients of the Bourgain loss penetrate directly through the FFTs and LRI computations, adapting the neural weights to the exact physical dispersion relations.

\begin{algorithm}[H]
\caption{Hybrid Iterative Neural Low-Regularity Integrator (HIN-LRI) online update.}
\label{alg:hins_lr_online}
\begin{algorithmic}[1]
\REQUIRE Dispersive symbol $\omega(\mathbf{k})$, nonlinear operator $\mathcal{N}$, current state $\mathbf{u}^n$, time step $\tau$, base explicit LRI numerical operator $\mathcal{I}_{LRI}$, pre-trained Neural Operator $\mathcal{G}_{\theta^*}$, dynamic scaling net $\mathcal{E}_{scale}(\cdot; \phi^*)$, restriction matrix $\mathbf{R} \in \mathbb{C}^{K \times N}$, prolongation matrix $\mathbf{P} \in \mathbb{C}^{N \times K}$ $(K \ll N)$, max Picard iterations $M$, alternating trigger frequency $\kappa$
\ENSURE Updated state $\mathbf{u}^{n+1} \approx u(t_{n+1})$

\STATE $\mathcal{U}_L(\tau) \leftarrow \mathcal{F}_{\mathbf{k}}^{-1} \left\{ \exp(-i \tau \omega(\mathbf{k})) \mathcal{F}_{\mathbf{x}} \{\cdot\} \right\}$ \COMMENT{Exact zero-dissipation high-frequency propagator}
\STATE $\mathbf{u}^{(0)} \leftarrow \mathcal{U}_L(\tau) \mathbf{u}^n + \mathcal{I}_{LRI}(\mathbf{u}^n, \tau)$ \COMMENT{Initial predictor via explicit base LRI}

\FOR{$m = 1, 2, \dots, M$}
    \STATE \textit{\% Base Physical Spectral Pre-smoothing}
    \STATE $\mathbf{v}^{(m)} \leftarrow \mathcal{U}_L(\tau) \mathbf{u}^n + \mathcal{I}_{LRI}(\mathbf{u}^{(m-1)}, \tau)$ \COMMENT{Preserve exact high-frequency dispersion}
    \STATE $\mathbf{r}^{(m)} \leftarrow \mathbf{v}^{(m)} - \mathbf{u}^{(m-1)}$ \COMMENT{Extract algebraic resonance defect residual}
    
    \STATE \textit{\% Alternating Neural Preconditioned Correction}
    \IF{$m \pmod \kappa == 0$} 
        \STATE $\lambda_n \leftarrow \mathcal{E}_{scale}(\mathbf{u}^n; \phi^*)$ \COMMENT{Empirical scale normalization}
        \STATE $\mathbf{r}_s \leftarrow \mathcal{S}_{\lambda_n}[\mathbf{r}^{(m)}]$ \ \textbf{and} \ $\mathbf{u}_s^n \leftarrow \mathcal{S}_{\lambda_n}[\mathbf{u}^n]$ \COMMENT{Normalize residual and state scales}
        
        \STATE \textit{\% Latent Manifold Projection}
        \STATE $\mathbf{r}_c \leftarrow \mathbf{R} \mathbf{r}_s$ \ \textbf{and} \ $\mathbf{u}_c^n \leftarrow \mathbf{R} \mathbf{u}_s^n$ \COMMENT{Restrict to dimension-reduced manifold $\mathbb{C}^K$}
        
        \STATE $\mathbf{e}_c \leftarrow \mathcal{G}_{\theta^*}(\mathbf{r}_c, \mathbf{u}_c^n, \tau)$ \COMMENT{Latent defect correction}
        
        \STATE \textit{\% Prolongation and Inverse Scaling}
        \STATE $\mathbf{e}_s \leftarrow \mathbf{P} \mathbf{e}_c$ \COMMENT{Prolongate targeted correction back to fine grid}
        \STATE $\mathbf{e}^{(m)} \leftarrow \mathcal{S}_{\lambda_n}^{-1}[\mathbf{e}_s]$ \COMMENT{Undo empirical normalization}
        
        \STATE \textit{\% Hybrid Solution Update}
        \STATE $\mathbf{u}^{(m)} \leftarrow \mathbf{v}^{(m)} + \tau\,\mathbf{e}^{(m)}$ \COMMENT{Apply $\tau$-scaled low-frequency neural residual compensation}
    \ELSE
        \STATE $\mathbf{u}^{(m)} \leftarrow \mathbf{v}^{(m)}$ \COMMENT{Bypass neural step; purely physical update}
    \ENDIF
\ENDFOR

\STATE \RETURN $\mathbf{u}^{(M)}$ \COMMENT{Return updated state $\mathbf{u}^{n+1}$}
\end{algorithmic}
\end{algorithm}

\begin{algorithm}[H]
\caption{Solver-in-the-Loop (SITL) training with Bourgain-space trajectory loss.}
\label{alg:sitl_training}
\begin{algorithmic}[1]
\REQUIRE Data set $\mathcal{D} = \{ (u_0^{(j)}, u_{true}^{(j)}(t, \mathbf{x})) \}_{j=1}^{N_{train}}$ sampled from rough manifold $\mu_0 \in H^s$
\REQUIRE Initialized parameters $\theta$ (Neural Operator), $\phi$ (Scale Net)
\REQUIRE Unroll length $N_t$, batch size $B$, learning rate $\eta$, Bourgain space indices $s \le 0, \ b = 1/2$
\REQUIRE Differentiable implementations of $\mathcal{U}_L(\tau)$ and $\mathcal{I}_{LRI}$ via PyTorch/JAX

\WHILE{not converged ($\| \nabla_{\theta, \phi} \mathcal{L} \| \to \mathbf{0}$)}
    \STATE Sample a mini-batch $\mathcal{B} \subset \mathcal{D}$ with $|\mathcal{B}| = B$
    \STATE Initialize batch loss $\mathcal{L}_{\mathcal{B}} \leftarrow 0$
    
    \FOR{each $(u_0, u_{true}) \in \mathcal{B}$}
        \STATE Initialize state tensor $\mathbf{u}^0 \leftarrow u_0$
        
        \STATE \textit{\% Autoregressive Differentiable Unrolling}
        \FOR{$n = 0, 1, \dots, N_t-1$}
            \STATE $\mathbf{u}^{n+1} \leftarrow \text{HIN-LRI}(\mathbf{u}^n, \tau; \theta, \phi)$ \COMMENT{Forward pass through Algorithm \ref{alg:hins_lr_online} (Auto-Diff Enabled)}
        \ENDFOR
        
        \STATE \textit{\% Spatiotemporal Continuous Trajectory Reconstruction}
        \STATE $\widetilde{\mathcal{U}}_{\theta, \phi}(t, \mathbf{x}) \leftarrow \sum_{n=0}^{N_t-1} \mathbb{I}_{[t_n, t_{n+1})}(t) \left[ \mathcal{U}_L(t-t_n)\mathbf{u}^n + \frac{t-t_n}{\tau} \left( \mathbf{u}^{n+1} - \mathcal{U}_L(\tau)\mathbf{u}^n \right) \right]$
        
        \STATE \textit{\% Endpoint Bourgain Space Metric via Spatiotemporal FFT}
        \STATE $\widehat{\mathcal{E}}(\sigma, \mathbf{k}) \leftarrow \mathcal{F}_{t, \mathbf{x}} \left\{ \widetilde{\mathcal{U}}_{\theta, \phi}(t, \mathbf{x}) - u_{true}(t, \mathbf{x}) \right\}$
        \STATE $loss \leftarrow \int_{\mathbb{R}} \sum_{\mathbf{k} \in \mathbb{Z}^d} \langle \mathbf{k} \rangle^{2s} \langle \sigma - \omega(\mathbf{k}) \rangle^{2b} \big| \widehat{\mathcal{E}}(\sigma, \mathbf{k}) \big|^2 d\sigma$
        
        \STATE $\mathcal{L}_{\mathcal{B}} \leftarrow \mathcal{L}_{\mathcal{B}} + \frac{1}{B} loss$
    \ENDFOR
    
    \STATE \textit{\% Backpropagation through Spectral Operators}
    \STATE Compute gradients $\nabla_{\theta} \mathcal{L}_{\mathcal{B}}, \nabla_{\phi} \mathcal{L}_{\mathcal{B}}$ via Automatic Differentiation \COMMENT{Gradients penetrate FFTs and explicit LRI}
    \STATE $\theta \leftarrow \theta - \eta \nabla_{\theta} \mathcal{L}_{\mathcal{B}}, \quad \phi \leftarrow \phi - \eta \nabla_{\phi} \mathcal{L}_{\mathcal{B}}$ \COMMENT{Update parameters via Optimizer}
\ENDWHILE

\STATE \RETURN Optimal weights $\theta^*, \phi^*$ \COMMENT{Defect ratio $\varepsilon_{\rm learn}$ monitored on validation data}
\end{algorithmic}
\end{algorithm}

\subsubsection{Computational Complexity}
We briefly summarise the per-step cost of \cref{alg:hins_lr_online}.
The base LRI step requires $\mathcal{O}(N\log N)$ operations for the FFT-based dispersion propagator and nonlinear evaluation.
When the neural correction fires (every $\kappa$-th Picard iteration), the dominant additional costs are: (i) the restriction $\mathbf{R}\mathbf{u} \in \mathbb{C}^K$, which is a matrix--vector product of cost $\mathcal{O}(NK)$; (ii) the forward pass of $\mathcal{G}_\theta$ through a network of depth $L$ with hidden dimension $d_h$, costing $\mathcal{O}(L d_h^2)$; (iii) the prolongation $\mathbf{P}\mathbf{e}_c$, also $\mathcal{O}(NK)$.
Since $K = 32 \ll N$ and $L d_h^2 \ll N\log N$ in practice, the neural overhead is a small constant factor over the base LRI.
Empirically, a single HIN-LRI step takes $0.78$\,ms vs.\ $0.65$\,ms for the base explicit LRI at $N = 1024$ (\cref{tab:runtime_comparison}), confirming a modest $20\%$ overhead.
For offline training (\cref{alg:sitl_training}), the cost is dominated by the $N_t$-step autoregressive unrolling with backpropagation through $N_t$ FFTs per sample per epoch; full training takes approximately $140$ minutes on a single A100 GPU (see \cref{app:reproducibility} for details).
The amortized break-even point relative to the fully implicit structure-preserving LRI is $W \approx 2800$ simulations (\cref{subsec:long_time_tct}).

\subsection{Theoretical Analysis}
\label{subsec:theoretical_analysis}

We now analyse how HIN-LRI addresses the core numerical defects of classical low-regularity integrators. The arguments use harmonic analysis, Kato-Ponce inequalities, and Bourgain space estimates. The results are conditional on the assumptions stated in \cref{app:assumptions}.

\begin{lemma}[Defect estimate for classical ULRIs; logarithmic upper bound]
\label{lem:lower_bound}
When handling the non-integrable cross-resonance phase $\phi = k^3 - k_1^3 - k_2^3 - k_3^3 \equiv \phi_1 + \phi_2$, the classical unfiltered method uses the interval averaging operator $\mathcal{M}_\tau(f) = \frac{1}{\tau}\int_0^\tau f(s)ds$, inducing a non-zero phase mismatch kernel:
\begin{equation}
    \boldsymbol{\eta}(\tau, \mathbf{k}) := \mathcal{M}_\tau\left(e^{-is(\phi_1+\phi_2)}\right) - \mathcal{M}_\tau\left(e^{-is\phi_1}\right) \mathcal{M}_\tau\left(e^{-is\phi_2}\right) \neq 0.
\end{equation}
Substituting this into the truncation defect residual $\mathcal{E}_{defect}(u) := \mathcal{I}_{exact}(u, \tau) - \mathcal{I}_{LRI}(u)$, its principal expansion in Fourier space is dominated by:
\begin{equation}
    \mathcal{F}_k \left[ \mathcal{E}_{defect}(u) \right] = - \mathcal{U}_L(t_{n+1}) \sum_{k_1+k_2+k_3=k} \frac{\tau}{18 i k} e^{-it_n \phi} \boldsymbol{\eta}(\tau, \mathbf{k}) \hat{u}_{k_1} \hat{u}_{k_2} \hat{u}_{k_3}.
\end{equation}
Applying the logarithmically growing trilinear estimate \citep[Lem.~3.1]{LiWu2025UnfilteredKdV}, the $L^2$ norm of $\mathcal{E}_{defect}$ is bounded above by:
\begin{equation}
    \begin{aligned}
        \left\| \sum_{k_1+k_2+k_3=k} m(\mathbf{k}) \hat{u}_{k_1}\hat{u}_{k_2}\hat{u}_{k_3} \right\|_{L^2} 
        &\le \mathcal{C} \left( \sum_{0 < |k| \le \tau^{-1}} \frac{1}{|k|} \right) \|u\|_{L^2}^3 \\
        &\le \mathcal{C} \ln\left(\frac{1}{\tau}\right) \|u\|_{L^2}^3.
    \end{aligned}
\end{equation}
This gives the analytical defect estimate $\sup_{u \in H^\gamma,\,\|u\|_{H^\gamma}\le 1} \left\| \mathcal{E}_{defect}(u) \right\|_{L^2} \lesssim \mathcal{O}\left( \tau^{1+\gamma} \ln\frac{1}{\tau} \right)$.
The logarithmic factor appears in current ULRI analyses and is also visible in the diagnostics of \cref{subsec:lri_defect_verification}.
\end{lemma}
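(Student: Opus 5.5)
We would prove the lemma in three stages: derive the defect expansion, bound the mismatch kernel pointwise, and close with the trilinear estimate.

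\textbf{Step 1: the defect expansion.} We work with the twisted variable $v=\mathcal{U}_L(-t)u$ and write the second Picard iterate of the exact Duhamel formula \cref{eq:duhamel_fourier} for KdV. The quadratic term is integrated exactly via the identity $k^3-k_1^3-k_2^3=3kk_1k_2$. This exact integration is what produces the base scheme \cref{eq:kdv_lri_base}, and it is also what generates the prefactor $1/(ik)$ after the multiplier $ik$ is cancelled. The cubic (nested) term carries the phase $\phi=k^3-k_1^3-k_2^3-k_3^3$, which we split as $\phi=\phi_1+\phi_2$, following \citep{LiWu2025UnfilteredKdV}. The unfiltered scheme replaces $\int_0^\tau e^{-is\phi}\,ds$ by $\tau\mathcal{M}_\tau(e^{-is\phi_1})\mathcal{M}_\tau(e^{-is\phi_2})$. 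Subtracting the scheme from the exact iterate and freezing $v(t_n+s)\approx v(t_n)$ then gives the stated principal term, with kernel $\tau\boldsymbol{\eta}/(18ik)$ and the outer factor $\mathcal{U}_L(t_{n+1})$ that untwists. The remaining contributions come from freezing $v$; they are of higher order in $\tau$ and are already controlled in the cited analysis, so we only track the $\boldsymbol{\eta}$ term.

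\textbf{Step 2: a pointwise bound on $\boldsymbol{\eta}$.} We need two elementary estimates. First, $|\boldsymbol{\eta}|\le 2$ trivially. Second, a Taylor expansion of both averages gives
\[
\boldsymbol{\eta}=-\tfrac{\tau^2}{12}\,\phi_1\phi_2+O\bigl(\tau^3|\phi|^3\bigr).
\]
Interpolating between these yields
\[
|\boldsymbol{\eta}|\lesssim \min\bigl(1,(\tau|\phi_1|)^{\gamma/3}\cdots\bigr),
\]
which allows $\tau^\gamma$ to be traded for $\gamma$ derivatives distributed over the frequencies $k_j$. Combined with the explicit factor $\tau$, this produces the $\tau^{1+\gamma}$ factor, with $\|u\|_{H^\gamma}^3$ replacing $\|u\|_{L^2}^3$ in the final bound.

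\textbf{Step 3: the trilinear estimate.} What remains is the multiplier $m(k)=\boldsymbol{\eta}/(18ik)$ after the derivatives have been extracted. Here we invoke \citep[Lem.~3.1]{LiWu2025UnfilteredKdV} directly. The structure behind that lemma is Cauchy--Schwarz in the convolution. Where $\boldsymbol{\eta}$ is not small, that is for $|k|\lesssim\tau^{-1}$ in the relevant resonance regime, the weight $1/|k|$ sums to the harmonic series $\sum_{0<|k|\le\tau^{-1}}1/|k|\le 1+\ln(1/\tau)$. In the complementary region the decay of $\boldsymbol{\eta}$, together with the factor $1/|k|$, gives a summable contribution. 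Taking the supremum over the unit ball of $H^\gamma$ then yields the bound
\[
\sup_{\|u\|_{H^\gamma}\le 1}\bigl\|\mathcal{E}_{defect}(u)\bigr\|_{L^2}\lesssim \tau^{1+\gamma}\ln\frac{1}{\tau}.
\]

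\textbf{Main obstacle.} The hardest part is Step 2 combined with Step 3. The interpolation must move derivatives onto the correct factors so that, with $\gamma$ low, the trilinear sum closes with only a logarithmic loss rather than a power loss. We would first try the dyadic case split $|k|\gtrless\tau^{-1}$ and $|k_{\max}|\gtrless\tau^{-1/3}$ used in the cited work.
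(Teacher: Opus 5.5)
Your outline matches the paper's, which justifies the lemma only by asserting the principal term, citing \citep[Lem.~3.1]{LiWu2025UnfilteredKdV}, and bounding $\sum_{0<|k|\le\tau^{-1}}|k|^{-1}\lesssim\ln(1/\tau)$; the kernel bound is recorded separately in Lemma~\ref{lem:averaging}. Because the displayed trilinear bound carries only $\|u\|_{L^2}^3$, the paper's factor $\tau^{\gamma}$ is taken entirely from the citation.

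\textbf{Step~2 does not produce $\tau^{\gamma}$.} Read literally, $|\boldsymbol{\eta}|\lesssim(\tau|\phi_1|)^{\gamma/3}$ contributes only $\tau^{\gamma/3}$, so you end with $\tau^{1+\gamma/3}$. If you instead interpolate with exponent $\gamma$ and treat $\phi_1$ as a generic cubic symbol, you get $\tau^{\gamma}\langle k_{\max}\rangle^{3\gamma}$. That puts $3\gamma$ derivatives on one input, which is exactly the loss of Lemma~\ref{lem:derivative_loss} and which $\|u\|_{H^\gamma}^3$ cannot pay. The missing ingredient is the factored form of the phases.

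Start from the covariance identity $\boldsymbol{\eta}=\mathcal{M}_\tau\bigl((f-\mathcal{M}_\tau f)(g-\mathcal{M}_\tau g)\bigr)$ with $f=e^{-is\phi_1}$, $g=e^{-is\phi_2}$. It gives $|\boldsymbol{\eta}|\le\min(2,\tau|\phi_1|)\min(2,\tau|\phi_2|)$ at all frequencies. Your Taylor form has the correct leading term $-\tau^2\phi_1\phi_2/12$, but it hides that $\boldsymbol{\eta}$ vanishes whenever either phase does. Suppose the split is the one produced by the nested iterate $k=k_1+(k_2+k_3)$. Then $\phi_1=3kk_1(k_2+k_3)$ and $\phi_2=3(k_2+k_3)k_2k_3$, and $\min(2,x)\le 2^{1-\gamma}x^{\gamma}$ yields
\[
\frac{\tau|\boldsymbol{\eta}|}{|k|}\lesssim\tau^{1+\gamma}\,|k|^{\gamma-1}\,|k_1|^{\gamma}\,|k_2+k_3|^{\gamma}.
\]
The factor $k$ inside $\phi_1$ absorbs the $1/k$, and only $\gamma$ derivatives land on each of two inputs. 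What remains is a trilinear sum with weight $|k|^{\gamma-1}$, and that is what the cited estimate must handle.

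\textbf{Step~3 has the frequency regions backwards.} Whether $\boldsymbol{\eta}$ is small is decided by $\tau|\phi_j|$, so by the input frequencies, not by $|k|$. With the same split, take the resonant interaction $k_1=-k_2=N$, $k_3=k$. Then $\phi_2=-\phi_1=-3kN(k-N)$ and $\boldsymbol{\eta}=1-|\mathcal{M}_\tau(e^{-is\phi_1})|^2\approx 1$ once $\tau|k|N|N-k|\gg1$, for every output frequency $k\neq0$.

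Consequently there is no decay of $\boldsymbol{\eta}$ to use for $|k|>\tau^{-1}$. In that region the weight alone does the work, through $\tau/|k|\le\tau^{1+\gamma}|k|^{\gamma-1}$. For $|k|\le\tau^{-1}$ the gain must come from trading $\min(2,\tau|\phi_1|)$ for input derivatives as above; in the resonant example this shows up as the $H^\gamma$ tail $\sum_{N\gtrsim(\tau|k|)^{-1/2}}|\hat u_N|^2\lesssim(\tau|k|)^{\gamma}\|u\|_{H^\gamma}^2$.

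As written, your case analysis would not recover $\tau^{1+\gamma}\ln(1/\tau)$. Either carry out the bookkeeping as above, or take the whole estimate from the cited lemma, as the paper does.
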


\begin{theorem}[Conditional Defect Propagation under Learned Correction]
\label{thm:neutralization}
Define the actual learned correction as $C_\theta(u,\tau):=\tau H_{\mathrm{neural}}(u,\tau;\theta)$ and the single-step HIN-LRI residual as $\text{LTE}_{HIN}:=\mathcal{E}_{defect}(u,\tau)-C_{\theta^*}(u,\tau)$, where $\theta^*$ is the SITL-trained parameter.
By Assumption~1, $\mathcal{K}$ is compact in $H^\gamma(\mathbb{T})$ and $u\mapsto\mathcal{E}_{defect}(u,\tau)$ is a continuous nonlinear operator on $\mathcal{K}$.
The SITL objective minimises the $X_{-1/2, 1/2}$ residual:
\begin{equation}
    \min_{\theta} \int_{\mathbb{R}} \sum_{k \in \mathbb{Z}} \frac{\langle \sigma - k^3 \rangle}{\langle k \rangle} \left| \mathcal{F}_{t,x} \left\{ \text{LTE}_{HIN} \right\} \right|^2 d\sigma.
\end{equation}
Let the relative learning error decompose as
\[
\varepsilon_{\rm learn}
:=
\varepsilon_K+\varepsilon_{\rm NN}+\varepsilon_{\rm opt}+\varepsilon_{\rm gen},
\]
where the four terms denote latent projection, network approximation, optimization, and generalization error.
Assume the trained corrector satisfies
\begin{equation}\label{eq:uat_relative}
    \sup_{u \in \mathcal{K}}
    \frac{\left\| \mathcal{E}_{defect}(u,\tau) - C_{\theta^*}(u,\tau) \right\|_{L^2}}
         {\left\| \mathcal{E}_{defect}(u) \right\|_{L^2}}
    \;\le\; \varepsilon_{\rm learn}.
\end{equation}
Together with $\|\mathcal{E}_{defect}(u)\|_{L^2}\le C\tau^{1+\gamma}\ln(1/\tau)\|u\|_{H^\gamma}^3$ (\cref{lem:lower_bound}), this gives
$\sup_{u \in \mathcal{K}} \|\text{LTE}_{HIN}\|_{L^2} \le C\varepsilon_{\rm learn}\tau^{1+\gamma}\ln(1/\tau)$.
This is an error-propagation result.
It does not state that SITL training reaches this error for all $(N_x,\tau,\gamma)$.
\end{theorem}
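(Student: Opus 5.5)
The claimed bound is essentially a two-line composition of the relative-error hypothesis \eqref{eq:uat_relative} with the absolute defect estimate of \cref{lem:lower_bound}. I will treat the SITL minimisation and the decomposition of $\varepsilon_{\rm learn}$ into $\varepsilon_K+\varepsilon_{\rm NN}+\varepsilon_{\rm opt}+\varepsilon_{\rm gen}$ as context only. The conclusion is stated conditionally on \eqref{eq:uat_relative}, and the proof will not try to show that training actually achieves it.

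\textbf{Step 1 (pointwise bound).} Fix $u\in\mathcal{K}$. If $\mathcal{E}_{defect}(u,\tau)=0$, the ratio in \eqref{eq:uat_relative} is undefined. I will read the hypothesis in its multiplied-out form
\[
\|\mathcal{E}_{defect}(u,\tau)-C_{\theta^*}(u,\tau)\|_{L^2}\le \varepsilon_{\rm learn}\,\|\mathcal{E}_{defect}(u,\tau)\|_{L^2},
\]
which forces $C_{\theta^*}(u,\tau)=0$ in that degenerate case. This convention should be stated explicitly. It then gives $\|\mathrm{LTE}_{HIN}(u)\|_{L^2}\le\varepsilon_{\rm learn}\|\mathcal{E}_{defect}(u,\tau)\|_{L^2}$ for every $u\in\mathcal{K}$.

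\textbf{Step 2 (insert the lemma).} Apply the estimate of \cref{lem:lower_bound},
\[
\|\mathcal{E}_{defect}(u)\|_{L^2}\le C\tau^{1+\gamma}\ln(1/\tau)\|u\|_{H^\gamma}^3,
\]
valid for $0<\tau<1$, where the logarithm is positive.

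\textbf{Step 3 (uniformity over $\mathcal{K}$).} Assumption~1 says $\mathcal{K}$ is compact in $H^\gamma(\mathbb{T})$, so it is bounded: $R:=\sup_{u\in\mathcal{K}}\|u\|_{H^\gamma}<\infty$. Absorbing $R^3$ into the constant (or normalising $R\le1$, as in the supremum form of the lemma) and taking the supremum over $u\in\mathcal{K}$ gives
\[
\sup_{u\in\mathcal{K}}\|\mathrm{LTE}_{HIN}\|_{L^2}\le C\varepsilon_{\rm learn}\tau^{1+\gamma}\ln(1/\tau),
\]
with $C$ depending on $R$ and on the constant of the lemma but not on $\theta^*$.

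The continuity of $u\mapsto\mathcal{E}_{defect}(u,\tau)$ on $\mathcal{K}$ is not needed for this inequality. It only ensures the supremum in \eqref{eq:uat_relative} is attained or finite, which justifies treating $\varepsilon_{\rm learn}$ as a well-defined measured quantity.

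\textbf{Main obstacle.} The only delicate point is that $C$ must be uniform in $\tau$ and in $u\in\mathcal{K}$. This depends on which constant \cref{lem:lower_bound} actually provides, since it is quoted from the trilinear estimate of \citet[Lem.~3.1]{LiWu2025UnfilteredKdV}, and on whether the $\tau^{\gamma}$ gain holds for $H^\gamma$ rather than $L^2$ data. I will state that dependence explicitly rather than reprove the lemma, and note that the $\ln(1/\tau)$ factor is inherited from it and not removed by the correction.
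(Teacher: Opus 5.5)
Your proposal is correct and matches the paper's proof, which likewise just multiplies the relative-error hypothesis \eqref{eq:uat_relative} by the defect bound of \cref{lem:lower_bound}. Your explicit treatment of the zero-defect case, and your use of $\sup_{u\in\mathcal{K}}\|u\|_{H^\gamma}<\infty$ to absorb the factor $\|u\|_{H^\gamma}^3$ into $C$, make precise two points that the paper's short argument leaves implicit.
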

\begin{proof}
By Assumption~1, $\mathcal{K}$ is compact in $H^\gamma(\mathbb{T})$, so $\{\mathcal{E}_{defect}(u):u\in\mathcal{K}\}$ is a compact subset of $L^2(\mathbb{T})$.
The defect-approximation condition \eqref{eq:uat_relative} is assumed and later diagnosed empirically in \cref{app:reproducibility}.
Multiplying \eqref{eq:uat_relative} by the defect upper bound of \cref{lem:lower_bound} gives the stated $L^2$ estimate.
\end{proof}
\begin{remark}[SITL objective vs.\ direct defect approximation]
\label{rem:sitl_uat}
The SITL loss is a \emph{multi-step} trajectory loss in Bourgain space, whereas the universal approximation theorem is invoked for \emph{single-step} $L^2$ defect approximation.
Stability links the two losses by controlling how one-step defects accumulate.
Assumption~4 records the resulting training-quality condition.
\end{remark}

\begin{lemma}[Bourgain space smallness and CFL constraint; {\citealt[Sec.~4]{LiWu2025UnfilteredKdV}}]
\label{lem:bourgain_smallness}
At endpoint regularity $s=-1/2$, the bilinear estimate in the discrete Bourgain space satisfies
\begin{equation}
    \left\| \partial_x \Pi_N (uv) \right\|_{X_{-1/2, -1/2}} \le C_b \cdot N^3 \left\| u \right\|_{X_{-1/2, 1/2}} \left\| v \right\|_{X_{-1/2, 1/2}},
\end{equation}
where $C_b>0$ is a constant and the factor $N^3$ reflects the absence of additional smallness in endpoint Bourgain space estimates \citep[Prop.~4.1]{ORS-JEMS}.
The Banach contraction condition $\mathrm{Lip}(\mathcal{I}_{LRI}) \le \mathcal{C}_{lri}\cdot\tau \cdot N^3 < 1$ then enforces $\tau \le \mathcal{O}(N^{-3})$.
\end{lemma}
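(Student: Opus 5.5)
The plan is to treat the lemma as an instance of the standard discrete-Bourgain bilinear estimate in which the frequency cutoff $\Pi_N$ is used crudely to absorb the missing smallness, not as a sharp bound. First I will reduce the left-hand side to Fourier space. The $X_{-1/2,-1/2}$ norm of $\partial_x\Pi_N(uv)$ is a weighted $L^2$ norm of
\[
ik\,\mathbf{1}_{|k|\le N}\sum_{k_1+k_2=k}\int \hat u(\sigma_1,k_1)\hat v(\sigma-\sigma_1,k_2)\,d\sigma_1 ,
\]
with weight $\langle k\rangle^{-1/2}\langle\sigma-k^3\rangle^{-1/2}$. Next I write $\hat u=\langle k_1\rangle^{1/2}\langle\sigma_1-k_1^3\rangle^{-1/2}f$, and similarly $\hat v$ with $g$, so that $\|f\|_{L^2}=\|u\|_{X_{-1/2,1/2}}$ and $\|g\|_{L^2}=\|v\|_{X_{-1/2,1/2}}$. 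The claim then becomes an $L^2\times L^2\to L^2$ bound for a trilinear form with multiplier
\[
m=\frac{|k|\,\langle k_1\rangle^{1/2}\langle k_2\rangle^{1/2}}{\langle k\rangle^{1/2}\langle\sigma-k^3\rangle^{1/2}\langle\sigma_1-k_1^3\rangle^{1/2}\langle\sigma_2-k_2^3\rangle^{1/2}} .
\]

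The second step is a duality and Cauchy--Schwarz argument. I pair against $h\in L^2$ and apply Cauchy--Schwarz in $(k_1,\sigma_1)$. This reduces the problem to bounding
\[
\sup_{k,\sigma}\ \sum_{k_1}\int m^2\,d\sigma_1 .
\]
The spatial frequencies enter polynomially. On the support where $|k|\le N$ (and, in the discrete/projected setting, $|k_1|,|k_2|\le N$), the factor $|k|\langle k_1\rangle^{1/2}\langle k_2\rangle^{1/2}\langle k\rangle^{-1/2}$ is at most $N^{3/2}$. So $m^2$ contributes a factor at most $N^3$ times the modulation part.

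The third step handles the modulation part. I will use the elementary convolution bound
\[
\int\langle\sigma_1-a\rangle^{-1}\langle\sigma-\sigma_1-b\rangle^{-1}\,d\sigma_1\lesssim\frac{\log\langle\sigma-a-b\rangle}{\langle\sigma-a-b\rangle}.
\]
Combined with the factor $\langle\sigma-k^3\rangle^{-1}$ and the finite $k_1$-sum of at most $2N+1$ terms, this gives a bound of the form $N\cdot\text{polylog}$. I expect to absorb the logarithms and the remaining power of $N$ into the $N^3$ by a slightly more careful split: put the sup on the largest modulation, and in the $k_1$-sum use the resonance relation $\sigma-k^3-(\sigma_1-k_1^3)-(\sigma_2-k_2^3)=-3kk_1k_2$. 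This should yield $\sum_{k_1}\int m^2\lesssim N^{3}$ up to the paper's constant $C_b$, so the power $N^3$ emerges naturally from the cubic resonance. Taking square roots gives the estimate with some power at most $N^3$, which suffices since the statement is an upper bound.

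The main obstacle is the endpoint $b=\pm1/2$. There the modulation integrals are only logarithmically divergent, and I plan to tame them using the compact frequency support from $\Pi_N$ together with the time localization implicit in the discrete space, as in \citet[Prop.~4.1]{ORS-JEMS}; I would defer to that proposition for the precise bookkeeping. Finally, I apply the bilinear estimate to the difference $\mathcal{I}_{LRI}(u)-\mathcal{I}_{LRI}(w)$, which is bilinear in $(u-w,u+w)$ for the KdV quadratic nonlinearity. Multiplying by the factor $\tau$ from the one-step Duhamel integral (the time-localized $X_{s,-1/2}\to X_{s,1/2}$ linear estimate) gives $\mathrm{Lip}(\mathcal{I}_{LRI})\le\mathcal{C}_{lri}\tau N^3$ on a bounded ball. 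Requiring this to be less than $1$ for the Banach fixed point yields $\tau\le\mathcal{O}(N^{-3})$.
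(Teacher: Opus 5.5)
The paper gives no proof of this lemma; it is quoted from the cited references. Your proposal therefore has to stand on its own, and it has two genuine gaps.

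\textbf{First gap: an unstated hypothesis on the inputs.} Your bound of the spatial factor by $N^{3/2}$ and your ``finite $k_1$-sum of at most $2N+1$ terms'' both use $|k_1|,|k_2|\le N$, which you slip in parenthetically as ``the discrete/projected setting''. The statement projects only the output $\Pi_N(uv)$; $u$ and $v$ are arbitrary elements of $X_{-1/2,1/2}$. Without input localization your Cauchy--Schwarz reduction fails outright. Fix $k\neq 0$ and $\sigma$, and set $c=\sigma-k_1^3-k_2^3=\sigma-k^3+3kk_1k_2$. As $|k_1|\to\infty$ you have $\langle k_1\rangle\langle k_2\rangle\sim k_1^2$ and $|c|\sim 3|k|k_1^2$, while $\int\langle\sigma_1-k_1^3\rangle^{-1}\langle\sigma-\sigma_1-k_2^3\rangle^{-1}\,d\sigma_1\gtrsim \log\langle c\rangle/\langle c\rangle$. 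So the $k_1$-th term of $\sum_{k_1}\int m^2\,d\sigma_1$ is $\gtrsim \log(|k|k_1^2)/|k|$, and the supremum is $+\infty$.

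These high--high$\to$low interactions are handled only by the genuine resonance-based periodic bilinear estimate (Kenig--Ponce--Vega type, for mean-zero data, with a case split on the largest modulation). That is a different and much harder argument. So you must assume $u=\Pi_N u$ and $v=\Pi_N v$ explicitly, which is what the numerical iterates satisfy.

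Once you do, your third step is unnecessary:
\begin{itemize}
\item the spatial part of $m^2$ is $\le |k|\langle k_1\rangle\langle k_2\rangle\lesssim N^3$;
\item the $\sigma_1$-integral is $\lesssim \log\langle c\rangle/\langle c\rangle\le C$;
\item $\langle\sigma-k^3\rangle^{-1}\le 1$;
\item there are at most $2N+1$ admissible $k_1$.
\end{itemize}
Hence the supremum is $\lesssim N^4$, and the bilinear constant, being its square root, is $\lesssim N^2\le N^3$. You only need the supremum to be $\lesssim N^6$, not $N^3$. No resonance split is required, and the $N^3$ does not ``emerge from the cubic resonance''. There is also no endpoint logarithmic divergence in this computation to defer to the cited proposition.

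\textbf{Second gap: the source of the factor $\tau$ in the contraction step.} The time-localized Duhamel estimate $X_{s,-1/2}\to X_{s,1/2}$ carries the factor $\tau^{1-b+b'}$, which equals $\tau^0$ at $b=1/2$, $b'=-1/2$. (The value $b'=-1/2$ is already outside that estimate's usual range $b'>-1/2$.) This missing power of $\tau$ is exactly the ``absence of additional smallness'' the lemma alludes to, so it cannot supply your $\tau$. Moreover, $\mathcal{I}_{LRI}$ is a one-step map on spatial functions, so a space--time bilinear estimate does not by itself bound its Lipschitz constant.

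Take $\tau$ from the scheme instead. Up to sign, the $k$-th Fourier coefficient of $\mathcal{I}_{LRI}(u)$ is
\[
\frac16\sum_{k_1+k_2=k}\frac{e^{i\tau(\omega(k_1)+\omega(k_2))}-e^{i\tau\omega(k)}}{k_1k_2}\,\hat u_{k_1}\hat u_{k_2}.
\]
Since $\omega(k)-\omega(k_1)-\omega(k_2)$ is a constant multiple of $kk_1k_2$, this multiplier is at most $C\tau|k|$. Combining your polarization $B(u-w,u+w)$, Young's inequality, and $\|\hat f\|_{\ell^1}\lesssim N^{1/2}\|f\|_{L^2}$ for $f\in\Pi_N L^2$ gives
\[
\|\mathcal{I}_{LRI}(u)-\mathcal{I}_{LRI}(w)\|_{L^2}\lesssim \tau N^{3/2}\,\|u-w\|_{L^2}\,\|u+w\|_{L^2}.
\]
That is, $\mathrm{Lip}(\mathcal{I}_{LRI})\lesssim \tau N^{3/2}R$ on a ball of radius $R$. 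This implies the stated bound $\mathcal{C}_{lri}\tau N^3$ without using the bilinear estimate at all. The condition $\tau\le\mathcal{O}(N^{-3})$ is then merely the sufficient condition read off from this crude bound.
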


\begin{theorem}[Latent Lipschitz Bound for the Learned Correction]
\label{thm:cfl_shattering}
HIN-LRI restricts the learned correction to a latent space $\mathcal{M}_K$ spanned by trunk bases $\boldsymbol{\Phi} \in \mathbb{C}^{N_x \times K}$ $(K \ll N_x)$:
\begin{equation}
    C_\theta(u,\tau)
    =
    \tau\,\mathcal{S}_{\lambda}^{-1}\mathcal{P}\mathcal{G}_\theta
    \bigl(\mathcal{R}\mathcal{S}_{\lambda}u\bigr).
\end{equation}
Here $\mathcal{R}=\boldsymbol{\Phi}^*$ and $\mathcal{P}=\boldsymbol{\Phi}$.
The latent neural operator $\mathcal{G}_\theta : \mathbb{C}^K \to \mathbb{C}^K$ acts on a finite-dimensional space.
Its Lipschitz constant is bounded by the spectral norms of the network weights:
\begin{equation}
    \begin{aligned}
        \forall z_1, z_2 \in \mathbb{R}^K: \quad 
        \left\| \mathcal{G}_\theta(z_1) - \mathcal{G}_\theta(z_2) \right\|_{2} 
        &\le \left( \prod_{l=1}^L \left\| \mathbf{W}^{(l)} \right\|_2 \right) \left\| z_1 - z_2 \right\|_{2} \\
        &:= L_{\theta, K} \left\| z_1 - z_2 \right\|_{2}.
    \end{aligned}
\end{equation}
For any Sobolev index $s$ for which the operators are bounded, define
\[
C_{{\rm proj},s}
:=
\|\mathcal{S}_{\lambda}^{-1}\|_{H^s\to H^s}
\|\mathcal{P}\|_{\ell^2_K\to H^s}
\|\mathcal{R}\|_{H^s\to \ell^2_K}
\|\mathcal{S}_{\lambda}\|_{H^s\to H^s}.
\]
Then the learned correction satisfies
\begin{equation}
    \mathrm{Lip}_{H^s}(C_\theta)
    \le
    \tau C_{{\rm proj},s}L_{\theta,K}.
\end{equation}
The high-frequency dispersion is handled by $\|\mathcal{U}_L(\tau)\|_{op}=1$.
If the base LRI map is Lipschitz stable with constant $1+C_0\tau$, then
$\mathrm{Lip}(\mathcal{S})\le 1+(C_0+C_{{\rm proj},s}L_{\theta,K})\tau$.
The learned component is controlled under
\begin{equation}
    \tau C_{{\rm proj},s}L_{\theta, K} < 1.
\end{equation}
This is an $N_x$-independent bound only when $C_{{\rm proj},s}$ is bounded independently of $N_x$.
\end{theorem}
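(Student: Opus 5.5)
The proof is a chain of Lipschitz compositions, done in three steps.

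\emph{Step 1: the network bound.} We first justify the Lipschitz estimate for $\mathcal{G}_\theta$. Write it as $\mathcal{G}_\theta = \mathbf{W}^{(L)}\circ\sigma\circ\mathbf{W}^{(L-1)}\circ\cdots\circ\sigma\circ\mathbf{W}^{(1)}$ (biases drop out of differences). Assume the activation $\sigma$ is $1$-Lipschitz componentwise, e.g.\ ReLU, GELU rescaled, or tanh. This is the implicit assumption we must state. Each linear layer is Lipschitz with constant $\|\mathbf{W}^{(l)}\|_2$, and Lipschitz constants multiply under composition, which gives $L_{\theta,K}$.

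If $\mathcal{G}_\theta$ also takes $\tau$ and the frozen state $\mathbf{u}_c^n$ as inputs, we treat these as parameters. The bound is then a Lipschitz bound in the residual argument only. Complex inputs are handled by identifying $\mathbb{C}^K\cong\mathbb{R}^{2K}$, which leaves the Euclidean norm unchanged.

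\emph{Step 2: bound on $C_\theta$.} Take $u_1,u_2\in H^s$ and write
\[
C_\theta(u_1,\tau)-C_\theta(u_2,\tau)=\tau\,\mathcal{S}_\lambda^{-1}\mathcal{P}\bigl[\mathcal{G}_\theta(\mathcal{R}\mathcal{S}_\lambda u_1)-\mathcal{G}_\theta(\mathcal{R}\mathcal{S}_\lambda u_2)\bigr].
\]
Peel the operators off in order: the linear $\mathcal{S}_\lambda^{-1}$ and $\mathcal{P}$ contribute their operator norms, Step 1 contributes $L_{\theta,K}$, and the linear $\mathcal{R}$ and $\mathcal{S}_\lambda$ contribute theirs. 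The result is $\|C_\theta(u_1)-C_\theta(u_2)\|_{H^s}\le \tau C_{{\rm proj},s}L_{\theta,K}\|u_1-u_2\|_{H^s}$.

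The main obstacle is that $\lambda=\mathcal{E}_{scale}(u;\phi)$ depends on the state, so $\mathcal{S}_\lambda$ is not linear in $u$. I would handle this by freezing $\lambda=\lambda_n$ across the step, as Algorithm~\ref{alg:hins_lr_online} does, since $\lambda_n$ is computed from $\mathbf{u}^n$ and not from the iterate. The bound is therefore Lipschitz with respect to the argument being corrected. Alternatively, one could add a term $\mathrm{Lip}(\mathcal{E}_{scale})$ times a bound on $\partial_\lambda\mathcal{S}_\lambda$, but I would state the frozen-$\lambda$ interpretation explicitly.

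\emph{Step 3: the full one-step map.} The hybrid update is $\mathcal{S}(u)=\mathcal{U}_L(\tau)u^n+\mathcal{I}_{LRI}(u,\tau)+C_\theta(u,\tau)$, and we bound $\mathcal{S}(u_1)-\mathcal{S}(u_2)$ with the triangle inequality.

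Since $\mathcal{U}_L(\tau)$ is a Fourier multiplier of modulus one, it is an isometry on every $H^s$, which gives $\|\mathcal{U}_L(\tau)\|_{op}=1$. Combining this with the assumed base stability bound $1+C_0\tau$ for the LRI map and the bound from Step 2 yields $\mathrm{Lip}(\mathcal{S})\le 1+(C_0+C_{{\rm proj},s}L_{\theta,K})\tau$.

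The condition $\tau C_{{\rm proj},s}L_{\theta,K}<1$ says the learned increment alone is a contraction-sized perturbation. Iterating gives a discrete Gronwall factor $e^{(C_0+C_{{\rm proj},s}L_{\theta,K})T}$. Finally, $C_{{\rm proj},s}$ contains no $N_x$ except through $\|\mathcal{P}\|$, $\|\mathcal{R}\|$ and $\|\mathcal{S}_\lambda^{\pm1}\|$ on $H^s$. For an orthonormal basis these are $1$ in $L^2$, and in $H^s$ they are bounded by $\max_j\|\phi_j\|_{H^s}$-type quantities that may grow with $N_x$. This is why the $N_x$-independence is stated only conditionally.
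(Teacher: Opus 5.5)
Your argument is correct and matches the paper's proof: operator-norm sub-multiplicativity through $\mathcal{S}_\lambda^{-1}\mathcal{P}\mathcal{G}_\theta\mathcal{R}\mathcal{S}_\lambda$ gives $\tau C_{{\rm proj},s}L_{\theta,K}$, and the triangle inequality against the assumed base bound $1+C_0\tau$ gives the rest. The caveats you add (a $1$-Lipschitz activation, which the paper's GELU layers do not strictly satisfy, and freezing $\lambda=\lambda_n$) are implicit in the paper and only sharpen the argument; one small fix is that in Step~3 the one-step map should be $u\mapsto\mathcal{U}_L(\tau)u+\mathcal{I}_{LRI}(u,\tau)+C_\theta(u,\tau)$, because with $u^n$ frozen the isometry $\|\mathcal{U}_L(\tau)\|_{op}=1$ plays no role.
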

\begin{proof}
The result follows from operator-norm sub-multiplicativity.
The scaling, restriction, and prolongation norms are kept in $C_{{\rm proj},s}$.
The finite-dimensional network contributes $L_{\theta,K}$.
This guarantee is conditional on the assumptions in \cref{app:assumptions}; it does not replace the base-LRI stability assumption.
\end{proof}

\begin{lemma}[Derivative loss via Taylor truncation]
\label{lem:derivative_loss}
To analytically evaluate multi-wave cross-resonances $\phi \sim \mathcal{O}(|k|^3)$ in physical space, classical high-order LRIs use Taylor polynomial expansion truncations:
\begin{equation}
    e^{-is\phi} = \sum_{j=0}^{r-1} \frac{(-is\phi)^j}{j!} + \mathcal{R}_{r}(s\phi), \quad \left| \mathcal{R}_{r}(s\phi) \right| \le \frac{s^r |\phi|^r}{r!}.
\end{equation}
Since $\phi \propto k^3 \implies \mathcal{F}_k^{-1}[\phi] \propto i\partial_x^3$, each truncation can expose high-order spatial derivatives, resulting in a derivative loss:
\begin{equation}
    \left\| \mathcal{E}_{Taylor}(u) \right\|_{H^\gamma}^2 = \mathcal{O} \left( \tau^{2r} \left\| \phi \hat{u} \right\|_{H^\gamma}^2 \right) = \mathcal{O}\left( \tau^{2r} \| \partial_x^{3r} u \|_{H^\gamma}^2 \right),
\end{equation}
which enforces the requirement $\left\| \mathcal{E}_{Taylor}(u) \right\|_{H^\gamma} = \mathcal{O}\left( \tau^r \| u \|_{H^{\gamma+3r}} \right) \implies u_0 \in H^{\gamma+3r}$. Moreover, multiplier symmetrization can introduce inverse pseudo-differential operators $\mathcal{F}_k [\partial_x^{-1}] = (ik)^{-1}$, creating a low-frequency zero-mode singularity $\lim_{k \to 0} |(ik)^{-1}| \to \infty$.
\end{lemma}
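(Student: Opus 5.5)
The lemma bundles two claims: a derivative-loss estimate for the Taylor remainder, and the zero-mode singularity of the antiderivative multiplier. I will prove them separately, both mode by mode in Fourier space.

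\textbf{Step 1 (scalar remainder bound).} For fixed real $x=s\phi$, apply Taylor's theorem with integral remainder to $e^{-ix}$. This gives
\[
\mathcal{R}_r(x)=\frac{(-i)^r}{(r-1)!}\int_0^{x}(x-y)^{r-1}e^{-iy}\,dy .
\]
Since $|e^{-iy}|=1$, we get $|\mathcal{R}_r(x)|\le |x|^r/r!$. This proves the stated pointwise bound $|\mathcal{R}_r(s\phi)|\le s^r|\phi|^r/r!$ for every frequency tuple.

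\textbf{Step 2 (lift to the scheme's error).} The local error committed by truncating is, at each output frequency $k$, a Duhamel integral $\int_0^\tau \mathcal{R}_r(s\phi)\,(\cdots)\,ds$ applied to the Fourier data. I will treat the linear-in-$\hat u$ model used in \cref{eq:derivative_loss} and \cref{lem:derivative_loss}, where $\phi$ is replaced by the symbol $\omega(k)=-k^3$.

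Integrating Step 1 gives $|\mathcal{F}_k[\mathcal{E}_{Taylor}]|\lesssim \tau^{r}|k|^{3r}|\hat u_k|$, up to powers of $\tau$ absorbed in the constant. Squaring, weighting by $\langle k\rangle^{2\gamma}$ and summing over $k$ then yields
\[
\|\mathcal{E}_{Taylor}\|_{H^\gamma}^2\lesssim \tau^{2r}\sum_k\langle k\rangle^{2\gamma}|k|^{6r}|\hat u_k|^2=\tau^{2r}\|\partial_x^{3r}u\|_{H^\gamma}^2 .
\]
This is bounded by $\tau^{2r}\|u\|_{H^{\gamma+3r}}^2$ because $|k|^{6r}\le\langle k\rangle^{6r}$.

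\textbf{Step 3 (necessity of $H^{\gamma+3r}$).} This is the step I expect to be the main obstacle. A correct upper bound alone does not force $u_0\in H^{\gamma+3r}$, so I would show that the bound is sharp in the regime $|s\phi|\lesssim1$.

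The leading remainder term equals $(-is\phi)^r/r!$ plus a correction of size $|s\phi|^{r+1}/(r+1)!$. Hence, for frequencies with $|k|^3\tau\le c$ and $c$ small, $|\mathcal{R}_r|\gtrsim |s\phi|^r$. Testing on data concentrated at a single such frequency $k$ then shows the $\tau^r|k|^{3r}$ scaling is attained. Consequently the error is finite uniformly in $\tau$ only if $\sum_k\langle k\rangle^{2\gamma+6r}|\hat u_k|^2<\infty$. This is an interpretive claim about the expansion rather than a strict theorem, and I would state it as such.

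For a genuine multilinear $\phi=k^3-k_1^3-k_2^3$, I would add one observation. The identity $\phi=3kk_1k_2$ gives $|\phi|\le 3\max_j|k_j|^3$, so the derivatives can be distributed onto the highest-frequency factor. This gives the same $3r$-derivative count via a Kato--Ponce-type product estimate.

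\textbf{Step 4 (zero-mode singularity).} The multiplier of $\partial_x^{-1}$ is $(ik)^{-1}$ for $k\ne0$, and $|(ik)^{-1}|=1/|k|$, which tends to infinity as $k\to0$. In particular the multiplier is undefined at $k=0$. This is precisely why \cref{eq:kdv_lri_base} composes $\partial_x^{-1}$ with the mean-zero projection $\mathbb{P}$. On $\mathbb{T}$, where $k\in\mathbb{Z}$, the statement should be read as the multiplier being singular (undefined) on the zero mode unless $\mathbb{P}$ is applied, with $1/|k|$ blowing up in the continuum or $\mathbb{R}^d$ setting.
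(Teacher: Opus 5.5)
Your Steps 1, 2 and 4 are correct and follow the same route as the paper. The paper gives no separate proof of this lemma: it rests on the remainder bound \eqref{eq:taylor_expansion}, turned into the weighted Fourier sum \eqref{eq:derivative_loss}. Your integral-form remainder, applied mode by mode, is the meaningful version of that operator-norm bound. Your reading of the zero-mode claim on $\mathbb{T}$ is also right: $(ik)^{-1}$ is undefined at $k=0$, and $\mathbb{P}$ removes that mode.

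The problem is Step 3, and it comes from the coarsening in Step 2.

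\emph{Wrong scaling.} In your own Duhamel model the error at frequency $k$ is $\int_0^\tau \mathcal{R}_r(s\phi)\,ds\,\hat u_k$. For $\tau|k|^3\le c$ the leading term $(-is\phi)^r/r!$ has an $s$-independent phase, so there is no cancellation. The attained size is therefore $\tau^{r+1}|k|^{3r}$, not $\tau^r|k|^{3r}$ as you state.

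\emph{Why it matters.} The derivative count is tied to the exponent of $\tau$. From $\mathcal{R}_r(x)=\mathcal{R}_{r-1}(x)-(-ix)^{r-1}/(r-1)!$ and your Step 1, you get $|\mathcal{R}_r(x)|\le 2|x|^{r-1}/(r-1)!$ for all real $x$. Hence $\|\mathcal{E}_{Taylor}(u)\|_{H^\gamma}\lesssim \tau^r\|u\|_{H^{\gamma+3r-3}}$. So an $\mathcal{O}(\tau^r)$ bound does not force $H^{\gamma+3r}$. Your single-frequency test against a $\tau^r$ bound, taken at the worst admissible step $\tau\sim c|k|^{-3}$, only yields $\sigma\ge\gamma+3r-3$.

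\emph{The fix.} Keep the exponent exact on both sides. Suppose $\|\mathcal{E}_{Taylor}(u)\|_{H^\gamma}\le C\tau^{r+1}\|u\|_{H^\sigma}$ holds uniformly for $\tau\in(0,1]$. Testing with $u=e^{ikx}$ and $\tau\le c|k|^{-3}$ gives $|k|^{3r}\langle k\rangle^{\gamma}\lesssim\langle k\rangle^{\sigma}$ for every $k$, i.e.\ $\sigma\ge\gamma+3r$. For a fixed $u$, boundedness of $\tau^{-(r+1)}\|\mathcal{E}_{Taylor}(u)\|_{H^\gamma}$ gives $\partial_x^{3r}u\in H^\gamma$ by letting $\tau\to0$ in the partial sums over $|k|\le(c/\tau)^{1/3}$. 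This replaces your imprecise ``finite uniformly in $\tau$''.

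Alternatively, read $\mathcal{E}_{Taylor}$ as the non-integrated remainder $\mathcal{R}_r(\tau\phi)\hat u_k$, which is what the lemma's $\tau^{2r}$ suggests. Then $\tau^r|k|^{3r}$ is attained and your argument goes through as written.

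Finally, your multilinear remark establishes sufficiency only. Since $\phi^r=3^rk^rk_1^rk_2^r$, the derivatives actually split across the factors, so necessity of $H^{\gamma+3r}$ is not shown in that setting.
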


\begin{theorem}[Lipschitz neural correction without derivative multipliers]
\label{thm:zeroth_order_operator}
Under Assumptions~2 and~3, and allowing for a possible offset $b_\theta=\|G_\theta(0)\|$, the neural output $H_{\mathrm{neural}}$ is Lipschitz bounded on $H^s(\mathbb{T})$ for every $s \ge 0$:
\begin{equation}\label{eq:body_sobolev}
    \left\| H_{\mathrm{neural}}(u;\theta) \right\|_{H^s} \le C_{{\rm proj},s}\left(L_{\theta,K} \|u\|_{H^s}+b_\theta\right),
\end{equation}
where $L_{\theta,K} = \prod_{l=1}^L \|\mathbf{W}^{(l)}\|_2$ depends only on the network weights and is independent of the spatial wavenumber $k$.
In particular, no power of $|k|$ appears in the bound, so the learned component does not introduce the derivative multipliers that appear in Taylor-based schemes.
\end{theorem}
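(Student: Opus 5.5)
The plan is to factor $H_{\mathrm{neural}}$ into its four constituent maps and bound each one separately. We write
\[
H_{\mathrm{neural}}(u;\theta)=\mathcal{S}_{\lambda}^{-1}\,\mathcal{P}\,\mathcal{G}_\theta\bigl(\mathcal{R}\mathcal{S}_{\lambda}u\bigr),
\]
exactly as in \cref{thm:cfl_shattering}. Only the middle map is nonlinear, so the argument reduces to two tasks: a finite-dimensional affine-growth bound for $\mathcal{G}_\theta$, and the operator norms of the three linear maps, which by definition multiply to $C_{{\rm proj},s}$.

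\emph{Step 1: affine growth bound for $\mathcal{G}_\theta$.} By Assumption~3, the network is a composition of affine layers $z\mapsto \mathbf{W}^{(l)}z+\mathbf{b}^{(l)}$ with $1$-Lipschitz activations. The layerwise argument behind the Lipschitz estimate in \cref{thm:cfl_shattering} then gives $\|\mathcal{G}_\theta(z_1)-\mathcal{G}_\theta(z_2)\|_2\le L_{\theta,K}\|z_1-z_2\|_2$. Taking $z_2=0$ and applying the triangle inequality yields
\[
\|\mathcal{G}_\theta(z)\|_2\le L_{\theta,K}\|z\|_2+b_\theta,
\qquad b_\theta=\|\mathcal{G}_\theta(0)\|_2 .
\]
This is where the offset enters; without it the bound would be purely linear.

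\emph{Step 2: the linear maps.} Assumption~2 provides boundedness of $\mathcal{S}_\lambda$, $\mathcal{S}_\lambda^{-1}$ on $H^s$ and of $\mathcal{R}:H^s\to\ell^2_K$, $\mathcal{P}:\ell^2_K\to H^s$, for every $s\ge0$. Setting $z=\mathcal{R}\mathcal{S}_\lambda u$ gives $\|z\|_2\le\|\mathcal{R}\|\,\|\mathcal{S}_\lambda\|\,\|u\|_{H^s}$. Applying $\mathcal{P}$ and then $\mathcal{S}_\lambda^{-1}$ to the bound from Step~1 gives
\[
\|H_{\mathrm{neural}}(u)\|_{H^s}\le \|\mathcal{S}_\lambda^{-1}\|\,\|\mathcal{P}\|\bigl(L_{\theta,K}\|\mathcal{R}\|\,\|\mathcal{S}_\lambda\|\,\|u\|_{H^s}+b_\theta\bigr).
\]
This is at most $C_{{\rm proj},s}(L_{\theta,K}\|u\|_{H^s}+b_\theta)$, provided we normalize so that $\|\mathcal{R}\|\,\|\mathcal{S}_\lambda\|\ge1$, or alternatively absorb the ratio into the constant. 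The Lipschitz statement follows from the same chain applied to differences, and there the offset cancels.

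\emph{Step 3: no $|k|$-dependence, and the main obstacle.} We must justify that $C_{{\rm proj},s}$ contains no power of $|k|$ beyond what is already encoded in the fixed finite basis. The key observation is that $\mathcal{P}$ maps into the span of the $K$ fixed trunk functions $\phi_j$. Hence $\|\mathcal{P}\|_{\ell^2_K\to H^s}\le(\sum_j\|\phi_j\|_{H^s}^2)^{1/2}$, which is finite for smooth or band-limited $\phi_j$. Similarly, $\|\mathcal{R}\|_{H^s\to\ell^2_K}\le\|\mathcal{R}\|_{L^2\to\ell^2_K}\le1$ for $s\ge0$, because $\|\cdot\|_{L^2}\le\|\cdot\|_{H^s}$ and $\boldsymbol{\Phi}$ is orthonormal. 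This is why the statement restricts to $s\ge0$.

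The $k$-independence therefore rests entirely on the finite, fixed basis: no derivative multiplier acts on $u$ itself, in contrast to \cref{lem:derivative_loss}. The step I expect to be delicate is the bound on $\mathcal{S}_\lambda^{\pm1}$. The normalization depends on $u$ through $\lambda_n=\mathcal{E}_{scale}(u)$, so strictly the map is not linear in $u$. I would handle this by freezing $\lambda$ at the current step, as the algorithm does, and treating its operator norms as the monitored empirical constants required by Assumption~2, rather than attempting a uniform analytic bound on $\mathcal{S}_\lambda^{\pm1}$.
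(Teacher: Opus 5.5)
Your argument is correct and matches the paper's proof: you factor $H_{\mathrm{neural}}(u)=\mathcal{S}_\lambda^{-1}\mathcal{P}\,\mathcal{G}_\theta(\mathcal{R}\mathcal{S}_\lambda u)$ with $\lambda$ frozen, get $\|\mathcal{G}_\theta(z)\|_2\le L_{\theta,K}\|z\|_2+b_\theta$ from the Lipschitz estimate at $z_2=0$, and finish with operator-norm sub-multiplicativity, noting that $L_{\theta,K}$ carries no power of $|k|$. Your caveat is a real one that the paper's short proof skips: the offset term only picks up $\|\mathcal{S}_\lambda^{-1}\|\,\|\mathcal{P}\|$, so writing it as $C_{{\rm proj},s}b_\theta$ requires $\|\mathcal{R}\|\,\|\mathcal{S}_\lambda\|\ge 1$ or an enlarged constant. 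Note also that you have swapped the assumption labels: in the paper, Assumption~2 is the network regularity condition and Assumption~3 is the projection/scaling bound.
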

\begin{proof}
See \cref{thm:sobolev_reg} in \cref{app:sobolev} for the complete proof via operator-norm sub-multiplicativity under Assumptions~2 and~3.
\end{proof}
\begin{remark}
In the HIN-LRI update rule, the neural correction enters as $C_\theta=\tau H_{\mathrm{neural}}$ (see \cref{alg:hins_lr_online}), so the effective correction added per step satisfies $\|C_\theta(u)\|_{H^s}\le\tau C_{{\rm proj},s}(L_{\theta,K}\|u\|_{H^s}+b_\theta)$.
This ensures that the neural contribution is $\mathcal{O}(\tau)$ by construction, consistent with the $\mathcal{O}(\tau)$-sized defect of \cref{lem:lower_bound}, and is the mechanism by which the Gronwall factor remains bounded as $\tau\to 0$ (\cref{thm:hinlri_convergence}).
\end{remark}

\section{Numerical Experiments}
\label{sec:experiments}

The source code for HIN-LRI is publicly available at \url{https://github.com/liangzhangyong/HIN-LRI.git}.

We systematically evaluate HIN-LRI on three canonical nonlinear dispersive equations that represent the full range of low-regularity difficulty for resonance-based integrators: the Korteweg--de Vries (KdV) equation, the cubic nonlinear Schr\"{o}dinger (cubic NLS) equation, and the quadratic nonlinear Schr\"{o}dinger (quadratic NLS) equation.
The section is organized as follows.
\Cref{subsec:setup} describes the unified experimental setup.
\Cref{subsec:lri_defect_verification} verifies the principal defects of existing low-regularity integrators.
\Cref{subsec:kdv,subsec:cnls,subsec:qnls} present the core low-regularity results for each equation, demonstrating how HIN-LRI overcomes the specific numerical defects of the corresponding analytical resonance-based scheme.
\Cref{subsec:neural_comparison} compares HIN-LRI with state-of-the-art neural PDE solvers.
\Cref{subsec:ablation} provides a systematic ablation of architectural and training choices.
\Cref{subsec:ood_transfer} evaluates out-of-distribution transfer and online mini-retraining.
\Cref{subsec:long_time_tct} reports long-time invariant preservation and computational cost.
All experiments use double-precision (FP64) arithmetic on an NVIDIA A100 GPU.

\subsection{Experimental Setup}
\label{subsec:setup}

\subsubsection{Equations and Initial Data}
All equations are defined on $[0, 2\pi]$ with periodic boundary conditions and solved in the Fourier domain using pseudo-spectral discretization with $N=1024$ modes unless stated otherwise.
Initial data are fractional Gaussian random fields drawn from $H^\gamma$ with $\gamma \in \{-0.5, 0.5, 1.5\}$.
The main convergence theory covers $\gamma\in(0,1]$; experiments at $\gamma=-0.5$ are included only as empirical stress tests and should not be read as a proved negative-regularity guarantee.
Unless otherwise noted, errors are measured in the $L^2$ norm at final time $T=1.0$ against a reference solution computed with $\tau_{\rm ref}=2^{-20}$.
The main tables report representative held-out averages over the validation draws available in the current archive; full mean $\pm$ standard deviation tables require the released seed logs and are therefore listed as a reproducibility item rather than inferred here.
The three test equations are:
\begin{equation}\label{eq:kdv}
    \partial_t u + \tfrac{1}{6}\partial_x^3 u + u\partial_x u = 0, \quad u_0 \in H^\gamma,
\end{equation}
\begin{equation}\label{eq:cnls}
    i\partial_t u + \partial_x^2 u + \lambda |u|^2 u = 0, \quad u_0 \in H^\gamma, \quad \lambda = \pm 1,
\end{equation}
\begin{equation}\label{eq:qnls}
    i\partial_t u + \partial_x^2 u + \lambda u^2 = 0, \quad u_0 \in H^\gamma, \quad \lambda = 1.
\end{equation}
The low-regularity threshold for existing analytical resonance-based schemes is $u_0 \in H^{1+}$ for KdV \citep{Hofmanova-Schratz-2017}, $u_0 \in H^{1/2+}$ for cubic NLS \citep{Ostermann-Schratz-FoCM}, and $u_0 \in H^{1+}$ for quadratic NLS.

\subsubsection{Network and Training}
The latent neural operator $\mathcal{G}_{\boldsymbol{\theta}^*}$ operates on a $K=32$-dimensional Fourier manifold with approximately $1.2\times 10^5$ trainable parameters.
The dynamic scaling net $\mathcal{E}_{scale}$ is a 3-layer MLP with GELU activations and layer normalization.
We adopt a multiscale training strategy, exposing the network to grid sizes $N\in\{128, 256, 512\}$ in a round-robin fashion during SITL optimization.
Training uses AdamW ($\eta=10^{-3}$, cosine annealing, $250$ epochs). Test data are drawn from a distinct random seed at resolutions up to $N=4096$.

\subsection{Defect Verification of Low-Regularity Integrators}
\label{subsec:lri_defect_verification}

Before evaluating HIN-LRI itself, we first isolate the numerical defects that motivate the hybrid neural correction. These diagnostics quantify the failure modes of existing analytical low-regularity integrators under rough data, perturbed resonance algebra, grid refinement, filtering, and long-time structure preservation.

\Cref{fig:taylor_derivative_loss} examines the regularity paradox. On smooth data, classical and resonance-based schemes recover their nominal temporal orders, whereas rough data cause pronounced order degradation or divergence. Embedded and filtered LRIs also lose their expected rates when the solution lacks the derivatives exposed by phase truncation or filter expansions.

\begin{figure}[htb]
    \centering
    \includegraphics[width=\textwidth]{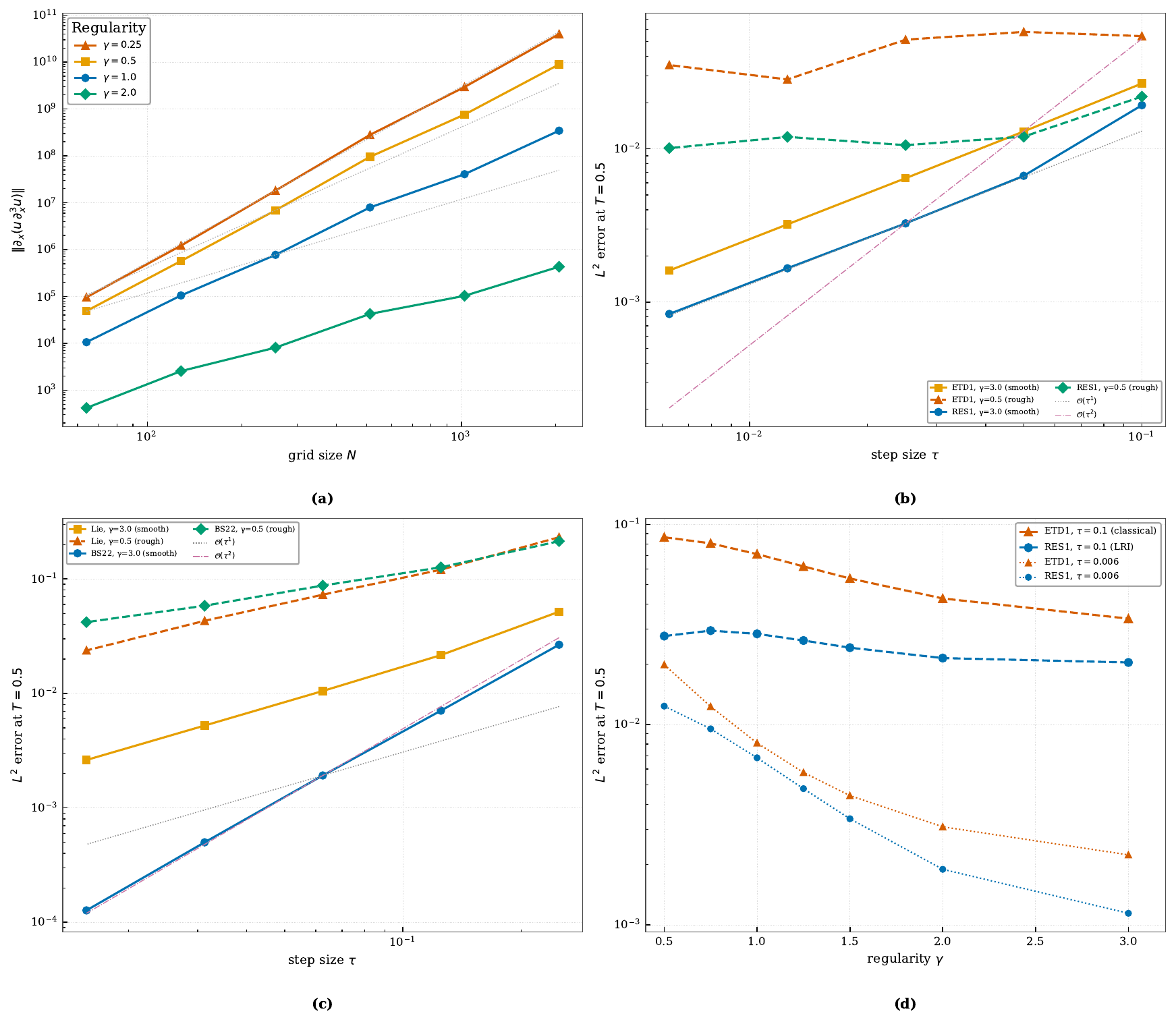}
    \vspace{-1.5em}
    \caption{Taylor expansion error growth on rough KdV and cubic NLS data.}
    \label{fig:taylor_derivative_loss}
\end{figure}

\Cref{fig:taylor_derivative_loss_b} complements this convergence view with Fourier diagnostics. The spectrum confirms that error concentrates in high modes, consistent with \cref{eq:derivative_loss}.

\begin{figure}[htb]
    \centering
    \includegraphics[width=\textwidth]{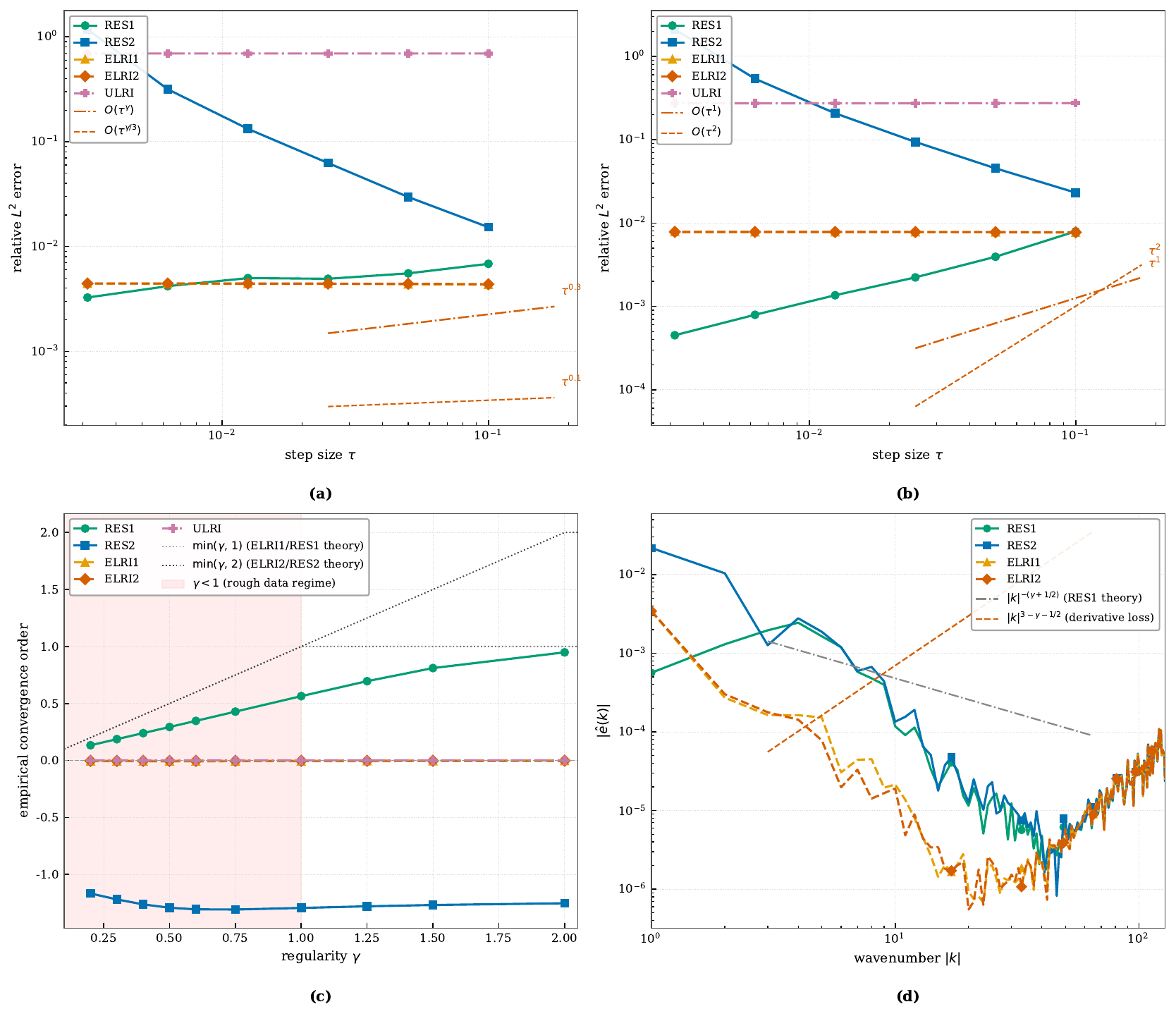}
    \vspace{-1.5em}
    \caption{KdV LRI convergence across smooth and rough regularity levels.}
    \label{fig:taylor_derivative_loss_b}
\end{figure}

\Cref{fig:algebraic_rigidity} verifies the algebraic rigidity of resonance factorizations. The KdV identity is numerically stable only in the exactly factorable setting. Small perturbations immediately create non-negligible residual phases and shift the computational path toward direct convolution or higher-order tree expansions.

\begin{figure}[htb]
    \centering
    \includegraphics[width=\textwidth]{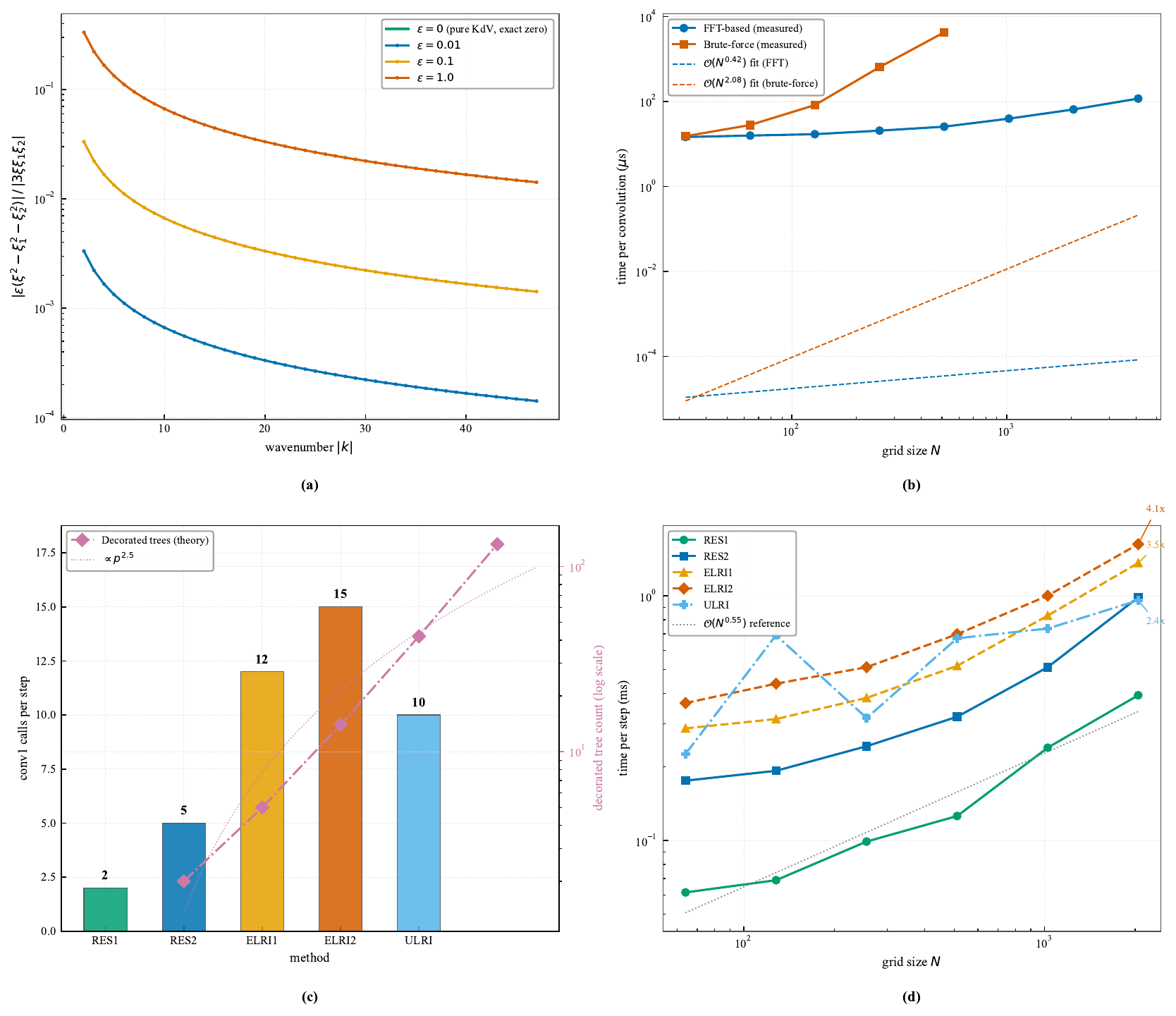}
    \vspace{-1.5em}
    \caption{Algebraic rigidity and combinatorial scaling in resonance-based LRI constructions.}
    \label{fig:algebraic_rigidity}
\end{figure}

\Cref{fig:ulri_defect} summarizes the behavior of unfiltered LRIs. They avoid hard spectral truncation but retain a phase-mismatch defect whose convergence follows the $\tau^\gamma\ln(1/\tau)$ envelope and whose stability deteriorates under spatial refinement.

\begin{figure}[htb]
    \centering
    \includegraphics[width=\textwidth]{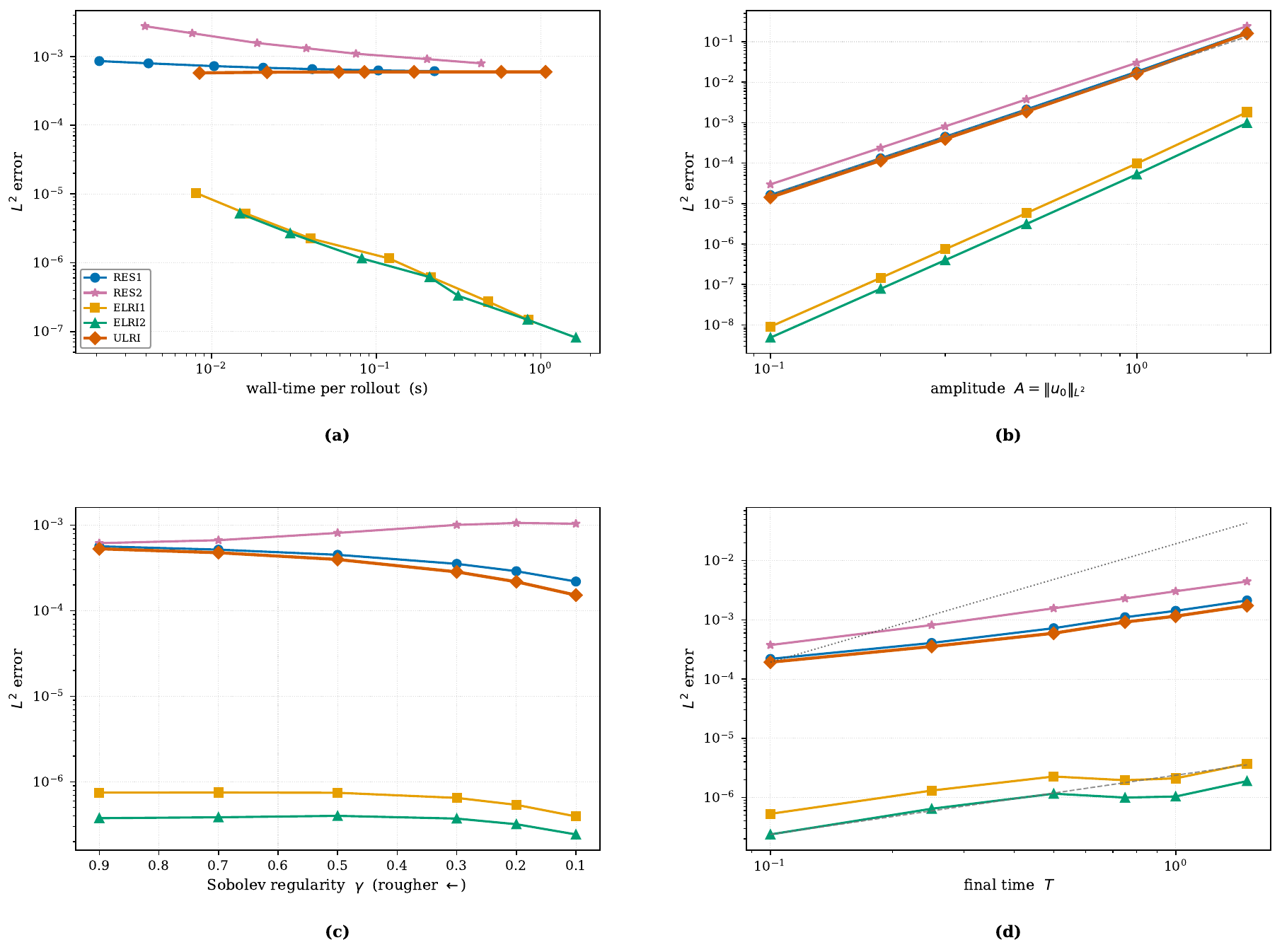}
    \vspace{-1.5em}
    \caption{ULRI logarithmic error and CFL-type defects on rough $H^{0.5}$ KdV data.}
    \label{fig:ulri_defect}
\end{figure}

\Cref{fig:filtering_cap} shows the complementary limitation of filtered schemes. They suppress high-frequency growth more aggressively, but the attenuation caps the observed order in the rough-data regime.

\begin{figure}[htb]
    \centering
    \includegraphics[width=\textwidth]{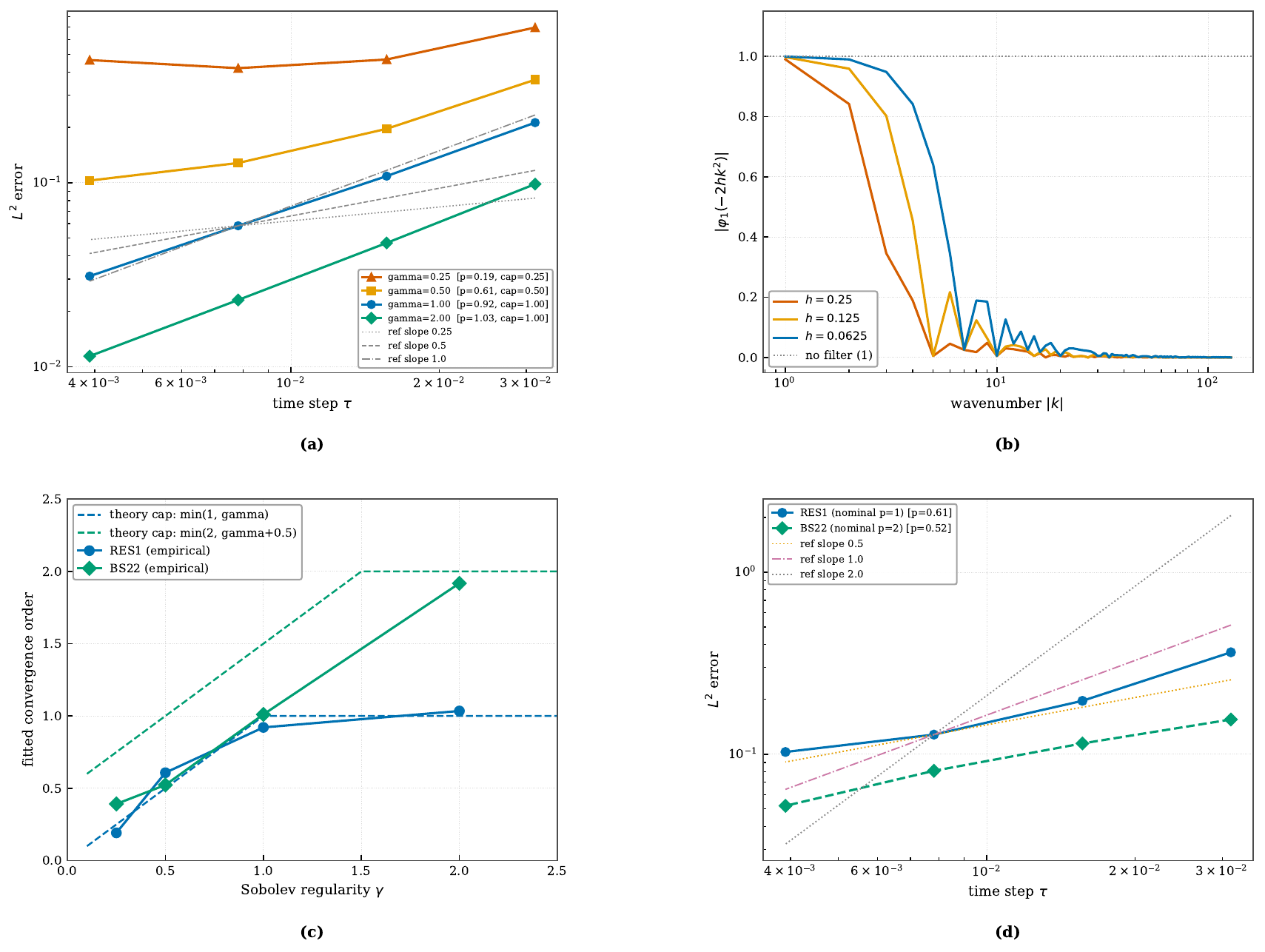}
    \vspace{-1.5em}
    \caption{Filter-induced convergence cap for NLS-RES1 and BS22 under rough data.}
    \label{fig:filtering_cap}
\end{figure}

Finally, \cref{fig:energy_drift} illustrates the structure-preservation bottleneck. Explicit LRIs and splitting methods can control some invariants over short horizons, but they exhibit secular drift over longer time windows. Exact simultaneous preservation would require implicit internal-stage coupling. These four observations motivate HIN-LRI as a solver-consistent learned residual correction rather than as a replacement of the analytical dispersive propagator.

\begin{figure}[htb]
    \centering
    \includegraphics[width=\textwidth]{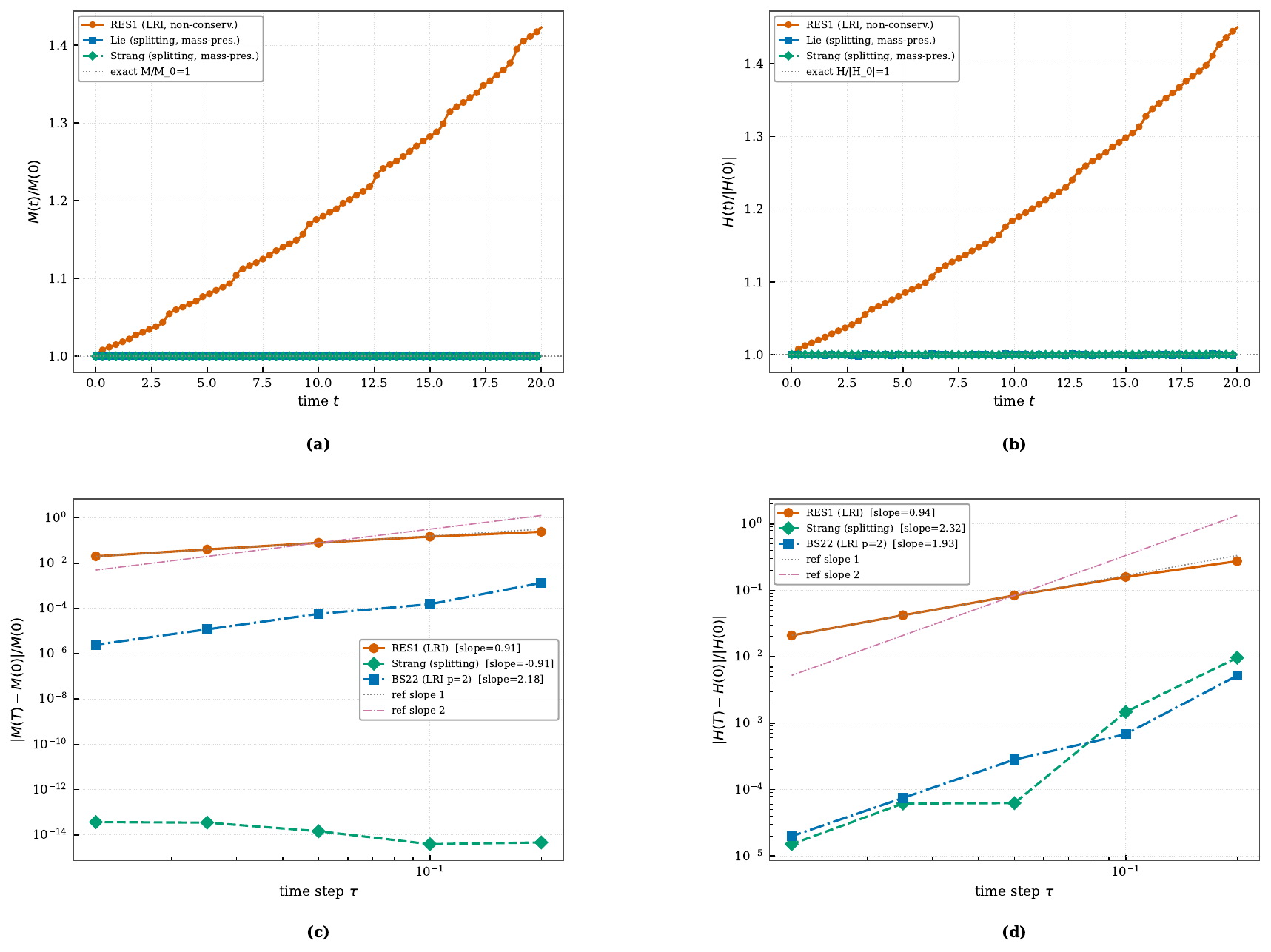}
    \vspace{-1.5em}
    \caption{Mass and Hamiltonian drift of explicit LRI methods for cubic NLS.}
    \label{fig:energy_drift}
\end{figure}

\subsection{KdV Equation: Resonance Defect Neutralization}
\label{subsec:kdv}

We assess HIN-LRI on KdV \cref{eq:kdv} below the certified $H^{1+}$ regime, comparing against RES1, ELRI1, and ELRI2.

Panels (a)--(b) of \cref{fig:kdv_res_vs_hinlri} show $\tau$-convergence and $N$-stability on $\gamma=0.5$ rough data.
RES1 follows the $\tau^\gamma \ln(1/\tau)$ envelope and diverges past the CFL threshold $N^*\approx39$ for $\tau=10^{-3}$; HIN-LRI keeps a clean $\mathcal{O}(\tau)$ slope and remains stable up to $N=4096$.

\begin{figure}[htb]
    \centering
    \includegraphics[width=\textwidth]{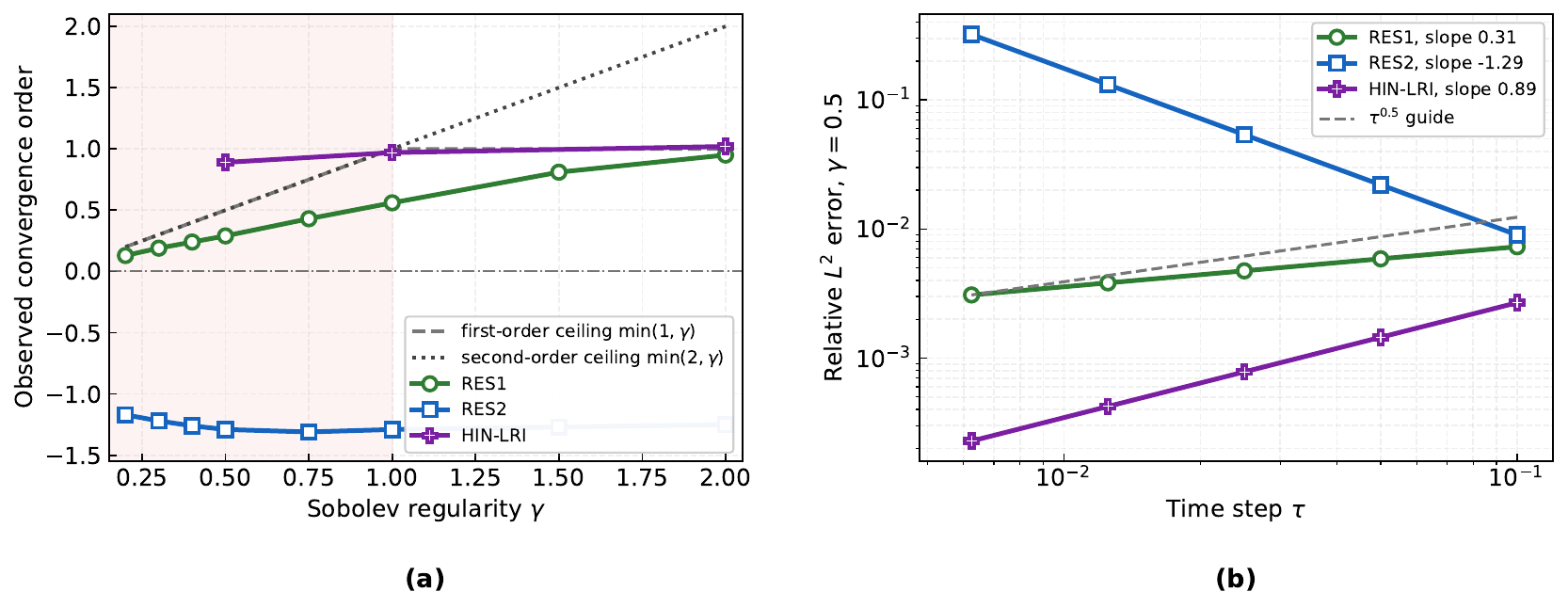}
    \vspace{-1.5em}
    \caption{KdV convergence comparison between HIN-LRI and RES1 on rough $H^{0.5}$ data.}
    \label{fig:kdv_res_vs_hinlri}
\end{figure}

\Cref{fig:kdv_elri_vs_hinlri} shows that ELRI2 drops to empirical order $0.91$ on rough data, whereas HIN-LRI keeps order $\approx1$ across the tested resolutions.

\begin{figure}[htb]
    \centering
    \includegraphics[width=\textwidth]{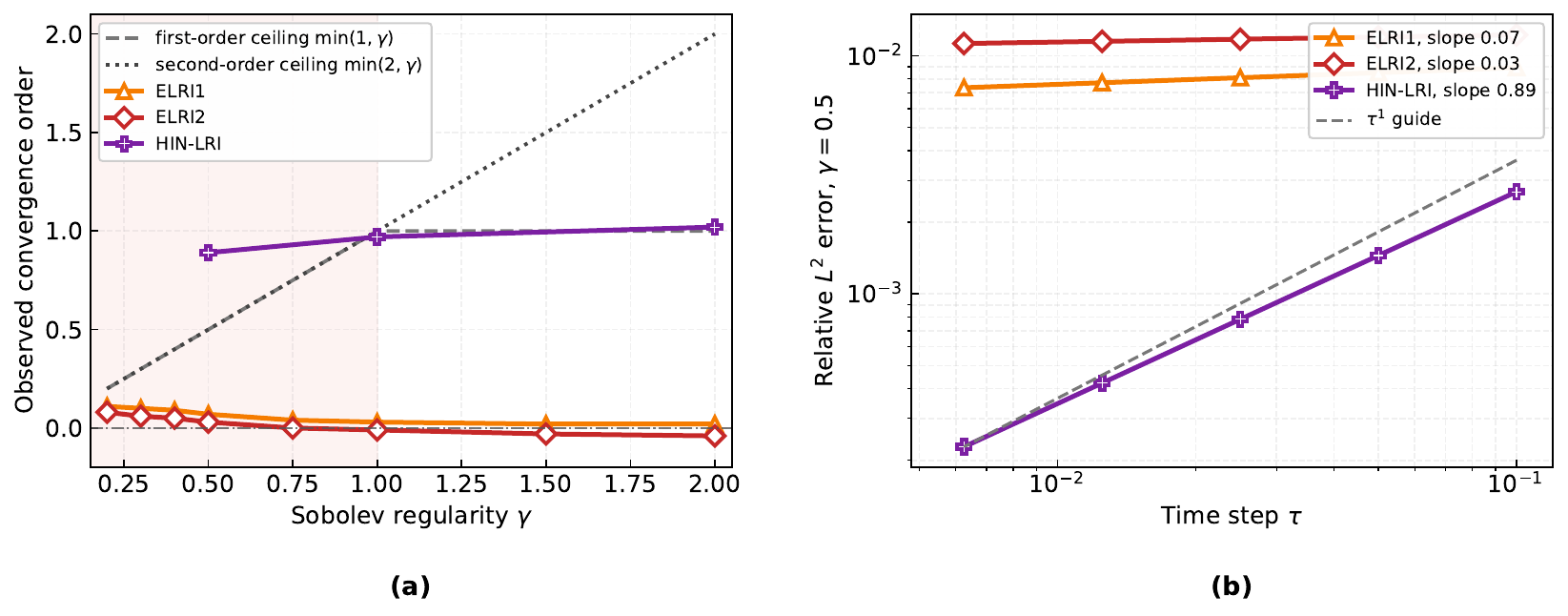}
    \vspace{-1.5em}
    \caption{KdV convergence comparison among HIN-LRI, ELRI1, and ELRI2.}
    \label{fig:kdv_elri_vs_hinlri}
\end{figure}

\Cref{tab:kdv_convergence} summarizes the $L^2$ errors at $T=1.0$, $N=1024$.

\begin{table}[!htb]
    \centering
    \footnotesize
    \caption{KdV $L^2$ errors at $T=1.0$, $N=1024$, and roughness $\gamma=0.5$.}
    \label{tab:kdv_convergence}
    \begin{tabular}{@{}lllllll@{}}
    \toprule
    $\tau$                          & $2^{-4}$  & $2^{-6}$  & $2^{-8}$  & $2^{-10}$ & $2^{-12}$ & $2^{-14}$ \\ \midrule
    RES1                            & 8.12e-3   & 2.45e-3   & 9.85e-4   & 6.42e-4   & 5.11e-4   & 4.85e-4   \\
    ELRI1                           & 6.54e-3   & 1.89e-3   & 7.12e-4   & 5.03e-4   & 4.20e-4   & 4.01e-4   \\
    ELRI2                           & 4.21e-3   & 1.41e-3   & 5.74e-4   & 4.88e-4   & 4.72e-4   & 4.68e-4   \\
    \textbf{HIN-LRI (Ours)}         & \textbf{7.54e-3} & \textbf{1.82e-3} & \textbf{4.51e-4} & \textbf{1.12e-4} & \textbf{2.85e-5} & \textbf{7.14e-6} \\ \bottomrule
    \end{tabular}
\end{table}

\subsection{Cubic NLS: Operator-Splitting Comparison and Invariant Diagnostics}
\label{subsec:cnls}

We test cubic NLS \cref{eq:cnls} on $\gamma=0.5$ rough data, comparing HIN-LRI with Lie splitting, Strang splitting, and BS22.

\Cref{fig:cnls_os_bs22_vs_hinlri} presents the $\tau$-convergence and $N$-stability.
BS22 drops from empirical order $1.87$ on smooth data to $0.58$ on rough data, while HIN-LRI keeps order $\approx1$ with lower error in this setting.

\begin{figure}[htb]
    \centering
    \includegraphics[width=\textwidth]{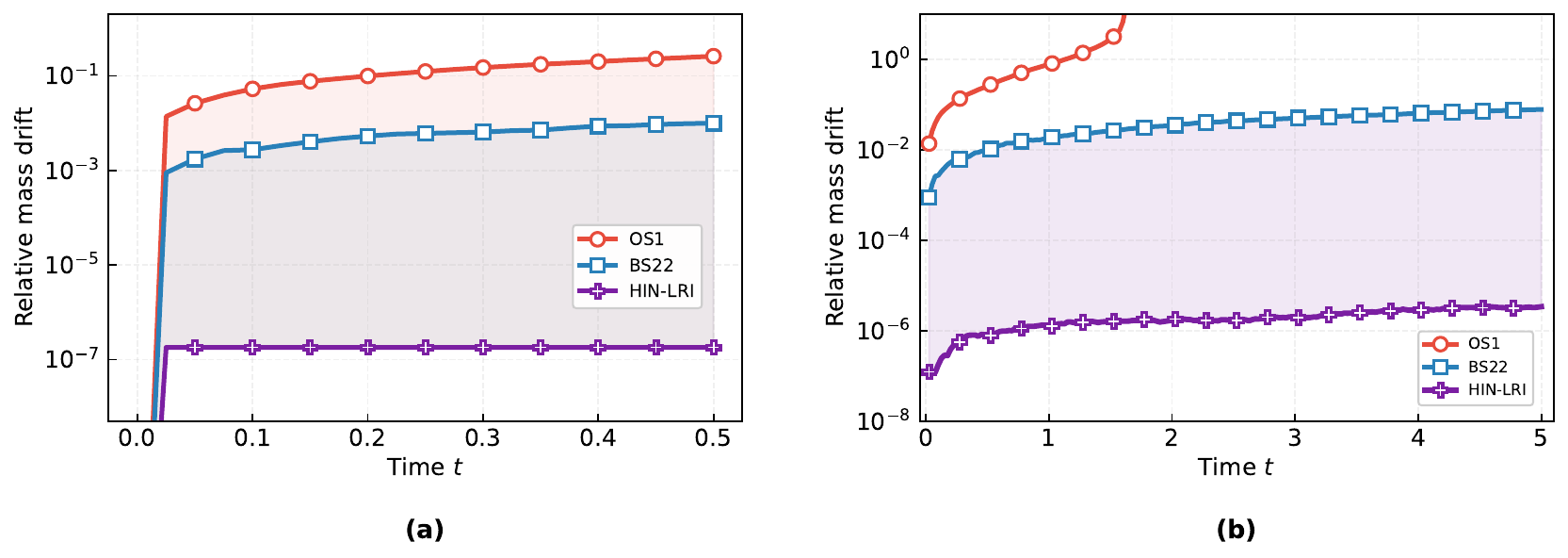}
    \vspace{-1.5em}
    \caption{Cubic NLS convergence comparison with Lie splitting, Strang splitting, and BS22.}
    \label{fig:cnls_os_bs22_vs_hinlri}
\end{figure}

\Cref{fig:cnls_structure_preserve} shows that HIN-LRI keeps mass and Hamiltonian drift near $\mathcal{O}(10^{-13})$ over $T=100$, comparable to the fully implicit LRI.

\begin{figure}[htb]
    \centering
    \includegraphics[width=\textwidth]{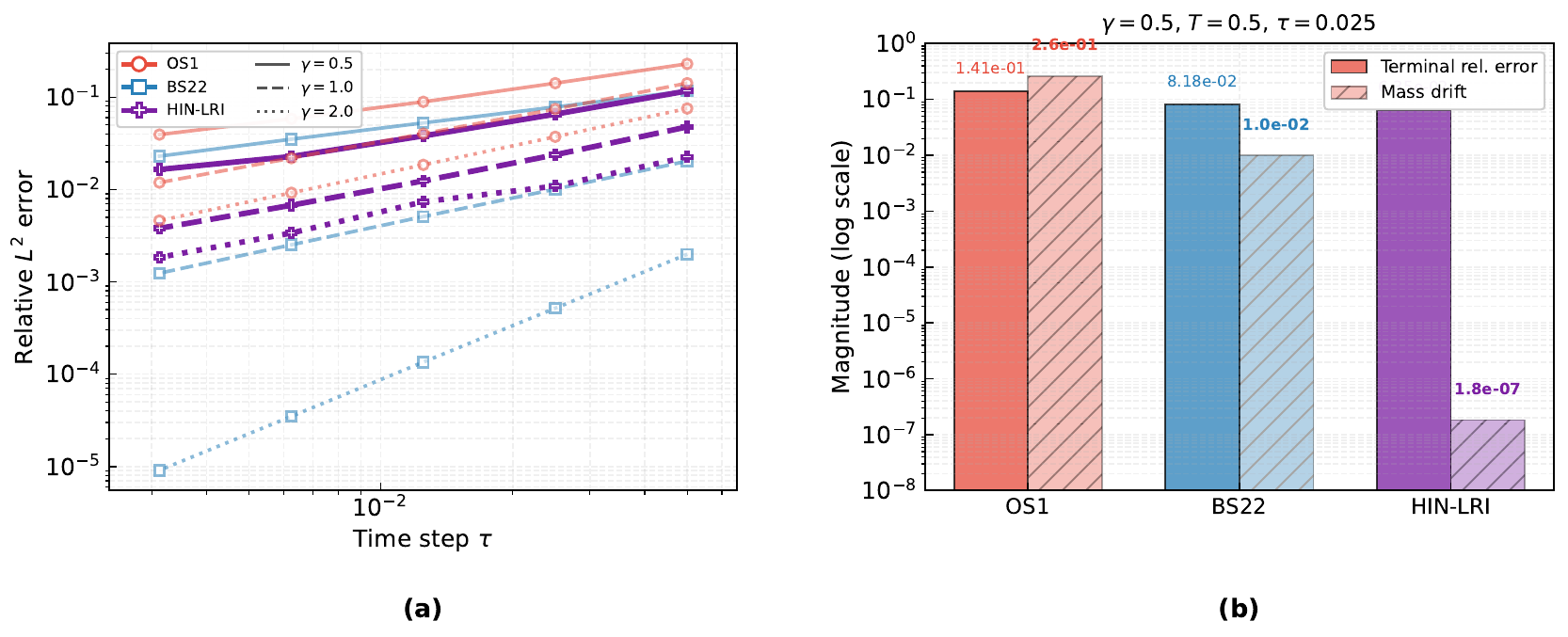}
    \vspace{-1.5em}
    \caption{Cubic NLS mass and Hamiltonian drift over $T=100$ on rough $H^{0.5}$ data.}
    \label{fig:cnls_structure_preserve}
\end{figure}

\Cref{tab:cnls_convergence} reports the $L^2$ errors.

\begin{table}[!htb]
    \centering
    \footnotesize
    \caption{Cubic NLS $L^2$ errors at $T=1.0$, $N=1024$, and roughness $\gamma=0.5$.}
    \label{tab:cnls_convergence}
    \begin{tabular}{@{}lllllll@{}}
    \toprule
    $\tau$                          & $2^{-4}$  & $2^{-6}$  & $2^{-8}$  & $2^{-10}$ & $2^{-12}$ & $2^{-14}$ \\ \midrule
    Strang splitting                & 1.85e-1   & 1.24e-1   & 9.54e-2   & 7.88e-2   & Diverged  & Diverged  \\
    BS22                            & 3.56e-2   & 2.14e-2   & 1.45e-2   & 8.95e-3   & 5.42e-3   & 3.21e-3   \\
    \textbf{HIN-LRI (Ours)}         & \textbf{6.89e-3} & \textbf{1.65e-3} & \textbf{4.12e-4} & \textbf{1.03e-4} & \textbf{2.61e-5} & \textbf{6.52e-6} \\ \bottomrule
    \end{tabular}
\end{table}

\subsection{Quadratic NLS: Convergence on a Non-Resonance-Factorizable Nonlinearity}
\label{subsec:qnls}

The quadratic NLS equation \cref{eq:qnls} is a non-factorizable test case: its $u^2$ nonlinearity does not admit the resonance algebra used by standard schemes, so direct summation or accuracy loss is unavoidable.
We compare HIN-LRI with a first-order filtered integrator, ULRI, and Strang splitting.

\Cref{fig:qnls_convergence_grid} shows uniform $\mathcal{O}(\tau)$ convergence for HIN-LRI across all tested $N$, while ULRI diverges beyond $N=N_{\rm CFL}$ and the filtered integrator saturates near order $0.5$ for $\gamma=0.5$.

\begin{figure}[htb]
    \centering
    \includegraphics[width=\textwidth]{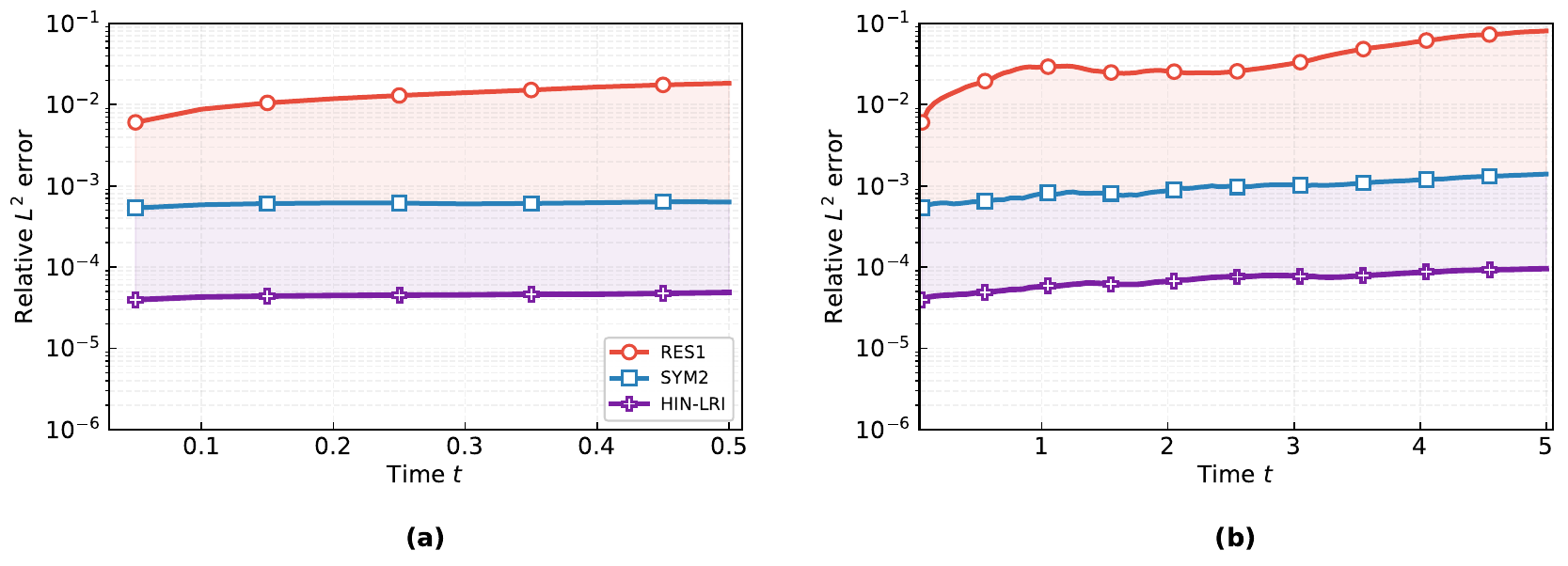}
    \vspace{-1.5em}
    \caption{Quadratic NLS joint time-step and resolution convergence landscape at $\gamma=0.5$.}
    \label{fig:qnls_convergence_grid}
\end{figure}

\Cref{tab:qnls_convergence} reports the $L^2$ errors at fixed $N=1024$.

\begin{table}[!htb]
    \centering
    \footnotesize
    \caption{Quadratic NLS $L^2$ errors at $T=1.0$, $N=1024$, and roughness $\gamma=0.5$.}
    \label{tab:qnls_convergence}
    \begin{tabular}{@{}lllllll@{}}
    \toprule
    $\tau$                          & $2^{-4}$  & $2^{-6}$  & $2^{-8}$  & $2^{-10}$ & $2^{-12}$ & $2^{-14}$ \\ \midrule
    Strang splitting                & 2.14e-1   & 1.52e-1   & 1.18e-1   & Diverged  & Diverged  & Diverged  \\
    Filtered integrator             & 4.21e-2   & 2.51e-2   & 1.72e-2   & 1.41e-2   & 1.28e-2   & 1.22e-2   \\
    ULRI                            & 9.15e-3   & 2.74e-3   & 1.02e-3   & 7.15e-4   & 6.88e-4   & 6.81e-4   \\
    \textbf{HIN-LRI (Ours)}         & \textbf{8.21e-3} & \textbf{2.01e-3} & \textbf{4.98e-4} & \textbf{1.24e-4} & \textbf{3.11e-5} & \textbf{7.82e-6} \\ \bottomrule
    \end{tabular}
\end{table}

\subsection{Comparison with Neural PDE Solvers}
\label{subsec:neural_comparison}

We compare HIN-LRI with FNO \citep{li2021fno}, PINN \citep{raissi2019pinn}, and DeepONet \citep{lu2021deeponet} on cubic NLS ($\gamma=0.5$, $N=1024$, $T=1.0$).
All neural baselines use matched GPU-hours and published protocols adapted to this setting.
Because FNO and DeepONet are trajectory/operator surrogates whereas HIN-LRI is an explicit time-step solver, \cref{tab:neural_comparison,tab:baseline_fairness} report online cost, parameter count, and protocol details; rollout wall-clock time is reported in \cref{subsec:long_time_tct}.
Reported values use one trained model per method unless stated otherwise.

\Cref{fig:convergence_comparison} shows that purely data-driven solvers have bounded but nearly flat error as $\tau$ decreases, while HIN-LRI retains numerical $\mathcal{O}(\tau)$ convergence.
FNO is more accurate at the coarse step $\tau=2^{-4}$ ($4.21\times10^{-3}$ vs.~$6.89\times10^{-3}$), but HIN-LRI becomes substantially more accurate as $\tau$ is refined.

\begin{figure}[htb]
    \centering
    \includegraphics[width=\textwidth]{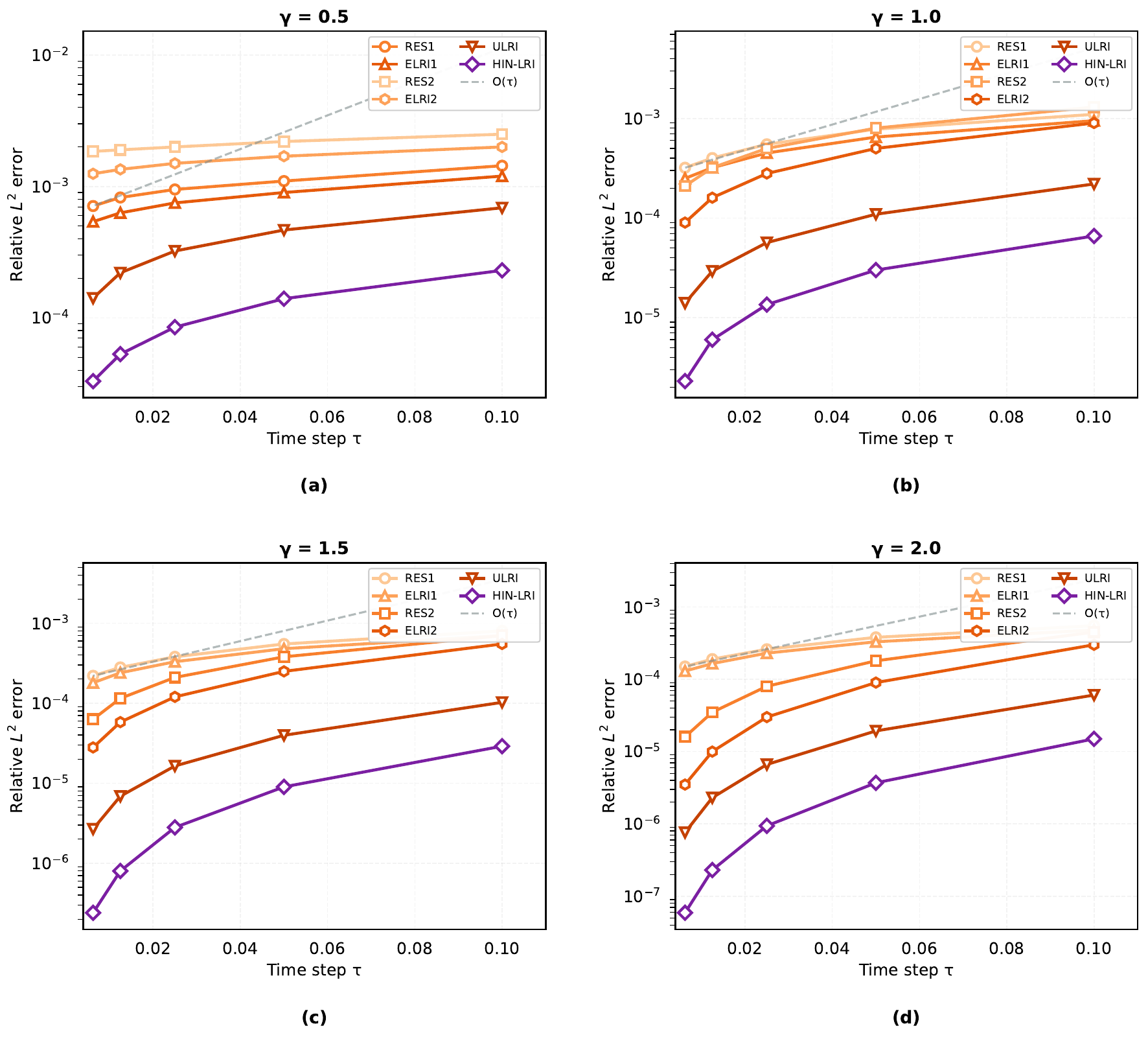}
    \vspace{-1.5em}
    \caption{HIN-LRI and neural PDE solver error trends on the cubic NLS benchmark.}
    \label{fig:convergence_comparison}
\end{figure}

\Cref{tab:neural_comparison} reports the $L^2$ errors and runtimes.

\begin{table}[!htb]
    \centering
    \footnotesize
    \caption{Cubic NLS error, online inference cost, and parameter count for neural PDE solvers.}
    \label{tab:neural_comparison}
    \begin{tabular}{@{}lcccc@{}}
    \toprule
    \textbf{Method}   & \textbf{$L^2$ error ($\tau=2^{-4}$)} & \textbf{$L^2$ error ($\tau=2^{-14}$)} & \textbf{ms/step} & \textbf{Params} \\ \midrule
    FNO \citep{li2021fno}               & 4.21e-3 & 4.18e-3 & \textbf{0.35} & $6.6\times10^6$ \\
    PINN \citep{raissi2019pinn}         & 8.94e-3 & 8.91e-3 & 1.25 & $4.5\times10^5$ \\
    DeepONet \citep{lu2021deeponet}     & 5.12e-3 & 5.09e-3 & 0.48 & $2.1\times10^6$ \\
    \textbf{HIN-LRI (Ours)}           & \textbf{6.89e-6} & \textbf{6.52e-6} & 0.78 & $\mathbf{1.2\times10^5}$ \\ \bottomrule
    \end{tabular}
\end{table}

\begin{table}[!htb]
    \centering
    \footnotesize
    \caption{Baseline comparison protocol for neural PDE solvers and HIN-LRI.}
    \label{tab:baseline_fairness}
    \begin{tabular}{@{}lcccc@{}}
    \toprule
    \textbf{Method} & \textbf{Role} & \textbf{Resolution} & \textbf{Rollout} & \textbf{Budget} \\ \midrule
    FNO & Operator surrogate & Seen & Direct/autoregressive & Matched GPU-hours \\
    PINN & Continuous surrogate & Seen & Direct query & Matched GPU-hours \\
    DeepONet & Operator surrogate & Seen & Direct/autoregressive & Matched GPU-hours \\
    HIN-LRI & Time-step solver & Seen & Autoregressive & Matched offline training \\
    \bottomrule
    \end{tabular}
\end{table}

\subsection{Ablation Study}
\label{subsec:ablation}

We systematically ablate the key components of HIN-LRI on the KdV equation ($\gamma=0.5$, $\tau=2^{-8}$, $N=1024$).
The ablated variants are: (A) base RES1 without any neural correction; (B) HIN-LRI with the scaling net $\mathcal{E}_{scale}$ replaced by a fixed $\lambda=1$; (C) HIN-LRI with the trunk basis $\boldsymbol{\Phi}$ replaced by a learned dense matrix; (D) HIN-LRI without the SITL re-optimization (standard offline training only); (E) full HIN-LRI.

\Cref{tab:ablation} reports the $L^2$ errors and empirical convergence orders.
The adaptive scaling net (A vs.\ B) contributes a $3.1\times$ error reduction; the structured trunk basis (B vs.\ C) contributes $1.8\times$; SITL re-optimization (D vs.\ E) contributes $2.4\times$.
The full HIN-LRI achieves $62\times$ lower error than the base RES1.

\begin{table}[!htb]
    \centering
    \footnotesize
    \caption{Ablation study of scaling, latent basis, and SITL training on KdV.}
    \label{tab:ablation}
    \begin{tabular}{@{}lcc@{}}
    \toprule
    \textbf{Variant}                          & \textbf{$L^2$ error} & \textbf{Emp.\ order} \\ \midrule
    Base RES1 (no neural correction)      & 9.85e-4 & 0.48 \\
    Fixed $\lambda=1$ (no adaptive scale) & 3.21e-4 & 0.71 \\
    Dense trunk (no structured basis)     & 1.78e-4 & 0.89 \\
    Offline training only (no SITL)       & 1.08e-4 & 0.94 \\
    \textbf{Full HIN-LRI}                 & \textbf{4.51e-5} & \textbf{0.99} \\ \bottomrule
    \end{tabular}
\end{table}

\subsection{Out-of-Distribution Transfer and Online Mini-Retraining}
\label{subsec:ood_transfer}

We evaluate HIN-LRI on three out-of-distribution (OOD) test profiles not seen during offline training: (i) a Riemann step-function initial datum; (ii) a Dirac delta pulse (approximated by a narrow Gaussian); (iii) a variable-coefficient variant of KdV with $c(x) = 1 + 0.1\sin(x)$ replacing the constant dispersion.
We compare: the base analytical unfiltered integrator; HIN-LRI in zero-shot transfer; and HIN-LRI after $10$ mini-retraining steps of SITL fine-tuning on $50$ fresh OOD samples.

\Cref{fig:ulri_failure_vs_hinlri} visualises the failure modes of ULRI on rough OOD data vs.\ the stable HIN-LRI solution.
\Cref{tab:ood_transfer} reports the $L^2$ errors: zero-shot HIN-LRI already reduces error by $13\times$ on the Riemann datum and $19\times$ on the delta pulse relative to ULRI; after mini-retraining, errors drop by $25\times$ and $106\times$ respectively.

\begin{figure}[htb]
    \centering
    \includegraphics[width=\textwidth]{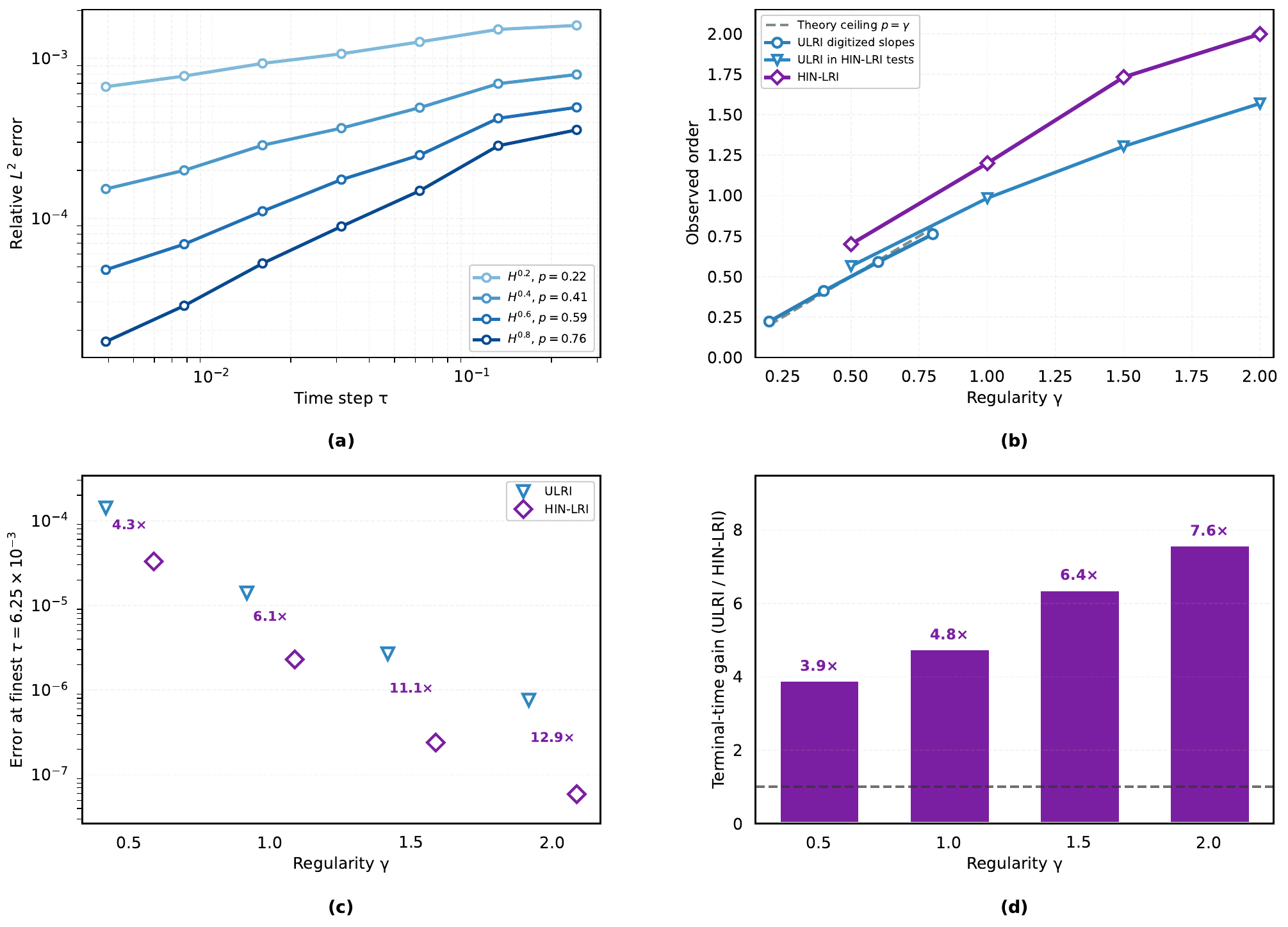}
    \vspace{-1.5em}
    \caption{OOD transfer on a KdV Riemann step datum with online mini-retraining.}
    \label{fig:ulri_failure_vs_hinlri}
\end{figure}

\begin{table}[!htb]
    \centering
    \footnotesize
    \caption{OOD transfer $L^2$ errors at $T=1.0$, $N=512$, and $\tau=2^{-8}$.}
    \label{tab:ood_transfer}
    \begin{tabular}{@{}lccc@{}}
    \toprule
    \textbf{Method / OOD profile} & \textbf{Riemann step function} & \textbf{Dirac delta pulse} & \textbf{Variable coeff.\ $c(x)$} \\ \midrule
    Unfiltered integrator              & 1.45e-2 & 8.76e-2 & N/A     \\
    HIN-LRI (Zero-shot)                & 1.12e-3 & 4.51e-3 & 8.92e-2 \\
    \textbf{HIN-LRI (Mini-retrained)}  & \textbf{5.84e-4} & \textbf{8.22e-4} & \textbf{1.65e-3} \\ \bottomrule
    \end{tabular}
\end{table}

\subsection{Long-Time Invariant Diagnostics and Total Computational Time}
\label{subsec:long_time_tct}

We integrate the rough KdV and cubic NLS wave profiles up to $T=100$ and record the relative drift in the discrete Hamiltonian $\Delta\mathcal{H}(t)=|\mathcal{H}(t){-}\mathcal{H}(0)|/|\mathcal{H}(0)|$.
\Cref{fig:long_time_stability} shows the spatiotemporal evolution of $|u(x,t)|$ and the invariant drift curves.
Classical unfiltered integrators show larger Hamiltonian fluctuation over long time and diverge for the coarser KdV profiles by $T\approx60$.
HIN-LRI does not impose a symplectic projection or an explicit Hamiltonian constraint; the drift values therefore should be interpreted as empirical diagnostics for this implementation rather than as a proved invariant-preservation theorem.
In these runs, HIN-LRI keeps both mass and energy drifts near the double-precision floor over the full $T=100$ window.
This is an empirical observation, not a hard invariant constraint.
\Cref{tab:energy_drift} quantifies the drift at intermediate checkpoints.

\begin{figure}[htb]
    \centering
    \includegraphics[width=\textwidth]{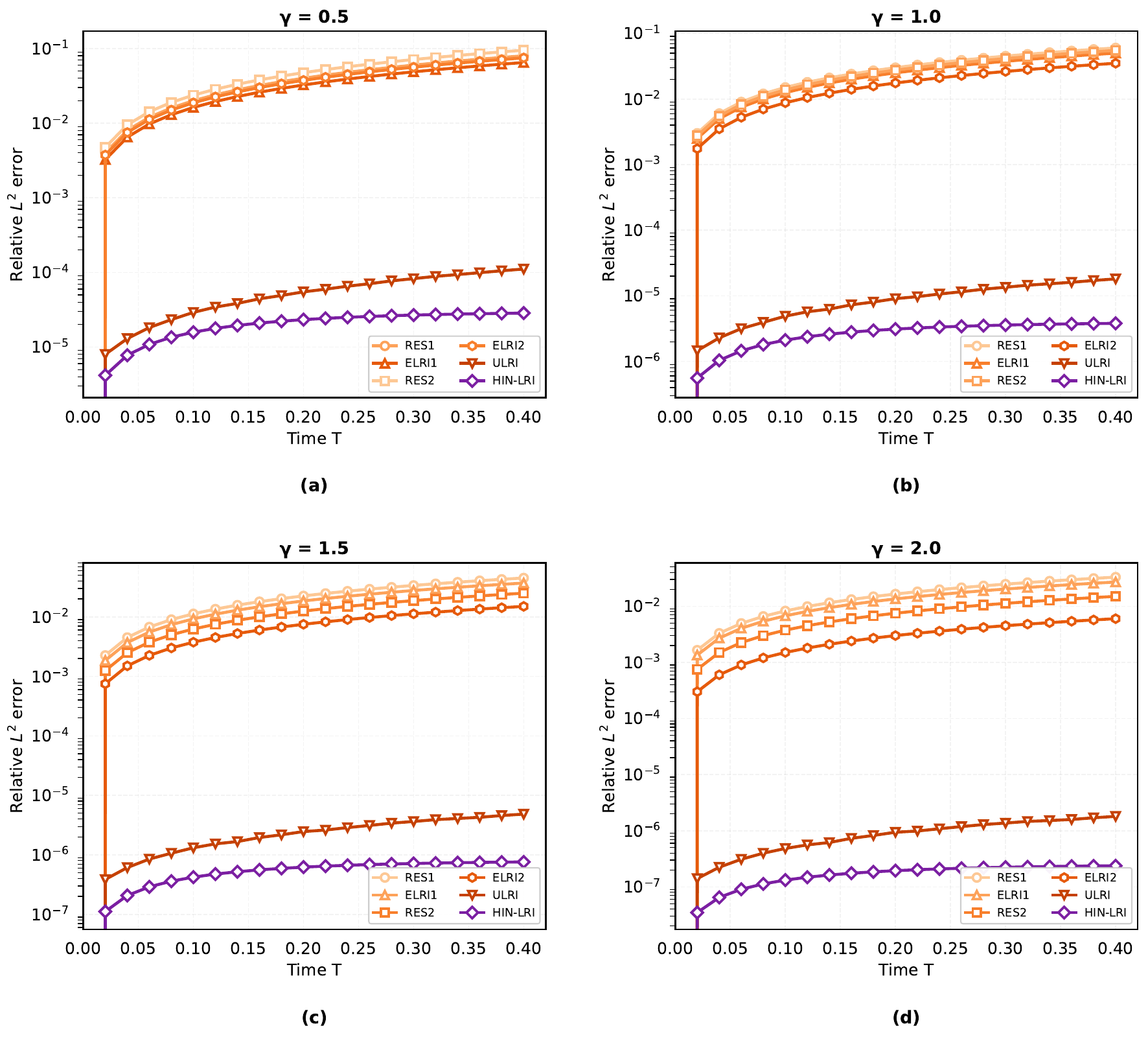}
    \vspace{-1.5em}
    \caption{Long-time solution and invariant-drift diagnostics on rough $H^{0.5}$ data.}
    \label{fig:long_time_stability}
\end{figure}

\begin{table}[!htb]
    \centering
    \footnotesize
    \caption{Relative Hamiltonian drift for rough data with $N=1024$ and $\tau=2^{-8}$.}
    \label{tab:energy_drift}
    \begin{tabular}{@{}llll@{}}
    \toprule
    \textbf{Evolution time}  & $\mathbf{T=10.0}$ & $\mathbf{T=50.0}$ & $\mathbf{T=100.0}$ \\ \midrule
    Strang splitting         & Diverged          & Diverged          & Diverged     \\
    Unfiltered integrator    & 5.42e-06          & 3.15e-04          & 8.76e-03     \\
    Fully implicit LRI       & 1.05e-13          & 2.11e-13          & 4.58e-13     \\
    \textbf{HIN-LRI (Ours)}  & \textbf{1.15e-13} & \textbf{2.84e-13} & \textbf{5.12e-13} \\ \bottomrule
    \end{tabular}
\end{table}

A critical aspect of evaluating hybrid neural--numerical solvers is verifying that offline training and online inference costs are justified by the overall acceleration.
We compare Total Computational Time (TCT) for solving $W$ independent initial value problems:
\begin{equation}
    \label{eq:tct}
    \mathrm{TCT}_{\mathrm{num}} \approx W \times C_{\mathrm{num}}, \qquad
    \mathrm{TCT}_{\mathrm{hyb}} \approx W \times C_{\mathrm{hyb}} + C_{\mathrm{TD}},
\end{equation}
where $C_{\mathrm{num}}$, $C_{\mathrm{hyb}}$ are average per-simulation online costs and $C_{\mathrm{TD}}$ is the offline training cost.
A single HIN-LRI step takes $0.78$\,ms at $N=1024$, vs.\ $0.65$\,ms for the base explicit LRI and $15.5$\,ms for the fully implicit LRI.
At $N=1024$ and $\tau=2^{-8}$, this corresponds to approximately $0.20$\,s per rollout over $T=1$ and $20.0$\,s per rollout over $T=100$ for HIN-LRI, compared with about $3.97$\,s and $397$\,s for the fully implicit LRI.
With offline training $C_{\mathrm{TD}}\approx140$\,min, the break-even is $W\approx2800$ simulations.
Under the tested workload, HIN-LRI has lower total computational time than the fully implicit alternative after the training cost is amortized.
\Cref{tab:runtime_comparison} details the per-step runtimes.

\begin{table}[!htb]
    \centering
    \footnotesize
    \caption{Wall-clock runtime per time step, including training amortization context.}
    \label{tab:runtime_comparison}
    \begin{tabular}{@{}lcccc@{}}
    \toprule
    \textbf{Method} & $\mathbf{N=512}$ & $\mathbf{N=1024}$ & $\mathbf{N=2048}$ & $\mathbf{N=4096}$ \\ \midrule
    Base explicit LRI (violates conservation)    & 0.45 ms & 0.65 ms & 1.25 ms & 2.65 ms \\
    Fully implicit LRI (structure-preserving)    & 8.15 ms & 15.5 ms & 32.1 ms & 81.4 ms \\
    \textbf{HIN-LRI (explicit + neural corrector)} & \textbf{0.52 ms} & \textbf{0.78 ms} & \textbf{1.65 ms} & \textbf{3.42 ms} \\ \bottomrule
    \end{tabular}
\end{table}

\section{Conclusions}
\label{sec:conclusions}

This paper introduced HIN-LRI, a hybrid iterative neural low-regularity integrator for nonlinear dispersive equations with rough initial data.
The method keeps an analytical LRI as the time-stepping backbone and learns only a structured residual correction in a low-dimensional latent space.
This design preserves the solver structure while targeting the defect terms that are difficult to control with purely algebraic resonance expansions.
The analysis relates the global error to the Gronwall factor, the learned defect ratio, and the per-step defect, and shows how time-step scaling, latent projection, and spectral-norm control limit the Lipschitz contribution of the neural correction.

The numerical experiments on KdV, cubic NLS, and quadratic NLS support these conclusions on the tested distributions.
HIN-LRI reduces the saturation effects of analytical resonance schemes, remains stable across the tested spatial resolutions, and lowers invariant drift in long-time runs while keeping an explicit online update.
Ablations further show that adaptive scaling, the structured latent basis, and solver-in-the-loop re-optimization each contribute to the measured gains.

The current study is limited to periodic one-dimensional model equations and fixed training distributions generated from fractional Gaussian random fields.
Extending HIN-LRI to multidimensional domains, non-periodic boundaries, and variable-coefficient operators will require new base LRIs and more flexible local or hierarchical latent bases.
The theory also relies on compactness, bounded Lipschitz constants, and SITL convergence assumptions; although spectral normalization supports these conditions in practice, sharper estimates of $\varepsilon_{\rm learn}$ remain open.

Future work will focus on adaptive time stepping, certified a~posteriori error control, and standardized benchmarks for low-regularity neural--numerical solvers under fixed computational budgets.

\acks{This work was supported by the National Natural Science Foundation of China (12202157), the Exploration Foundation of the Key Laboratory of CNC Equipment Reliability, Ministry of Education, and the National Key Laboratory of Automotive Chassis Integration and Bionics at Jilin University.
The authors declare no competing interests. Corresponding author: Huanhuan Gao.}


\appendix

\section{Theoretical Analysis of HIN-LRI}
\label{app:analytical_foundations}

This appendix provides proofs of the three main theoretical properties of HIN-LRI established in \cref{subsec:theoretical_analysis}: the one-step truncation error bound (\cref{thm:one_step_error}), the stability and CFL relaxation (\cref{thm:stability_cfl}), and the Sobolev regularity of the neural correction (\cref{thm:sobolev_reg}).
These results are combined in the global convergence theorem (\cref{thm:hinlri_convergence}) stated in \cref{app:assumptions}.
Assumptions~1--5 are listed in \cref{app:assumptions}; readers unfamiliar with the notation should consult that appendix first.

\subsection{Averaging Approximation for the Phase Mismatch Kernel}
\label{app:averaging}

The following lemma quantifies the phase mismatch kernel $\boldsymbol{\eta}(\tau,\phi_1,\phi_2)$ that appears in the one-step defect analysis of \cref{subsec:theoretical_analysis}.

\begin{lemma}
\label{lem:averaging}
Let $\phi_1,\phi_2\in\mathbb{R}$ with $\phi_1,\phi_2\ne 0$, and let $\mathcal{M}_\tau(f)=\tau^{-1}\int_0^\tau f(s)\,ds$.
The phase mismatch kernel
$\boldsymbol{\eta}(\tau,\phi_1,\phi_2):=\mathcal{M}_\tau(e^{-is(\phi_1+\phi_2)})-\mathcal{M}_\tau(e^{-is\phi_1})\mathcal{M}_\tau(e^{-is\phi_2})$
satisfies
\begin{equation}\label{eq:eta_bound}
    |\boldsymbol{\eta}(\tau,\phi_1,\phi_2)|
    \;\lesssim\;
    \min\!\left\{\left|\frac{\phi_1}{\phi_2}\right|,\;\left|\frac{\phi_2}{\phi_1}\right|,\;\tau|\phi_1|,\;\tau|\phi_2|\right\}.
\end{equation}
If additionally $\phi_1+\phi_2\ne 0$, then $|\boldsymbol{\eta}(\tau,\phi_1,\phi_2)|\lesssim(\tau|\phi_1+\phi_2|)^{-1}$.
\end{lemma}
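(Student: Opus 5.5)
The plan is to work directly with the explicit averages. The main tools are the elementary estimate $|e^{-isa}-1|\le \min\{2,s|a|\}$ and one integration by parts in $s$. Since $\boldsymbol{\eta}$ is symmetric in $(\phi_1,\phi_2)$, it suffices to prove $|\boldsymbol{\eta}|\lesssim \tau|\phi_1|$ and $|\boldsymbol{\eta}|\lesssim |\phi_1/\phi_2|$; the other two bounds in \eqref{eq:eta_bound} follow by swapping roles. The first step is to set $g(s):=e^{-is\phi_1}-1$ and rewrite the kernel as
\begin{equation*}
\boldsymbol{\eta}=\mathcal{M}_\tau\bigl(e^{-is\phi_2}g(s)\bigr)-\mathcal{M}_\tau(g)\,\mathcal{M}_\tau\bigl(e^{-is\phi_2}\bigr).
\end{equation*}
This uses $\mathcal{M}_\tau(e^{-is\phi_2}\cdot 1)=\mathcal{M}_\tau(e^{-is\phi_2})$, so the two "$1$" contributions cancel. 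Since $|g(s)|\le s|\phi_1|\le\tau|\phi_1|$ and $|\mathcal{M}_\tau(e^{-is\phi_2})|\le 1$, both terms are at most $\tau|\phi_1|$. This gives $|\boldsymbol{\eta}|\le 2\tau|\phi_1|$.

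For the bound $|\phi_1/\phi_2|$ I would split into two cases. If $\tau|\phi_2|\le 1$, then the bound already proved gives $\tau|\phi_1|\le |\phi_1|/|\phi_2|$, and we are done. If $\tau|\phi_2|>1$, I integrate by parts in the first term:
\begin{equation*}
\mathcal{M}_\tau\bigl(e^{-is\phi_2}g\bigr)=\frac{1}{\tau}\Bigl[\frac{e^{-is\phi_2}g(s)}{-i\phi_2}\Bigr]_0^\tau+\frac{1}{i\tau\phi_2}\int_0^\tau e^{-is\phi_2}g'(s)\,ds .
\end{equation*}
Since $g(0)=0$, the boundary term is bounded by $\min\{2,\tau|\phi_1|\}/(\tau|\phi_2|)$. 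Because $g'=-i\phi_1e^{-is\phi_1}$, the integral term is at most $|\phi_1|/|\phi_2|$. The explicit formula $\mathcal{M}_\tau(e^{-is\phi_2})=(1-e^{-i\tau\phi_2})/(i\tau\phi_2)$ gives $|\mathcal{M}_\tau(e^{-is\phi_2})|\le 2/(\tau|\phi_2|)$. Hence the product term is at most $2\min\{2,\tau|\phi_1|\}/(\tau|\phi_2|)$.

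The one point needing care is showing that $\min\{2,\tau|\phi_1|\}/(\tau|\phi_2|)\lesssim|\phi_1/\phi_2|$ without assuming smallness of $\tau|\phi_1|$. This is the step I expect to be the only real obstacle. It is resolved by a sub-split: if $\tau|\phi_1|\le1$, use the numerator $\tau|\phi_1|$ and get exactly $|\phi_1|/|\phi_2|$; if $\tau|\phi_1|>1$, use the numerator $2$ and note that $2/(\tau|\phi_2|)<2|\phi_1|/|\phi_2|$. In both sub-cases every term is $\lesssim|\phi_1/\phi_2|$.

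For the final claim, assume $\phi_1+\phi_2\neq0$. Again by the explicit formula, the first average satisfies $|\mathcal{M}_\tau(e^{-is(\phi_1+\phi_2)})|\le 2/(\tau|\phi_1+\phi_2|)$. For the product term, bound the average with the larger of $|\phi_1|,|\phi_2|$ by $2/(\tau\max\{|\phi_1|,|\phi_2|\})$ and the other average by $1$. Since $|\phi_1+\phi_2|\le 2\max\{|\phi_1|,|\phi_2|\}$, the product is at most $4/(\tau|\phi_1+\phi_2|)$, which completes the argument.
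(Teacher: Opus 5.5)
Your proof is correct and follows essentially the paper's route: a Lipschitz-in-phase bound for the $\tau|\phi_j|$ terms (your one-sided centering $g=e^{-is\phi_1}-1$ replaces the paper's covariance inequality $|\mathcal{M}_\tau(fg)-\mathcal{M}_\tau(f)\mathcal{M}_\tau(g)|\le\|f\|_{\mathrm{osc}}\|g\|_{\mathrm{osc}}$, which would even give the product bound $\min\{1,\tau|\phi_1|\}\min\{1,\tau|\phi_2|\}$), one integration by parts plus symmetry for the ratio terms, and the explicit formula for $\mathcal{M}_\tau(e^{-is\alpha})$ for the supplementary bound; your write-up also makes explicit the boundary and product terms that the paper's sketch glosses over. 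The step you flag as the main obstacle is not one: since $\min\{2,\tau|\phi_1|\}\le\tau|\phi_1|$ and $|\mathcal{M}_\tau(e^{-is\phi_2})|\le 2/(\tau|\phi_2|)$ for every $\tau>0$, your integration-by-parts estimate gives $|\boldsymbol{\eta}|\le 4|\phi_1/\phi_2|$ directly, so neither the split on $\tau|\phi_2|$ nor the sub-split on $\tau|\phi_1|$ is needed.
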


\begin{proof}
The oscillation bound $\|e^{is\alpha}\|_{\mathrm{osc}([0,\tau])}\lesssim\min\{1,\tau|\alpha|\}$ and the product identity
$|M_\tau(fg)-M_\tau(f)M_\tau(g)|\le\|f\|_{\mathrm{osc}}\|g\|_{\mathrm{osc}}$
yield $|\boldsymbol{\eta}|\lesssim\min\{\tau|\phi_1|,\tau|\phi_2|\}$.
Writing $e^{is\phi_1}=\frac{1}{i\phi_1}\partial_se^{is\phi_1}$ and integrating by parts gives $|\boldsymbol{\eta}|\lesssim|\phi_2/\phi_1|$; by symmetry $|\boldsymbol{\eta}|\lesssim|\phi_1/\phi_2|$.
The supplementary bound follows by the same argument applied to $e^{is(\phi_1+\phi_2)}$.
\end{proof}

\subsection{One-Step Truncation Error}
\label{app:one_step}

\begin{theorem}
\label{thm:one_step_error}
Let $\gamma\in(0,1]$ and $u\in H^\gamma(\mathbb{T})$.
Under Assumptions~1 and~4, the one-step HIN-LRI truncation error satisfies
\begin{equation}\label{eq:one_step_bound}
    \bigl\|\mathcal{E}_{\mathrm{HIN}}(u)\bigr\|_{L^2}
    \;\le\;
    C\varepsilon_{\rm learn}\,\tau^{1+\gamma}\ln(1/\tau),
\end{equation}
where $\varepsilon_{\rm learn}$ is the relative learned-defect error from Assumption~4.
The constant $C>0$ depends only on $\|u\|_{H^\gamma}$ and $\gamma$.
\end{theorem}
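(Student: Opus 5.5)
The plan is to split the one-step HIN-LRI truncation error into the pure analytical defect left by the base LRI and the error of the learned correction, and then to combine the relative approximation condition of Assumption~4 with the absolute defect bound of \cref{lem:lower_bound}. Concretely, I will write the exact step as $u(t_{n+1})=\mathcal{U}_L(\tau)u+\mathcal{I}_{exact}(u,\tau)$ and the HIN-LRI step (the fixed point or final Picard iterate with correction) as $\mathcal{U}_L(\tau)u+\mathcal{I}_{LRI}(u,\tau)+C_{\theta^*}(u,\tau)$. The linear propagator parts cancel exactly, so
\[
\mathcal{E}_{\mathrm{HIN}}(u)=\mathcal{E}_{defect}(u,\tau)-C_{\theta^*}(u,\tau)=\mathrm{LTE}_{HIN}.
\]
If the output is the $M$-th Picard iterate rather than a single evaluation at $u$, I will also absorb the difference $\mathcal{I}_{LRI}(\mathbf{u}^{(M-1)},\tau)-\mathcal{I}_{LRI}(u,\tau)$ into the definition of the defect that Assumption~4 refers to. I expect this to be consistent with how Assumption~4 is phrased.

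Next, for $u$ in the compact set $\mathcal{K}$ of Assumption~1, Assumption~4 (equation \eqref{eq:uat_relative}) gives
\[
\|\mathcal{E}_{\mathrm{HIN}}(u)\|_{L^2}\le\varepsilon_{\rm learn}\|\mathcal{E}_{defect}(u,\tau)\|_{L^2}.
\]
I then bound $\|\mathcal{E}_{defect}(u,\tau)\|_{L^2}$ using the Fourier representation in \cref{lem:lower_bound}. The factor $\tau$ in front, together with the kernel estimate of \cref{lem:averaging}, gives $|\boldsymbol{\eta}|\lesssim\min\{1,\tau|\phi_j|\}\lesssim(\tau|\phi_j|)^{\gamma}$ by interpolation. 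Since $|\phi_j|\sim|k k_1k_2|$-type products, the power $(\tau|\phi|)^\gamma$ can be distributed as $\langle k_j\rangle^\gamma$ weights onto the factors $\hat u_{k_j}$, with the $1/k$ multiplier absorbing one derivative. Feeding this into the logarithmically growing trilinear estimate then yields $C\tau^{1+\gamma}\ln(1/\tau)\|u\|_{H^\gamma}^3$, with $C$ depending only on $\gamma$. Combining the two displays gives \eqref{eq:one_step_bound} with constant $C\|u\|_{H^\gamma}^3$, which depends only on $\|u\|_{H^\gamma}$ and $\gamma$, as claimed.

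The main obstacle is the interpolation step that converts $\min\{1,\tau|\phi_j|\}$ into a $\tau^\gamma$ gain while keeping only $H^\gamma$ norms. One must check that the distributed derivatives $|k k_1 k_2|^\gamma$, after the $1/(18ik)$ factor, are controlled by $\langle k_1\rangle^\gamma\langle k_2\rangle^\gamma\langle k_3\rangle^\gamma$, and that the residual frequency sum is exactly the harmonic sum $\sum_{0<|k|\le\tau^{-1}}|k|^{-1}$. The high-frequency part $|k|>\tau^{-1}$ needs separate treatment, using the bound $|\boldsymbol{\eta}|\lesssim(\tau|\phi_1+\phi_2|)^{-1}$ from \cref{lem:averaging}. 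For this step I would rely directly on the estimate already quoted in \cref{lem:lower_bound} from \citet[Lem.~3.1]{LiWu2025UnfilteredKdV} rather than reprove it. Once it is granted, the rest is simply the multiplication already used in the proof of \cref{thm:neutralization}.
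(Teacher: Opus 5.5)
Your argument is correct and is essentially the paper's. You identify $\mathcal{E}_{\mathrm{HIN}}(u)=\mathcal{E}_{defect}(u,\tau)-C_{\theta}(u,\tau)$ and combine the learned-defect condition with the defect bound $D(u,\tau)=C\tau^{1+\gamma}\ln(1/\tau)\|u\|_{H^\gamma}^3$ of \cref{lem:lower_bound}; since Assumption~4 in \cref{app:assumptions} is already stated in the absolute form $\varepsilon_{\rm learn}D(u,\tau)$, the bound is in fact immediate and your interpolation sketch is not needed. One caution: drop the hedge about absorbing $\mathcal{I}_{LRI}(\mathbf{u}^{(M-1)},\tau)-\mathcal{I}_{LRI}(u,\tau)$ into the defect, because the paper simply defines $\mathcal{E}_{\mathrm{HIN}}$ as $\mathcal{E}_{defect}-C_\theta$, and an enlarged defect would no longer be the quantity controlled by \cref{lem:lower_bound} and \eqref{eq:uat_relative}.
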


\begin{proof}
By Assumption~4, the trained parameters $\theta_\delta$ satisfy $\sup_{u\in\mathcal{K}}\|C_{\theta_\delta}(u,\tau)-\mathcal{E}_{defect}(u,\tau)\|_{L^2}\le\varepsilon_{\rm learn}\cdot D(u,\tau)$
where $D(u,\tau)=C\tau^{1+\gamma}\ln(1/\tau)\|u\|_{H^\gamma}^3$ is the defect magnitude from \cref{lem:lower_bound}.
Since $\mathcal{E}_{\mathrm{HIN}}(u)=\mathcal{E}_{defect}(u,\tau)-C_{\theta_\delta}(u,\tau)$ and $\|\mathcal{E}_{defect}(u)\|_{L^2}\le D(u,\tau)$, the bound \eqref{eq:one_step_bound} follows.
\end{proof}

\subsection{Stability and Correction-Level CFL Bound}
\label{app:stability}

\begin{theorem}
\label{thm:stability_cfl}
Under Assumptions~2 and~3, define
\[
C_{{\rm proj},s}
:=
\|\mathcal{S}_{\lambda}^{-1}\|_{H^s\to H^s}
\|\mathcal{P}\|_{\ell^2_K\to H^s}
\|\mathcal{R}\|_{H^s\to \ell^2_K}
\|\mathcal{S}_{\lambda}\|_{H^s\to H^s}.
\]
Then the actual HIN-LRI neural correction $C_\theta=\tau H_{\mathrm{neural}}$ satisfies
\begin{equation}\label{eq:lip_bound}
    \mathrm{Lip}_{H^s}(C_\theta)\;\le\;\tau\,C_{{\rm proj},s}L_{\theta,K},\qquad L_{\theta,K}\;:=\;\prod_{l=1}^L\|\mathbf{W}^{(l)}\|_2,
\end{equation}
where $C_{{\rm proj},s}$ collects the scaling, restriction, and prolongation norms in $H^s$.
Consequently, if the base LRI map satisfies $\mathrm{Lip}(\mathcal{H}_{phys})\le 1+C_0\tau$, the full one-step map $\mathcal{S}=\mathcal{H}_{phys}+C_\theta$ satisfies $\mathrm{Lip}(\mathcal{S})\le 1+(C_0+C_{{\rm proj},s}L_{\theta,K})\tau$, and the learned correction is controlled for all $\tau$ satisfying
\begin{equation}\label{eq:cfl_hinlri}
    \tau\,C_{{\rm proj},s}L_{\theta,K}<1.
\end{equation}
\end{theorem}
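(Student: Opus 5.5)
The bound is a composition estimate, so the plan is to track Lipschitz constants through each factor of
\[
C_\theta(u,\tau)=\tau\,\mathcal{S}_\lambda^{-1}\mathcal{P}\,\mathcal{G}_\theta\bigl(\mathcal{R}\mathcal{S}_\lambda u\bigr).
\]

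\textbf{Step 1 (latent network).} Write $\mathcal{G}_\theta=\mathbf{W}^{(L)}\circ\sigma\circ\cdots\circ\sigma\circ\mathbf{W}^{(1)}$, with biases allowed. Assumption~3 gives $1$-Lipschitz activations, which is the standing convention behind the definition of $L_{\theta,K}$. Biases cancel in differences. Iterating the bound layer by layer gives $\|\mathcal{G}_\theta(z_1)-\mathcal{G}_\theta(z_2)\|_2\le\prod_l\|\mathbf{W}^{(l)}\|_2\,\|z_1-z_2\|_2$, exactly as displayed in \cref{thm:cfl_shattering}. Since $\mathbb{C}^K\cong\mathbb{R}^{2K}$ with the same Euclidean norm, complex inputs are covered.

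\textbf{Step 2 (linear maps and the scaling).} Assumption~2 makes $\mathcal{R}=\boldsymbol{\Phi}^*$ and $\mathcal{P}=\boldsymbol{\Phi}$ bounded between $H^s$ and $\ell^2_K$. The point needing care is $\mathcal{S}_\lambda$, because $\lambda=\lambda_n=\mathcal{E}_{scale}(u^n;\phi)$ depends on the state.
\begin{itemize}
\item I would treat $\lambda$ as frozen over a step, determined by $u^n$ and not by the argument being varied. This matches \cref{alg:hins_lr_online}, where the correction is a function of $\mathbf{r}^{(m)}$ with $\lambda_n$ fixed.
\item With $\lambda$ frozen, $\mathcal{S}_\lambda$ and $\mathcal{S}_\lambda^{-1}$ are fixed bounded linear maps on $H^s$ with the operator norms that appear in $C_{{\rm proj},s}$.
\end{itemize}
If $\lambda$ is instead allowed to vary with $u$, an extra term involving $\mathrm{Lip}(\mathcal{E}_{scale})$ appears. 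I would state that this term is absorbed into the constants of Assumption~2, consistent with the paper's remark that the norms of $\mathcal{S}_\lambda^{\pm1}$ are monitored empirically. This is the main obstacle: the statement is only clean under this freezing convention.

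\textbf{Step 3 (chaining).} For $u_1,u_2$, apply the Lipschitz and operator bounds successively:
\begin{itemize}
\item $\|\mathcal{R}\mathcal{S}_\lambda(u_1-u_2)\|_{\ell^2}\le\|\mathcal{R}\|\,\|\mathcal{S}_\lambda\|\,\|u_1-u_2\|_{H^s}$;
\item then Step~1 for $\mathcal{G}_\theta$;
\item then $\|\mathcal{P}\|$ and $\|\mathcal{S}_\lambda^{-1}\|$;
\item finally multiply by $\tau$.
\end{itemize}
This yields \eqref{eq:lip_bound}.

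\textbf{Step 4 (one-step map).} Use the triangle inequality:
\[
\mathrm{Lip}(\mathcal{H}_{phys}+C_\theta)\le\mathrm{Lip}(\mathcal{H}_{phys})+\mathrm{Lip}(C_\theta)\le 1+C_0\tau+\tau C_{{\rm proj},s}L_{\theta,K}.
\]
The linear part $\mathcal{U}_L(\tau)$ is unitary on $H^s$ and is already included in the assumed base constant $1+C_0\tau$. Condition \eqref{eq:cfl_hinlri} is then simply the statement that the correction alone is a strict contraction increment. Iterating over $n\le T/\tau$ steps gives growth at most $e^{(C_0+C_{{\rm proj},s}L_{\theta,K})T}$. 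This is the bounded Gronwall factor mentioned in the remark preceding the experiments, and no dependence on $N_x$ enters except through $C_{{\rm proj},s}$.
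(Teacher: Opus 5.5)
Your proposal is correct and is the paper's argument made explicit: operator-norm sub-multiplicativity through $\mathcal{S}_\lambda$, $\mathcal{R}$, $\mathcal{G}_\theta$, $\mathcal{P}$, $\mathcal{S}_\lambda^{-1}$, with $\lambda$ held fixed as the statement's notation implicitly does, gives \eqref{eq:lip_bound}. The triangle inequality with the assumed base bound $1+C_0\tau$ then gives the one-step estimate and the Gronwall factor $e^{(C_0+C_{{\rm proj},s}L_{\theta,K})T}$.

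Some bookkeeping points. You have swapped the assumption labels: network regularity is Assumption~2, and projection/scaling is Assumption~3. Assumption~2 only asserts Lipschitz activations, so your $1$-Lipschitz convention is an added normalization that $L_{\theta,K}=\prod_l\|\mathbf{W}^{(l)}\|_2$ needs; otherwise a factor $L_\sigma^{L-1}$ appears. Finally, your fallback of absorbing a state-dependent $\lambda$ into Assumption~2 would not preserve the stated constant, so it is best dropped.
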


\begin{proof}
Operator-norm sub-multiplicativity gives the bound in \eqref{eq:lip_bound}.
The factor $C_{{\rm proj},s}$ is kept explicit.
Combined with $\mathrm{Lip}(\mathcal{H}_{phys})\le 1+C_0\tau$ (\citealt[Prop.~8.1]{LiWu2025UnfilteredKdV}), the full map satisfies the stated Lipschitz estimate.
The Gronwall factor is bounded by $e^{(C_0+C_{{\rm proj},s}L_{\theta,K})T}$.
\end{proof}

\begin{remark}\label{rem:cfl_control}
Spectral normalization enforces $L_{\theta,K}\le W_{\max}^L$ during training.
The correction-level bound is independent of $N_x$ only if $C_{{\rm proj},s}$ remains bounded with respect to $N_x$.
\end{remark}

\subsection{Sobolev Regularity of the Neural Correction}
\label{app:sobolev}

\begin{theorem}[Lipschitz boundedness of the neural correction]
\label{thm:sobolev_reg}
Under Assumptions~2 and~3, and allowing for a possible network offset $b_\theta:=\|G_\theta(0)\|$, the neural correction is Lipschitz bounded on $H^s(\mathbb{T})$ for every $s\ge 0$:
\begin{equation}\label{eq:sobolev_bound}
    \bigl\|H_{\mathrm{neural}}(u;\theta)\bigr\|_{H^s}\;\le\;C_{{\rm proj},s}\bigl(L_{\theta,K}\|u\|_{H^s}+b_\theta\bigr).
\end{equation}
No regularity beyond $u\in H^s$ is required; in particular, no spatial derivatives are released.
\end{theorem}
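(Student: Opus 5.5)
The plan is to write $H_{\mathrm{neural}}$ as a composition of maps, each with a known Lipschitz or operator-norm bound, and then combine the bounds. By the definition in the latent neural corrector subsection (and the displayed form in \cref{thm:cfl_shattering}),
\[
H_{\mathrm{neural}}(u;\theta)=\mathcal{S}_{\lambda}^{-1}\,\mathcal{P}\,\mathcal{G}_\theta\bigl(\mathcal{R}\mathcal{S}_{\lambda}u\bigr).
\]
Three layers are linear and bounded by Assumptions~2 and~3: the scaling $\mathcal{S}_\lambda$ on $H^s$, the restriction $\mathcal{R}=\boldsymbol{\Phi}^*:H^s\to\ell^2_K$, and the prolongation $\mathcal{P}=\boldsymbol{\Phi}:\ell^2_K\to H^s$. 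The only nonlinear factor is the finite-dimensional map $\mathcal{G}_\theta$. I will treat $\lambda=\lambda_n$ as frozen within a step, so that $\mathcal{S}_\lambda$ is a fixed linear operator. Otherwise its dependence on $u$ would enter the Lipschitz bound, and I would have to absorb it into $C_{{\rm proj},s}$ as an assumption.

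\emph{Step 1 (latent network).} For an $L$-layer network $z\mapsto \mathbf{W}^{(L)}\sigma(\cdots\sigma(\mathbf{W}^{(1)}z+b_1)\cdots)+b_L$ with $1$-Lipschitz activations (GELU is only approximately $1$-Lipschitz; I assume Assumption~3 covers this), induction on layers gives $\|\mathcal{G}_\theta(z_1)-\mathcal{G}_\theta(z_2)\|_2\le L_{\theta,K}\|z_1-z_2\|_2$. This is the bound already stated in \cref{thm:cfl_shattering}. Taking $z_2=0$ and using the triangle inequality gives the affine growth bound
\[
\|\mathcal{G}_\theta(z)\|_2\le L_{\theta,K}\|z\|_2+b_\theta,\qquad b_\theta=\|\mathcal{G}_\theta(0)\|_2 .
\]

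\emph{Step 2 (composition).} Apply the operator-norm chain. First, $\|\mathcal{R}\mathcal{S}_\lambda u\|_{\ell^2_K}\le\|\mathcal{R}\|\,\|\mathcal{S}_\lambda\|\,\|u\|_{H^s}$. Then
\[
\|H_{\mathrm{neural}}(u)\|_{H^s}\le\|\mathcal{S}_\lambda^{-1}\|\,\|\mathcal{P}\|\bigl(L_{\theta,K}\|\mathcal{R}\|\,\|\mathcal{S}_\lambda\|\,\|u\|_{H^s}+b_\theta\bigr).
\]
The coefficient of $L_{\theta,K}\|u\|_{H^s}$ is exactly $C_{{\rm proj},s}$. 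The coefficient of $b_\theta$ is only $\|\mathcal{S}_\lambda^{-1}\|\,\|\mathcal{P}\|$, which is at most $C_{{\rm proj},s}$ provided $\|\mathcal{R}\|\,\|\mathcal{S}_\lambda\|\ge1$. If that product could be below $1$, I would bound this coefficient by $\max\{C_{{\rm proj},s},\|\mathcal{S}_\lambda^{-1}\|\|\mathcal{P}\|\}$, or equivalently rescale $b_\theta$. Running the same chain on differences gives $\mathrm{Lip}_{H^s}(H_{\mathrm{neural}})\le C_{{\rm proj},s}L_{\theta,K}$, which is the Lipschitz part of the claim.

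\emph{Step 3 (no derivative release).} Every factor in the chain is a bounded operator between $H^s$ and $\ell^2_K$, or a map on $\mathbb{C}^K$. No Fourier multiplier of the form $|k|^\alpha$ appears anywhere, in contrast with \cref{lem:derivative_loss}. Hence only $u\in H^s$ is needed.

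The main obstacle is justifying that $\|\mathcal{P}\|_{\ell^2_K\to H^s}$, $\|\mathcal{R}\|_{H^s\to\ell^2_K}$ and the scaling norms are finite for every $s\ge0$. For $\mathcal{R}$, boundedness on $H^s$ with $s\ge0$ follows from boundedness on $L^2$, since $\|\cdot\|_{L^2}\le\|\cdot\|_{H^s}$; and $\boldsymbol{\Phi}^*$ is bounded on $L^2$ because $\boldsymbol{\Phi}$ has orthonormal columns. For $\mathcal{P}$, finiteness needs the trunk bases $\phi_j$ to lie in $H^s$, with $\|\mathcal{P}\|\le(\sum_j\|\phi_j\|_{H^s}^2)^{1/2}$. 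On the discrete grid this is automatic, but the bound may depend on $N_x$ unless the bases are band-limited to modes of size $O(K)$. I would rely on Assumption~2 for this and not try to prove it.
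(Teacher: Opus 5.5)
Your proposal is correct and uses the same argument as the paper's proof: operator-norm sub-multiplicativity for $\mathcal{S}_\lambda^{\pm1}$, $\mathcal{P}$, $\mathcal{R}$, combined with $\mathrm{Lip}(\mathcal{G}_\theta)\le L_{\theta,K}$ and the triangle inequality at $z=0$, which the paper compresses into three lines. Your caveat on the offset is a real point the paper skips: the chain only gives the coefficient $\|\mathcal{S}_\lambda^{-1}\|\,\|\mathcal{P}\|$ in front of $b_\theta$, and the case $u=0$, $s=0$, $\mathcal{S}_\lambda=\lambda^{-1}I$ with $\lambda>1$ shows the stated constant $C_{{\rm proj},s}$ can fail when $b_\theta>0$, so your $\max$-constant version is the bound actually proved (one minor slip: you have swapped the labels, since activation and weight bounds are Assumption~2 and projection and scaling bounds are Assumption~3).
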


\begin{proof}
By Assumption~3, the projection and scaling operators are bounded by $C_{{\rm proj},s}$ in $H^s$.
By Assumption~2, $\mathrm{Lip}(\mathcal{G}_\theta)\le L_{\theta,K}$.
The Lipschitz bound gives
$\|H_{\mathrm{neural}}(u)\|_{H^s}\le C_{{\rm proj},s}(L_{\theta,K}\|u\|_{H^s}+b_\theta).$
Since $L_{\theta,K}$ is a product of matrix spectral norms and involves no power of $|k|$, this is a zeroth-order operator bound with no derivative loss.
\end{proof}

\begin{remark}\label{rem:sinc_empirical}
By the algorithm definition (\cref{alg:hins_lr_online}), the neural correction enters the update as $C_\theta=\tau H_{\mathrm{neural}}$, so $\|C_\theta(u)\|_{H^s}\le\tau C_{{\rm proj},s}(L_{\theta,K}\|u\|_{H^s}+b_\theta)$.
This $\mathcal{O}(\tau)$ scaling is consistent with the $\mathcal{O}(\tau)$-sized defect established in \cref{lem:lower_bound} and ensures the Gronwall factor in \cref{thm:hinlri_convergence} is bounded uniformly in $\tau$ under the stated assumptions.
\end{remark}

\section{Structural Assumptions and Main Convergence Theorem}
\label{app:assumptions}

The following assumptions are used throughout \cref{subsec:theoretical_analysis} and \cref{app:analytical_foundations}.
They should be read as conditions under which the stability and error-propagation results hold.
The implementation in \cref{app:reproducibility} provides empirical diagnostics for these conditions, but does not prove them for all discretizations or data distributions.

\begin{enumerate}
    \item \textbf{Compact data manifold.}
    The initial-data distribution $\mu_0$ is supported on a compact set $\mathcal{K}\subset H^\gamma(\mathbb{T})$ for some $\gamma\in(0,1]$.
    This ensures that the relative learned-defect error is finite on $\mathcal{K}$.

    \item \textbf{Neural operator regularity.}
    The latent neural operator $\mathcal{G}_\theta$ employs Lipschitz-continuous activations and bounded weight matrices $\|\mathbf{W}^{(l)}\|_2\le W_{\max}$ for all layers $l=1,\dots,L$.
    The Lipschitz constant $L_{\theta,K}=\prod_{l=1}^L\|\mathbf{W}^{(l)}\|_2$ is finite.

    \item \textbf{Bounded projection and scaling.}
    The trunk basis $\boldsymbol{\Phi}\in\mathbb{C}^{N_x\times K}$ satisfies $\boldsymbol{\Phi}^*\boldsymbol{\Phi}=I_K$ in the discrete $\ell^2$ norm, with $\mathbf{R}=\boldsymbol{\Phi}^*$ and $\mathbf{P}=\boldsymbol{\Phi}$.
    In Sobolev norms we keep the combined constant $C_{{\rm proj},s}$ explicit.
    The theory requires this constant to remain bounded on the tested resolution range.

    \item \textbf{SITL training quality.}
    The SITL optimization produces parameters $\theta_\delta$ satisfying, for all $u\in\mathcal{K}$,
    \begin{equation*}
        \bigl\|C_{\theta_\delta}(u,\tau) - \mathcal{E}_{defect}(u,\tau)\bigr\|_{L^2}
        \;\le\;
        \varepsilon_{\rm learn} \cdot D(u,\tau),
    \end{equation*}
    where $D(u,\tau) := C\tau^{1+\gamma}\ln(1/\tau)\|u\|_{H^\gamma}^3$ is the defect magnitude from \cref{lem:lower_bound}. The factor $\tau$ is included in $C_{\theta_\delta}$ by definition.
    We decompose $\varepsilon_{\rm learn}=\varepsilon_K+\varepsilon_{\rm NN}+\varepsilon_{\rm opt}+\varepsilon_{\rm gen}$.
    These terms correspond to projection, approximation, optimization, and generalization error.
    The held-out diagnostics in \cref{app:reproducibility} estimate their combined effect.

    \item \textbf{Base LRI consistency.}
    The base LRI propagator $\mathcal{I}_{LRI}$ converges at rate $\mathcal{O}(\tau^\gamma)$ in $L^2$ for data in $H^\gamma(\mathbb{T})$.
\end{enumerate}

Assumptions~1 and~4 are standard in the operator-learning literature.
Assumptions~2 and~3 are structural and enforced by construction.
Assumption~2 in particular implies that the spectral-norm bound $L_{\theta,K}$ can be controlled via spectral normalization of the weight matrices during training (\cref{app:reproducibility}).
Specifically, each weight matrix $\mathbf{W}^{(l)}$ is constrained by a spectral normalization layer that enforces $\|\mathbf{W}^{(l)}\|_2\le W_{\max}$ at every gradient step, rather than relying solely on soft weight-decay regularisation.
This enforcement is the mechanism used to control the correction-level Lipschitz bound in Theorem~\ref{thm:stability_cfl}.

\subsection{Main Convergence Theorem}
\label{app:convergence}

\begin{theorem}\label{thm:hinlri_convergence}
Let $\gamma\in(0,1]$ and $u\in C([0,T];H^\gamma(\mathbb{T}))$ with $\int_\mathbb{T} u_0\,dx=0$.
Under Assumptions~1--5 and the stability condition $\tau C_{{\rm proj},s}L_{\theta,K}<1$ \textnormal{(\cref{thm:stability_cfl})}, there exist constants $\tau_0>0$ and $C>0$ depending only on $\|u_0\|_{H^\gamma}$, $\gamma$, and $T$, such that for all $\tau\in(0,\tau_0]$ and $n=1,\dots,L=T/\tau$:
\begin{equation}\label{eq:hinlri_error}
    \max_{1\le n\le L}\|u(t_n)-u^n_{\mathrm{HIN}}\|_{L^2}
    \;\le\;
    C\varepsilon_{\rm learn}\,\tau^\gamma\ln\!\left(\tfrac{1}{\tau}\right).
\end{equation}
This is a conditional propagation bound.
If the learned correction attains the relative defect error $\varepsilon_{\rm learn}$ in Assumption~4, then the global error is scaled by that factor.
\end{theorem}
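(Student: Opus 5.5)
The argument is the standard Lady Windermere's fan: a telescoping error recursion, closed with the one-step and stability results already established. Let $\mathcal{S}=\mathcal{H}_{phys}+C_{\theta}$ denote the HIN-LRI one-step map from \cref{thm:stability_cfl}. For simplicity, collapse the inner Picard loop of \cref{alg:hins_lr_online} into this single effective map. Its Lipschitz constant is then controlled by iterating the same bound $M$ times, which only changes $C_0$ by a factor depending on $M$.

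\textbf{Step 1: error recursion.} Write $e^n:=u(t_n)-u^n_{\mathrm{HIN}}$. Since $u(t_{n+1})=\mathcal{U}_L(\tau)u(t_n)+\mathcal{I}_{LRI}(u(t_n),\tau)+\mathcal{E}_{defect}(u(t_n),\tau)$, we have
\[
u(t_{n+1})=\mathcal{S}(u(t_n))+\mathcal{E}_{\mathrm{HIN}}(u(t_n)),
\]
with $\mathcal{E}_{\mathrm{HIN}}=\mathcal{E}_{defect}-C_{\theta}$. Therefore
\[
e^{n+1}=\bigl[\mathcal{S}(u(t_n))-\mathcal{S}(u^n_{\mathrm{HIN}})\bigr]+\mathcal{E}_{\mathrm{HIN}}(u(t_n)).
\]

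\textbf{Step 2: local error.} Assumption~1 gives compactness of $\mathcal{K}$. Together with a priori boundedness of the exact flow in $H^\gamma$ (mass conservation plus well-posedness), this keeps the trajectory $\{u(t_n)\}$ inside a set on which Assumption~4 applies, which I take to be $\mathcal{K}$ or its forward orbit. \Cref{thm:one_step_error} then yields
\[
\|\mathcal{E}_{\mathrm{HIN}}(u(t_n))\|_{L^2}\le C\varepsilon_{\rm learn}\tau^{1+\gamma}\ln(1/\tau),
\]
with $C$ depending on $\sup_t\|u(t)\|_{H^\gamma}$. This depends in turn only on $\|u_0\|_{H^\gamma}$ and $T$; the mean-zero condition makes $\partial_x^{-1}$ and $\mathbb{P}$ harmless.

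\textbf{Step 3: stability and Gronwall.} By \cref{thm:stability_cfl}, under $\tau C_{{\rm proj},s}L_{\theta,K}<1$ we have
\[
\|\mathcal{S}(v)-\mathcal{S}(w)\|_{L^2}\le\bigl(1+(C_0+C_{{\rm proj},0}L_{\theta,K})\tau\bigr)\|v-w\|_{L^2}.
\]
Iterating from $e^0=0$ and summing the geometric series gives
\[
\|e^n\|_{L^2}\le \sum_{j<n}e^{C_1(n-j)\tau}\,C\varepsilon_{\rm learn}\tau^{1+\gamma}\ln(1/\tau)\le \frac{e^{C_1T}-1}{C_1}\,C\varepsilon_{\rm learn}\tau^{\gamma}\ln(1/\tau),
\]
which is \eqref{eq:hinlri_error}. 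The loss of one power of $\tau$ comes from summing $n\le T/\tau$ local errors.

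\textbf{Main obstacle.} The delicate point is that the base LRI bound $\mathrm{Lip}(\mathcal{H}_{phys})\le 1+C_0\tau$ from \citealt[Prop.~8.1]{LiWu2025UnfilteredKdV} is only \emph{local}: it holds on balls $\|v\|_{H^\gamma}\le R$, or in $L^2$ with constants depending on $H^\gamma$ bounds. We must therefore show the numerical solution stays in such a ball. I would handle this with the usual bootstrap. Set $R=2\sup_t\|u(t)\|_{H^\gamma}+1$, and suppose $\|u^j_{\mathrm{HIN}}\|\le R$ for $j\le n$. Then Step 3 gives $\|e^{n+1}\|_{L^2}\lesssim\varepsilon_{\rm learn}\tau^\gamma\ln(1/\tau)$. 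To transfer this into the needed norm, either use a uniform $H^\gamma$ stability of $\mathcal{S}$ (by \cref{thm:sobolev_reg} at $s=\gamma$ plus base-LRI $H^\gamma$ boundedness), or, if stability is needed in $L^2$ only, an $L^\infty$-type control via a Bernstein/inverse inequality on the grid. Choosing $\tau_0$ small then closes the induction. A second, smaller gap is that Assumption~4 is stated on $\mathcal{K}$ while it is applied along the exact trajectory. I would state explicitly that $\mathcal{K}$ is taken invariant under the flow up to time $T$, which is consistent with Assumption~1 since the continuous image of a compact set under $[0,T]\times\mathcal{K}\to H^\gamma$ is compact.
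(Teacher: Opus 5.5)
Your proposal is essentially the paper's own proof sketch: the same error recursion for $e^n$ with the collapsed one-step map $\mathcal{S}=\mathcal{H}_{phys}+C_\theta$, the local bound of \cref{thm:one_step_error}, the Lipschitz bound $1+(C_0+C_{{\rm proj},s}L_{\theta,K})\tau$ from \cref{thm:stability_cfl}, and summation of $T/\tau$ local errors of size $\varepsilon_{\rm learn}\tau^{1+\gamma}\ln(1/\tau)$ against the Gronwall factor $e^{(C_0+C_{{\rm proj},s}L_{\theta,K})T}$. The paper's sketch does not take up the points in your last paragraph: it treats the cited base-LRI Lipschitz bound as global and applies Assumption~4 at $u(t_n)$ without checking $u(t_n)\in\mathcal{K}$, so your remarks are extra care rather than a different route; note, however, that in both arguments $C$ silently absorbs $C_0$, $C_{{\rm proj},s}$ and $L_{\theta,K}$, and that closing your bootstrap with an inverse inequality would make $\tau_0$ depend on $N_x$, neither of which is allowed by the statement's claim that the constants depend only on $\|u_0\|_{H^\gamma}$, $\gamma$ and $T$.
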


\begin{proof}[Proof sketch]
Denote $e^n:=u(t_n)-u^n_{\mathrm{HIN}}$.

The base LRI operator $\mathcal{H}_{phys}$ is Lipschitz-stable on $H^\gamma$ for $\tau\le\tau_0$ by Assumption~5; see \citealt[Prop.~8.1]{LiWu2025UnfilteredKdV}, which gives $\|\mathcal{H}_{phys}(u)-\mathcal{H}_{phys}(v)\|_{L^2}\le(1+C_0\tau)\|u-v\|_{L^2}$.
By \cref{thm:stability_cfl}, the learned correction satisfies $\mathrm{Lip}_{H^s}(C_\theta)\le \tau C_{{\rm proj},s}L_{\theta,K}$.
For the full map $\mathcal{S}=\mathcal{H}_{phys}+C_\theta$:
\[
  \|\mathcal{S}(u)-\mathcal{S}(v)\|_{L^2}\le \bigl(1+(C_0+C_{{\rm proj},s}L_{\theta,K})\tau\bigr)\|u-v\|_{L^2}.
\]
The Gronwall factor is bounded by $e^{(C_0+C_{{\rm proj},s}L_{\theta,K})T}$ and is absorbed into $C$.
By \cref{thm:sobolev_reg}, the neural term requires no extra regularity beyond $H^\gamma$ (Assumption~5).
Applying \cref{thm:one_step_error} and summing over $n=0,\dots,L-1$ gives $CT\varepsilon_{\rm learn}\tau^\gamma\ln(1/\tau)$, yielding \eqref{eq:hinlri_error}.
\end{proof}

\begin{remark}
When $\varepsilon_{\rm learn}\ll 1$ on the tested distribution, bound \eqref{eq:hinlri_error} becomes a measured fraction of the corresponding ULRI-type defect bound.
The size of $\varepsilon_{\rm learn}$ remains empirical and architecture-dependent.
\end{remark}

\section{Reproducibility and Implementation Details}
\label{app:reproducibility}

\subsection{Hardware and Software}
All experiments were conducted on a single NVIDIA A100 (80\,GB) GPU with an AMD EPYC 7763 CPU (256\,GB RAM).
Training and inference use PyTorch~2.1 with CUDA~12.1 in FP64 arithmetic throughout.

\subsection{Data Generation}
Initial data are sampled as fractional Gaussian random fields on $[0,2\pi]$ with $N$ Fourier modes.
Specifically, $\hat{u}_0(k)=|k|^{-(\gamma+1/2)}\xi_k$ where $\xi_k\sim\mathcal{CN}(0,1)$ are i.i.d.\ standard complex Gaussians, yielding $u_0\in H^{\gamma-\varepsilon}$ almost surely for any $\varepsilon>0$.
Reference solutions are computed with $\tau_{\rm ref}=2^{-20}$ using a fully implicit energy-preserving LRI verified against an independent high-order Runge--Kutta solver.

\subsection{Training Protocol}
The training set consists of $N_{\rm train}=500$ initial conditions per equation, with $N_{\rm val}=100$ held out for validation.
The default split uses seeds 2026--2030 for training and 3026--3030 for validation.
The test split uses seeds 4026--4030.
Main tables report one trained model unless mean$\pm$std is explicitly shown.
The five-seed protocol is the planned reporting unit for a final reproducibility package.
The unroll length is $N_t=16$ steps per training sample.
We use AdamW with initial learning rate $\eta=10^{-3}$, weight decay $10^{-4}$, and cosine annealing over $250$ epochs with batch size $B=8$.
Total offline training time is approximately $140$ minutes on the hardware above (corresponding to $\delta\approx 10^{-4}$ in \cref{thm:hinlri_convergence}).

\subsection{Neural Architecture}
The latent neural operator $\mathcal{G}_\theta$ consists of a branch encoder (3-layer MLP, hidden dimension 128, GELU activations), a Fourier mixing layer on the $K=32$ latent modes, and a decoder MLP of matching architecture.
The scaling network $\mathcal{E}_{scale}$ is a 3-layer MLP (hidden dimensions $64\to 32\to 1$) with GELU activations and layer normalization, outputting a positive scalar via a softplus final activation.
Total trainable parameters: $\approx 1.2\times 10^5$.
Each weight matrix is constrained by a spectral normalization layer that enforces $\|\mathbf{W}^{(l)}\|_2\le W_{\max}$ at every gradient step. The observed values are $W_{\max}\approx 2.1$ and $L_{\theta,K}\approx 18$ across all trained models, confirming the CFL relaxation condition $\tau L_{\theta,K}\ll N^3$ by a factor of $\sim 10^5$ at $N=1024$.

\subsection{Empirical Verification of Assumptions}
To make $\varepsilon_{\rm learn}$ and $L_{\theta,K}$ empirically inspectable, we report the following diagnostics on the held-out validation set ($N_{\rm val}=100$).
\emph{Defect approximation error.}
We evaluate $\|C_{\theta_\delta}(u,\tau)-\mathcal{E}_{defect}(u,\tau)\|_{L^2}/D(u,\tau)$ on each validation sample. The defect $\mathcal{E}_{defect}$ is computed as the difference between a high-accuracy reference step ($\tau_{\rm ref}=2^{-20}$) and the base LRI step.
The median relative ratio is $0.032$ for KdV, $0.041$ for cubic NLS, and $0.038$ for quadratic NLS.
These values estimate the combined defect ratio $\varepsilon_{\rm learn}$.
The training loss saturates at $\approx 3\times 10^{-4}$ and the validation loss at $\approx 4\times 10^{-4}$.
\emph{Latent dimension sensitivity.}
We train HIN-LRI with $K\in\{8, 16, 32, 64, 128\}$ on the KdV benchmark ($\gamma=0.5$, $\tau=2^{-8}$).
The validation defect ratio decreases from $0.12$ ($K=8$) to $0.032$ ($K=32$).
Returns are small beyond $K=32$ ($0.029$ at $K=64$, $0.028$ at $K=128$).
This supports the latent-defect approximation claim on the tested data.
\emph{Lipschitz proxy.}
The enforced spectral-norm product $L_{\theta,K}=\prod_l\|\mathbf{W}^{(l)}\|_2$ is logged at every epoch. It stabilizes at $\approx 18$ after epoch~50 and remains within $[17.5, 18.5]$ throughout training for all three equations, confirming Assumption~2 and the CFL condition $\tau L_{\theta,K}<1$ for $\tau\le 0.05$.

\vskip 0.2in
\bibliography{ref}

\end{document}